\newtheorem{assumption}{Assumption}
\newtheorem*{assumption*}{Assumption}
\renewcommand{\tilde}{\widetilde}
\renewcommand{\bar}{\overline}
\def\E{\mathbb{E}}
\def\P{\mathbb{P}}
\def\R{\mathbb{R}}
\def\N{\mathbb{N}}
\def\calA{\mathcal{A}}
\def\calC{\mathcal{C}}
\def\calF{\mathcal{F}}
\def\calL{\mathcal{L}}
\def\calN{\mathcal{N}}
\def\calR{\mathcal{R}}
\def\calU{\mathcal{U}}
\def\calX{\mathcal{X}}
\def\calY{\mathcal{Y}}
\newcommand{\define}{\stackrel{\scriptscriptstyle \mathrm{def}}{=}}
\newcommand{\sign}{\mathrm{sign}}
\newcommand{\ind}[1]{\mathbbm{1}_{\left\{#1\right\}}}
\renewcommand{\d}{\mathrm{d}}
\newcommand{\dom}{\boldsymbol{\mathrm{dom}}}
\renewcommand{\epsilon}{\varepsilon}
\def\calFlin{\calF_{\mathrm{lin}}}
\def\calFall{\calF_{\mathrm{all}}}
\newcommand{\header}[1]{\noindent \textbf{#1:}}
\newcommand{\epi}{\mathop{\mathrm{epi}}}
\newcommand{\conv}{\mathop{\mathrm{co}}}
\newcommand{\closconv}{\mathop{\overline{\mathrm{co}}}}
\def\xnorm{\|x\|_2}
\def\tcalX{\tilde\calX}
\title[Adversarially Robust Classification Calibration]{Corrigendum to: \\Calibrated Surrogate Losses for Adversarially Robust Classification}
\thanks{This work was performed while the first author was a visitor at University of Michigan.} \Email{tsutsumi@ms.k.u-tokyo.ac.jp}\\
\begin{document}

\maketitle

\definecolor{cream}{rgb}{1.0, 0.99, 0.82}

\begin{mdframed}[
  innerrightmargin=20pt,
  innerbottommargin=10pt,
  innerleftmargin=20pt,
  backgroundcolor=cream,
  frametitle={Author's note},
  frametitleaboveskip=10pt,
]
  \small
  This document is a corrigendum to \citet{Bao:2020b}, which used the wrong definition of calibration from \citet{Steinwart:2007}. This corrigendum uses the correct definition.
  Fortunately, all of the main results from the published version still hold, with occasional minor changes.
  This corrigendum also removes an erroneous statement from the published version, connecting calibration to consistency. In addition, our simulation results are updated and corrected based on the correct definition of calibration, and we have also added a new positive result for convex losses under Massart’s noise condition.

  Because the error affected several aspects of the original paper, this corrigendum is a complete rewrite of the published version. We thank the authors of \citet{Awasthi:2021, Awasthi:2021b} for calling our attention to issues with our original simulations, which lead to this revision, and point the interested reader to these papers for several additional insights on calibration in the context of adversarially robust learning.
\end{mdframed}

\vspace{20pt}

\begin{abstract}
  Adversarially robust classification seeks a classifier
  that is insensitive to adversarial perturbations of test patterns.
  This problem is often formulated via a minimax objective,
  where the target loss is the worst-case value of the 0-1 loss
  subject to a bound on the size of perturbation.
  Recent work has proposed convex surrogates for the adversarial 0-1 loss,
  in an effort to make optimization more tractable.
  A primary question is that of consistency, that is, whether minimization of the surrogate risk implies minimization of the adversarial 0-1 risk.
  In this work, we analyze this question through the lens of calibration,
  which is a pointwise notion of consistency.
  We show that no convex surrogate loss is calibrated with respect to the adversarial 0-1 loss
  when restricted to the class of linear models.
  We further introduce a class of nonconvex losses
  and offer necessary and sufficient conditions for losses in this class to be calibrated.
  We also show that if the underlying distribution satisfies Massart's noise condition,
  convex losses can also be calibrated in the adversarial setting.
\end{abstract}

\begin{keywords}
  surrogate loss,
  calibration,
  adversarial robustness
\end{keywords}

\section{Introduction}
\label{sec:introduction}

In conventional machine learning, training and testing instances are assumed to follow the same probability distribution.
In \emph{adversarially robust} machine learning,
test instances may be perturbed by an adversary before being presented to the predictor.
Recent work has shown that seemingly insignificant adversarial perturbations can lead to significant performance degradations of otherwise highly accurate classifiers~\citep{Goodfellow:2015}.
This has led to the development of a number of methods for learning predictors
with decreased sensitivity to adversarial perturbations~\citep{Xu:2009,Xu:2012,Goodfellow:2015,Cisse:2017,Wong:2018,Raghunathan:2018a,Tsuzuku:2018}.

Adversarially robust classification is typically formulated
as empirical risk minimization with an \emph{adversarial 0-1 loss},
which is the maximum of the usual 0-1 loss
over a set of possible perturbations of the test instance.
This minimax optimization problem is nonconvex, and recent work, reviewed in Section~\ref{sec:related},
has proposed several convex surrogate losses.
However, it is still unknown whether minimizing these convex surrogates leads to minimization of the adversarial 0-1 loss.

In this work, we examine the question of which surrogate losses are calibrated with respect to (wrt) the adversarial 0-1 loss. Calibration, defined precisely below, means that for each possible input $x$, minimization of the excess surrogate risk (over a specified class of decision functions) implies minimization of the excess target risk. Calibration thus ensures pointwise consistency, and this notion has been repeatedly used to prove \emph{consistency} of algorithms based on surrogate losses. Employing the calibration function perspective of~\citet{Steinwart:2007},
we show that no convex surrogate loss is calibrated wrt the adversarial 0-1 loss for general distributions
when restricted to the class of linear models (Section~\ref{sec:convex-surrogate}).
Intuitively, this is because convex losses prefer predictions close to the decision boundary on average when $\P(Y=+1|X) \approx \frac{1}{2}$,
while predictions that are too close to the decision boundary should be penalized in adversarially robust classification.
We also provide necessary and sufficient conditions for a certain class
of nonconvex losses to be calibrated wrt the adversarial 0-1 loss (Section~\ref{sec:calibrated-surrogate}).
These calibrated losses attain robustness by penalizing predictions that are too close to the decision boundary.
Finally, we show that under a certain type of low-noise condition~\citep{Massart:2006},
convex losses can be calibrated (Section~\ref{sec:low-noise}).

To our knowledge, this is the first work to formally analyze the adversarial 0-1 loss by calibration analysis.
Our analysis depends on the fact that the adversarially robust 0-1 loss equals the horizontally shifted (non-robust) 0-1 loss when restricted to linear models (Proposition~\ref{prop:margin-error-rate}).
In summary, we argue against the use of convex losses in adversarially robust classification (with linear models),
and calibrated nonconvex losses serve as good alternatives.

\begin{figure}[t]
  \scriptsize
  \centering
  \subfigure[Ramp loss ($\beta = 0.5$)][c]{
    \label{fig:twonorm-ramp}
    \includegraphics[width=0.35\columnwidth]{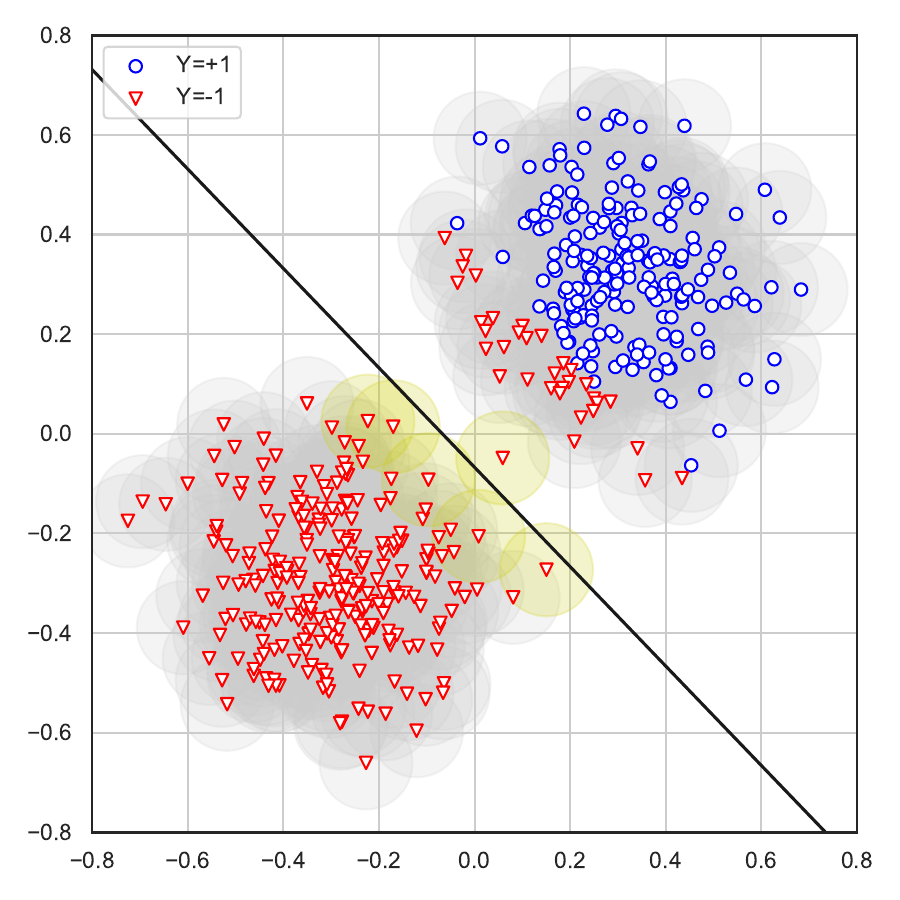}
  }
  \subfigure[Hinge loss ($\beta = 0.5$)][c]{
    \label{fig:twonorm-hinge}
    \includegraphics[width=0.35\columnwidth]{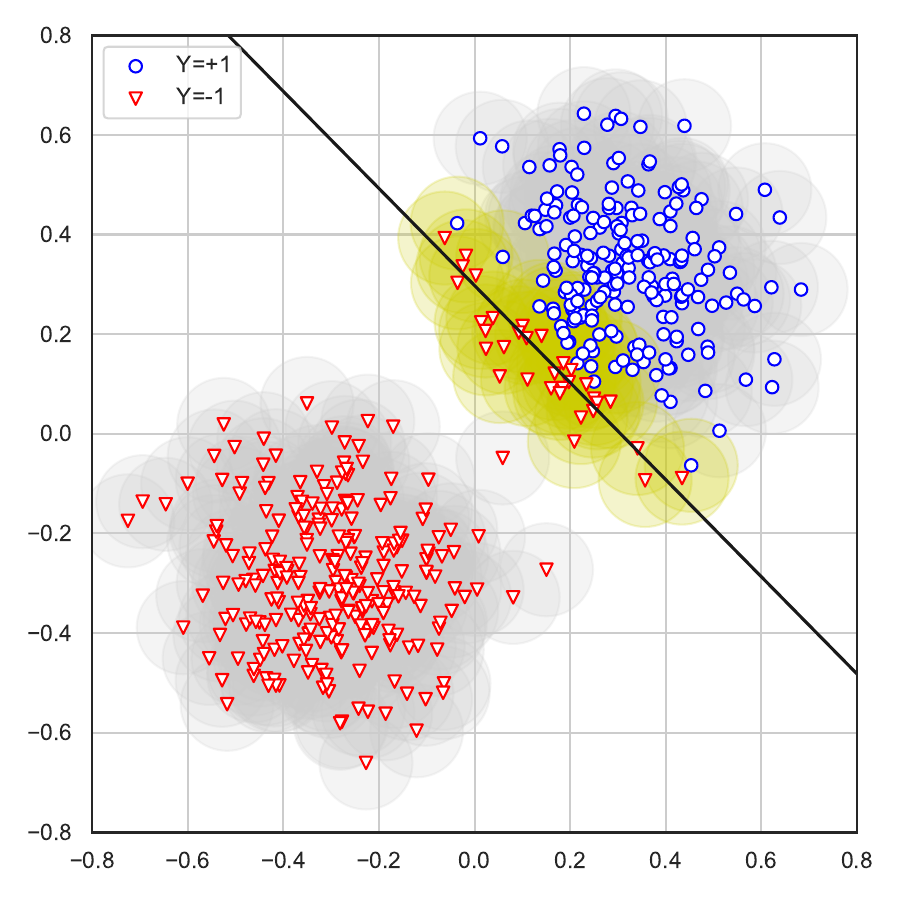}
  }
  \caption{
    The best linear classifier under each loss.
    The shift parameter $\beta$ for a surrogate loss is defined in Section~\ref{sec:examples}.
    The $\ell_2$-balls associated to each instance indicate adversarial perturbations with radii $0.1$.
    The yellow balls indicate instances vulnerable to perturbations,
    in that they are within $0.1$ of the decision boundary.
    In this example,
    $1.2$\% of instances are vulnerable under the ramp loss,
    while $10.4$\% of instances are vulnerable under the hinge loss.
  }
  \label{fig:twonorm}
\end{figure}

Our results demonstrate that adversarial robustness requires different surrogates than other notions of robustness.
For example, symmetric losses such as the sigmoid and ramp losses are robust to label noise~\citep{Ghosh:2015},
but not calibrated wrt the adversarial 0-1 loss.
Figure~\ref{fig:twonorm} illustrates the results of learning a linear classifier
with respect a \emph{shifted} ramp loss,
which is calibrated wrt the adversarial 0-1 loss,
and a shifted hinge loss, which is not (these losses are discussed in detail later).
While the hinge loss yields a classifier with smaller misclassification rate wrt the conventional 0-1 loss,
this classifier is quite sensitive to small perturbations of the test instances.
The classifier learned by the ramp loss, on the other hand,
makes fewer errors when subjected to adversarial perturbations.

The rest of this paper is organized as follows.
Section~\ref{sec:preliminary} formalizes notation and the problem.
Related work on robust learning and calibration analysis is reviewed in Section~\ref{sec:related}.
Technical details of calibration analysis are reviewed in Section~\ref{sec:calibration}.
Section~\ref{sec:convex-surrogate} describes the nonexistence of convex calibrated surrogate losses,
while Section~\ref{sec:calibrated-surrogate} presents general calibration conditions for a certain class of nonconvex losses.
Section~\ref{sec:examples} applies our theory to several convex and nonconvex losses for the calibrated nonconvex losses.
Calibration analysis under low-noise conditions is shown in Section~\ref{sec:low-noise}.
Section~\ref{sec:simulation} shows simulation results to verify that calibrated losses achieve target excess risk tending to zero under the robust 0-1 loss.
Conclusions are stated in Section~\ref{sec:conclusion}.

\begin{remark}
  Here is a summary of the main changes in this corrigendum relative to the published version~\citep{Bao:2020b}.
  \begin{enumerate}
    \item Eliminate an erroneous statement that calibration always implies consistency (Section \ref{sec:preliminary}).
    \item Use the correct definition of calibration function (Definition~\ref{def:calibrated-loss}) and the minimal $\phi$-CCR (Section~\ref{sec:preliminary}).
    \item Modify the proof of our negative result; the statement remains unchanged (Section~\ref{sec:convex-surrogate}). The proof is modified by straightforward adjustment of constants here and there in accordance with the correct definition of calibration function.
    \item Modify the statement and proof of our positive result (Section~\ref{sec:calibrated-surrogate}). The proof is modified in the same way as the negative result, and one technical assumption is also modified.
    \item Change constants in some of the examples (Section~\ref{sec:examples}).
    \item Introduce new positive results on convex losses under Massart's noise condition (Section~\ref{sec:low-noise}).
    \item Update the simulations (Section~\ref{sec:simulation}).
  \end{enumerate}
\end{remark}

\section{Notation}

Let $\|x\|_p$ for a vector $x \in \R^d$ be the $\ell_p$-norm, namely, $\|x\|_p = \sqrt[p]{\sum_{i=1}^d |x_i|^p}$.
Let $B_p(r) \define \{v \in \R^d \mid \|v\|_p \leq r\}$ be the $d$-dimensional closed $\ell_p$-ball with radius $r$,
and $B_p^\circ(r) \define \{v \in \R^d \mid \|v\|_p < r\}$ be the open $\ell_p$-ball.
The set $\{1, \dots, n\}$ is denoted by $[n]$.
The indicator function corresponding to an event $A$ is denoted by $\ind{A}$.
We define the infimum over the empty set as $+\infty$.
Denote $h \equiv c$ for a function $h: S \to \R$ and $c \in \R$ if $h(x) = c$ for all $x \in \dom(h)$,
where $\dom(h)$ denotes the domain of a function $h$,
and $h \not\equiv c$ otherwise.
For a function $h: S \to \R$, we write $h^{**}: S \to \R$ for the Fenchel-Legendre biconjugate of $h$,
characterized by $\epi(h^{**}) = \closconv\epi(h)$,
where $\closconv S$ is the closure of the convex hull of the set $S$,
and $\epi(h)$ is the epigraph of the function $h$:
$\epi(h) \define \{(x, t) \mid x \in S, h(x) \leq t\}$.
A function $h: S \to \R$ is said to be \emph{quasiconcave} if for all $x_1, x_2 \in S$ and $\lambda \in [0, 1]$,
$h(\lambda x_1 + (1 - \lambda) x_2) \geq \min\{h(x_1), h(x_2)\}$.

Let $\calX \define B_2(1)$ be the feature space,
$\calY \define \{\pm 1\}$ be the binary label space,
and $\calF \subseteq \R^\calX$ be a function class.
We consider \emph{symmetric} $\calF$, that is, $-f \in \calF$ for all $f \in \calF$.
We write $\calFall \subseteq \R^\calX$ for the space of all measurable functions.
Let $\ell: \calY \times \calX \times \calF \to \R_{\geq 0}$ be a loss function.
Then, we write $\calR_\ell(f) \define \E_{(X,Y)}[\ell(Y,X,f)]$ for the \emph{$\ell$-risk} of $f \in \calF$,
where $(X, Y) \in \calX \times \calY$ are random variables jointly distributed following the underlying distribution $\P(X, Y)$.
Subsequently, $\P(X)$ and $\P(Y|X)$ denote the $X$-marginal and the posterior distributions, respectively.
If $\ell$ can be represented by $\ell(y, x, f) = \phi(yf(x))$ with some $\phi: \R \to \R_{\geq 0}$
for any $y \in \calY$, $x \in \calX$, and $f \in \calF$,
$\phi$ is called a \emph{margin-based} loss function.
We define the \emph{$\phi$-risk} of $f \in \calF$ for a margin-based loss $\phi$ by
\begin{align}
  \calR_{\phi}(f)
  \define \E_{(X, Y)}[\phi(Yf(X))]
  = \E_X \E_{Y|X}[\phi(Yf(X))],
  \label{eq:phi-risk}
\end{align}
where $\E_X$ and $\E_{Y|X}$ mean the expectation over $\P(X)$ and $\P(Y|X)$, respectively.
We can rewrite \eqref{eq:phi-risk} as $\calR_{\phi}(f) = \E_X[\calC_\phi(f, \P(Y=+1|X), X)]$
with $\calC_\phi(f, \eta, x) \define \eta\phi(f(x)) + (1 - \eta)\phi(-f(x))$.
We call $\calC_\phi(f, \eta, x)$ the \emph{class-conditional $\phi$-risk}, or $\phi$-CCR.
The minimal $\phi$-risk (over a function class $\calF$) $\calR_{\phi,\calF}^* \define \inf_{f \in \calF} \calR_{\phi}(f)$ is called the \emph{Bayes ($\phi$, $\calF$)-risk},
and the minimal $\phi$-CCR on $\calF$ at $x$ is denoted by $\calC_{\phi,\calF}^*(\eta, x) \define \inf_{f \in \calF} \calC_{\phi}(f, \eta, x)$.
We refer to $\calR_{\phi}(f) - \calR_{\phi,\calF}^*$ as the $(\phi, \calF)$-excess risk.
We occasionally use the abbreviation $\Delta\calC_{\phi,\calF}(f, \eta, x) \define \calC_{\phi}(f, \eta, x) - \calC_{\phi,\calF}^*(\eta, x)$ to denote the excess $(\phi,\calF)$-CCR at $x$.
For non-margin-based loss function $\ell$,
we define the $\ell$-CCR $\calC_{\ell,\calF}(f, \eta, x)$, the minimal $\ell$-CCR $\calC_{\ell,\calF}^*(\eta, x)$, and $\Delta\calC_{\ell,\calF}(f, \eta, x)$ in the same manner.

\begin{remark}
  In our published version~\citep{Bao:2020b}, the minimal $\phi$-CCR is inappropriately defined as $\calC_{\phi,\calF}^*(\eta) \define \inf_{f \in \calF, x \in \calX} \calC_{\phi}(f, \eta, x)$.
  In this corrigendum, the minimal CCR is defined as $\calC_{\phi,\calF}^*(\eta, x) \define \inf_{f \in \calF} \calC_{\phi}(f, \eta, x)$,
  which depends on $x$ as well,
  leading to the correct definition of calibrated losses in Definition~\ref{def:calibrated-loss} later.
\end{remark}

\section{Surrogate Losses for Adversarial Robust Classification}
\label{sec:preliminary}

In supervised binary classification, a learner is asked to output a predictor $f: \calX \to \R$ that minimizes the classification error $\P\{Yf(X) \leq 0\}$,
where $\P$ is the unknown underlying distribution.
This can be equivalently interpreted as the minimization of the risk $\E_{(X, Y)}[\ell_{01}(Y, X, f)]$ wrt $f$,
where
\begin{align*}
  \ell_{01}(y, x, f) \define \begin{cases}
    1 & \text{if $y \ne \sign(f(x))$}, \\
    0 & \text{otherwise}
  \end{cases}
\end{align*}
is the 0-1 loss.
Here, we adopt the convention $\sign(0) \define +1$.
On the other hand, an adversarially robust learner is asked to output a predictor $f$
that minimizes the 0-1 loss while being tolerant to small perturbations to input data points.
Following existing literature~\citep{Xu:2009,Tsuzuku:2018,Bubeck:2019},
we consider $\ell_2$-ball perturbations and define the goal as the minimization of $\P\{\exists \Delta_x \in B_2(\gamma) \text{ s.t. } X + \Delta_x \in \calX \text{ and } Yf(X + \Delta_x) \leq 0\}$,
where $\Delta_x$ is a perturbation vector and $\gamma \in (0, 1)$ is a pre-defined perturbation budget.
Equivalently, the goal of adversarially robust classification is to minimize $\E_{(X, Y)}[\ell_\gamma(Y, X, f)]$ wrt $f$,
where
\begin{align}
  \ell_\gamma(y, x, f) \define \begin{cases}
    1 & \text{if $\exists \Delta_x \in B_2(\gamma)$ s.t. $x + \Delta_x \in \calX$ and $yf(x + \Delta_x) \leq 0$}, \\
    0 & \text{otherwise}.
  \end{cases}
  \nonumber
\end{align}
We call this loss function $\ell_\gamma$ the \emph{adversarially robust 0-1 loss}, or the \emph{robust 0-1 loss} for short.

The robust 0-1 loss is a margin-based loss when restricted to the class of linear models
$\calFlin \define \{x \mapsto \theta^\top x \mid \theta \in \R^d, \|\theta\|_2 = 1\} \subseteq \R^\calX$.
Note that $\calFlin$ is symmetric.
\begin{proposition}
  \label{prop:margin-error-rate}
  For any $x \in \calX$, $y \in \calY$, and $f \in \calFlin$,
  we have $\ell_\gamma(y, x, f) = \ind{yf(x) \leq \gamma}$.
\end{proposition}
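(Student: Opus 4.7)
The plan is to unfold the definition of $\ell_\gamma$ using the linearity of $f$ and reduce the adversarial optimization to a one-dimensional problem along the direction $\pm\theta$.

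First I would note that for $f \in \calFlin$ with $f(x) = \theta^\top x$, linearity gives $yf(x + \Delta_x) = yf(x) + y\theta^\top \Delta_x$. The easy direction is to show that $\ell_\gamma(y,x,f) = 0$ whenever $yf(x) > \gamma$: by Cauchy--Schwarz and $\|\theta\|_2 = 1$, any $\Delta_x \in B_2^d(\gamma)$ satisfies $|y\theta^\top \Delta_x| \leq \gamma$, so $yf(x + \Delta_x) \geq yf(x) - \gamma > 0$, ruling out an adversarial success regardless of whether $x + \Delta_x \in \calX$.

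The harder direction is to show $\ell_\gamma(y,x,f) = 1$ whenever $yf(x) \leq \gamma$, since we must exhibit an actual adversarial perturbation that respects \emph{both} $\|\Delta_x\|_2 \leq \gamma$ and $x + \Delta_x \in \calX$. The natural choice is to move against the margin along the (signed) weight direction: set $\Delta_x^* = -t\,y\theta$ with $t = \max\{yf(x), 0\}$. If $yf(x) \leq 0$, then $t = 0$ and $\Delta_x^* = 0$ already does the job. If $0 < yf(x) \leq \gamma$, then $\|\Delta_x^*\|_2 = t \leq \gamma$ and $yf(x + \Delta_x^*) = yf(x) - t = 0$.

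The main subtlety, and the step I would treat most carefully, is verifying feasibility $x + \Delta_x^* \in \calX = B_2^d(1)$, which is not automatic from $\|\Delta_x^*\|_2 \leq \gamma$ alone. Expanding the norm gives
\begin{equation*}
  \|x + \Delta_x^*\|_2^2 = \|x\|_2^2 - 2t\, y\theta^\top x + t^2 = \|x\|_2^2 - 2t\,yf(x) + t^2.
\end{equation*}
Plugging in $t = yf(x)$ yields $\|x\|_2^2 - (yf(x))^2 \leq \|x\|_2^2 \leq 1$, so $x + \Delta_x^* \in \calX$ as required. Intuitively, perturbing $x$ toward the separating hyperplane along $-y\theta$ moves it closer to the origin (or keeps its norm unchanged), so the ambient ball constraint is never binding. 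Combining both directions gives $\ell_\gamma(y,x,f) = \ind{yf(x) \leq \gamma}$.
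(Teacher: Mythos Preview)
Your proof is correct and follows the same overall strategy as the paper---case analysis on the margin and perturbation along the direction $\pm\theta$---but with two differences worth noting. First, you treat both labels uniformly via the signed margin $yf(x)$, whereas the paper splits into $y=+1$ and $y=-1$. Second, and more substantively, in the case $0 < yf(x) \le \gamma$ you use the \emph{minimal} step $\Delta_x^* = -(yf(x))\,y\theta$, which lets you verify $x+\Delta_x^* \in \calX$ via $\|x+\Delta_x^*\|_2^2 = \|x\|_2^2 - (yf(x))^2 \le 1$. The paper instead takes the full step $\Delta_x = -\gamma y\theta$ and never checks the feasibility constraint $x+\Delta_x \in \calX$; in fact that choice can violate it (e.g.\ $\|x\|_2 = 1$ with $0 < y\theta^\top x < \gamma/2$ gives $\|x - \gamma y\theta\|_2^2 = 1 - 2\gamma\, y\theta^\top x + \gamma^2 > 1$). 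So your extra care is not cosmetic---it closes a genuine gap in the paper's argument.
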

We include the proof in Appendix~\ref{sec:proof} for completeness though it is mentioned as a fact by \citet{Diakonikolas:2019}.
Subsequently, when considering $\calFlin$, we work with the loss function $\phi_\gamma(\alpha) \define \ind{\alpha \leq \gamma}$
and call $\phi_\gamma$ the \emph{$\gamma$-robust 0-1 loss}.
We will study calibrated surrogates wrt $\phi_\gamma$ instead of $\ell_\gamma$,
and both are equivalent under the restricted function class $\calFlin$.

In many machine learning problems, there are often dichotomies between optimization (learning) and evaluation.
For instance, binary classification is evaluated by the 0-1 loss,
while common learning methods such as the support vector machine and logistic regression minimize surrogates to the 0-1 loss.
This dichotomy arises because minimizing the 0-1 loss directly is known to be NP-hard~\citep{Feldman:2012}.
Much research has investigated surrogates $\phi$ satisfying
\begin{align}
  \calR_\phi(f_i) - \calR_{\phi,\calF}^* \to 0
  \Longrightarrow
  \calR_\ell(f_i) - \calR_{\ell,\calF}^* \to 0,
  \label{eq:risk-convergence}
\end{align}
for all probability distributions and sequence of $\{f_i\}_{i \in \N} \subseteq \calF$.
When \eqref{eq:risk-convergence} is satisfied, the surrogate $\phi$ is said \emph{$(\psi,\calF)$-consistent}.

In this work, we study a pointwise form of consistency known as \emph{calibration},
which can be viewed as consistency of the excess $(\phi,\calF)$-CCR $\calC_{\phi,\calF}(f, \eta, x) - \calC_{\phi,\calF}^*(\eta, x)$ at each $x \in {\cal X}$
(formally defined in Section~\ref{sec:calibration}).
Since CCRs are defined in the pointwise manner, calibration analysis is easier than analyzing consistency directly, and has been used to prove consistency in a number of learning settings as we will see in Section~\ref{sec:related}. For example, calibration analysis has been performed in standard binary classification~\citep{Bartlett:2006},
where calibration is necessary~\citep[Theorem~3.3]{Steinwart:2007} and sufficient~\citep[Theorem~2.8]{Steinwart:2007} for consistency when $\calF = \calFall$.
When $\calF \ne \calFall$, calibration may not be sufficient for consistency, although it remains an important first step to analyze and understand consistency in standard classification~\citep{Long:2013,Zhang:2020}. This motivates the study of calibration in the context of adversarially robust classification.

\begin{remark}
  While it has been stated in our published version~\citep{Bao:2020b} that calibration of a surrogate loss is sufficient for consistency,
  this is not the case as \citet{Awasthi:2021} pointed out by means of a counterexample.
  For calibrated losses to imply consistency, the losses require an additional technical assumption, \emph{$\P$-minimizability},
  which does not necessarily hold in general unless $\calF = \calFall$~\citep[Theorem~3.2]{Steinwart:2007}.
  \citet[Theorem~25]{Awasthi:2021} provide sufficient conditions under which consistency does hold for $\calFlin$.
  Our simulations further indicate consistency for calibrated losses in settings (twonorm and advnorm datasets) where the sufficient condition of \citet{Awasthi:2021} does not hold.
  A full characterization of consistency for adversarially robust learning remains an open problem.
\end{remark}

\section{Related Work}
\label{sec:related}

From the viewpoint of robust optimization~\citep{Ben-Tal:2009,Bertsimas:2011},
adversarially robust binary classification can be formulated as
\begin{align}
  \min_{f \in \calF} \E_{(X, Y)} \left[ \max_{\tilde X \in \calU(X)} \ell(Y, \tilde X, f) \right],
  \label{eq:robust-classification}
\end{align}
where $\ell$ is a loss function
and $\calU(x)$ is a user-specified uncertainty set.
The optimization problem of adversarially robust classification $\min_{f \in \calF} \calR_{\ell_\gamma}(f)$ can be regarded as the special case
$\ell = \ell_{01}$ and $\calU(x) = x + B_2(\gamma)$.

Since the minimax problem \eqref{eq:robust-classification} is generally nonconvex,
it is traditionally tackled by minimizing a convex upper bound.
\citet{Lanckriet:2002} and \citet{Shivaswamy:2006} pick $\calU(x) = \{x \sim (\bar x, \Sigma_x)\}$ as an uncertainty set,
where $x \sim (\bar x, \Sigma_x)$ means that $x$ is drawn from a distribution that has prespecified mean $\bar x$, covariance $\Sigma_x$, and arbitrary higher moments.
\citet{Lanckriet:2002} and \citet{Shivaswamy:2006} convexified \eqref{eq:robust-classification} and obtained a second-order cone program.
\citet{Xu:2009} studied the relationship between robustness and regularization,
and showed that \eqref{eq:robust-classification} with the hinge loss and $\calU(x) = x + B_2(\gamma)$ is equivalent to $\ell_2$-regularized SVM.
Recently, \citet{Wong:2018}, \citet{Madry:2018}, \citet{Raghunathan:2018a}, \citet{Raghunathan:2018b}, and \citet{Khim:2019} examined \eqref{eq:robust-classification}
with the softmax cross entropy loss and $\calU(x) = x + B_\infty^d(\gamma)$ when $\calF$ is a set of deep nets,
and provided convex upper bounds of the worst-case loss in \eqref{eq:robust-classification}.
However, no work except \citet{Cranko:2019} studied whether the surrogate objectives minimize the robust 0-1 excess risk.
\citet{Cranko:2019} showed that no canonical proper loss~\citep{Reid:2010} can minimize the robust 0-1 loss.
Since canonical proper losses are convex,
this result aligns with ours.
We show more general results via calibration analysis for $\calU(x) = x + B_2(\gamma)$.

There are several other approaches to the robust classification
such as minimizing the Taylor approximation of the worst-case loss in \eqref{eq:robust-classification} \citep{Goodfellow:2015,Gu:2015,Shaham:2018},
regularization on the Lipschitz norm of models~\citep{Cisse:2017,Hein:2017,Tsuzuku:2018},
and injection of random noises to model parameters~\citep{Lecuyer:2019,Cohen:2019,Pinot:2019,Salman:2019}.
It is not known whether these methods imply the minimization of the robust 0-1 excess risk.

Other forms of robustness have also been considered in the literature.
A number of existing works considered the worst-case test distribution.
This line includes divergence-based methods~\citep{Namkoong:2016,Namkoong:2017,Hu:2018,Sinha:2018},
domain adaptation~\citep{Mansour:2009,Ben-David:2010,Germain:2013,Kuroki:2019,Zhang:2019},
and methods based on constraints on feature moments~\citep{Farnia:2016,Fathony:2016}.

In addition to adversarial robustness, it is worthwhile to mention outlier and label-noise robustness.
It is known that convex losses are vulnerable to outliers,
thus truncation making losses nonconvex is useful~\citep{Huber:2011}.
In the machine learning context, \citet{Masnadi-Shirazi:2009} and \citet{Holland:2019} designed nonconvex losses robust to outliers.
On the other hand, label-noise robustness, especially the random classification noise model, has been studied extensively~\citep{Angluin:1988},
where training labels are flipped with a fixed probability.
\citet{Long:2010} showed that there is no convex loss that is robust to label noises.
Later, \citet{Ghosh:2015}, \citet{vanRooyen:2015}, and \citet{Charoenphakdee:2019} discovered a certain class of nonconvex losses is a good alternative
for label-noise robustness.
In both outlier and label-noise robustness,
nonconvex loss functions play an important role as we see in adversarial robustness.

Calibration analysis has been formalized in \citet{Lin:2004}, \citet{Zhang:2004a}, \citet{Bartlett:2006}, and \citet{Steinwart:2007},
and employed to analyze not only binary classification,
but also complicated problems such as
multi-class classification~\citep{Zhang:2004b,Tewari:2007,Long:2013,Avila-Pires:2016,Ramaswamy:2016},
multi-label classification~\citep{Gao:2011,Dembczynski:2012},
cost-sensitive learning~\citep{Scott:2011,Scott:2012,Avila-Pires:2013},
ranking~\citep{Duchi:2010,Ravikumar:2011,Ramaswamy:2013},
structured prediction~\citep{Hazan:2010,Ramaswamy:2012,Osokin:2017,Blondel:2019},
AUC optimization~\citep{Gao:2015},
and optimization of non-decomposable metrics~\citep{Bao:2020}.
\citet{Zhang:2004a}, \citet{Ravikumar:2011}, and \citet{Gao:2015} figured out \emph{ad hoc} derivations of excess risk bounds,
while \citet{Bartlett:2006}, \citet{Steinwart:2007}, \citet{Scott:2012}, \citet{Avila-Pires:2013}, \citet{Avila-Pires:2016}, \citet{Osokin:2017}, and \citet{Blondel:2019}
used more systematic approaches.
As for adversarially robust classification,
\citet[Theorem 3.1]{Zhang:2019b} applied the classical result of calibration analysis on convex losses to upper bound the robust classification risk,
resulting in a term requiring numerical approximation in practice.

Finally, \citet{Awasthi:2021} contributed calibration analysis of adversarially robust classification by showing that realizability assumptions are sufficient for calibrated losses to imply consistency.
They showed that no \emph{continuous} margin-based losses are calibrated
and that some nonconvex and minimax-type losses are consistent wrt the robust 0-1 loss.
\citet{Awasthi:2021b} independently corrected our main results and extended them to more general function classes beyond $\calFlin$.

\section{Calibration Analysis}
\label{sec:calibration}

Calibration analysis is a tool to study the relationship between surrogate losses and target losses.
This section is devoted to explaining the calibration function introduced in \citet{Steinwart:2007} and specializing it to the current paper.\footnote{
  We import toolsets from \citet{Steinwart:2007} because of two reasons:
  (i) \citet{Steinwart:2007} formalized calibration analysis
  that is \emph{dependent} on user-specified function classes,
  which is useful for our analysis on $\calFlin$.
  (ii) \citet{Steinwart:2007} gave a general form of the calibration function~\eqref{eq:calibration-function},
  while most of literature focuses on specific target losses.
}

\begin{definition}
  \label{def:calibrated-loss}
  For a loss $\psi: \R \to \R_{\geq 0}$ and a function class $\calF$,
  we say a loss $\phi: \R \to \R_{\geq 0}$ is \emph{calibrated wrt $(\psi,\calF)$},
  or \emph{$(\psi,\calF)$-calibrated},
  if for any $\epsilon > 0$, $\eta \in [0, 1]$, and $x \in \calX$,
  there exists $\delta > 0$ such that for all $f \in \calF$, we have
  \begin{align}
    \calC_\phi(f, \eta, x) < \calC_{\phi, \calF}^*(\eta, x) + \delta
    \implies
    \calC_\psi(f, \eta, x) < \calC_{\psi, \calF}^*(\eta, x) + \epsilon.
    \label{eq:ccr-convergence}
  \end{align}
\end{definition}

If $\phi$ is $(\psi,\calF)$-calibrated, the condition \eqref{eq:risk-convergence}
holds for any probability distribution on $\calX \times \calY$ satisfying regularity conditions~\citep[Theorem 2.8]{Steinwart:2007}.\footnote{
  In order to imply $(\psi,\calF)$-consistency~\eqref{eq:risk-convergence},
  the two loss functions $\phi$ and $\psi$ are required to be \emph{$\P$-minimizable} for the underlying distribution $\P$---roughly meaning that their CCRs can be made arbitrarily small by a function in $\calF$.
  This ensures $\calR_{\phi,\calF}^* = \E_X[\calC_{\phi,\calF}^*(\P(Y=+1|X), X)]$.
  The precise statements and more details about $\P$-minimizability can be found in \citet[Definition~2.4 and Lemma~2.5]{Steinwart:2007}.
}

Next, we introduce the \emph{calibration function}~\citep[Lemma~2.9]{Steinwart:2007}.
\begin{definition}
  For a margin-based loss $\psi$ and $\phi$, and a function class $\calF$,
  the \emph{calibration function of $\phi$ wrt $(\psi,\calF)$}, or simply \emph{calibration function} if the context is clear,
  is defined as
  \begin{align}
    \bar\delta(\epsilon, \eta, x) = \inf_{f \in \calF} \calC_\phi(f, \eta, x) - \calC_{\phi,\calF}^*(\eta, x)
    \quad \text{s.t.} \quad
    \calC_\psi(f, \eta, x) - \calC_{\psi,\calF}^*(\eta, x) \geq \epsilon.
    \label{eq:calibration-function}
  \end{align}
\end{definition}
Note that $\bar\delta(\epsilon, \eta, x)$ is nondecreasing for $\epsilon > 0$.
The calibration function $\bar\delta(\epsilon, \eta, x)$ is the maximal $\delta$ satisfying the CCR condition \eqref{eq:ccr-convergence}.
\citet{Steinwart:2007} established the following important result to confirm if a surrogate is $(\psi,\calF)$-calibrated.
\begin{proposition}[\citet{Steinwart:2007}]
  \label{prop:calibration-condition}
  A surrogate loss $\phi$ is $(\psi,\calF)$-calibrated if and only if its calibration function $\delta$ satisfies $\bar\delta(\epsilon, \eta, x) > 0$ for all $\epsilon > 0$, $\eta \in [0, 1]$, and $x \in \calX$.
\end{proposition}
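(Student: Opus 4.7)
The plan is to establish the biconditional by directly unpacking definitions, exploiting that $\delta(\epsilon)$ is by construction the largest $\delta$ for which the implication in the calibration condition \eqref{eq:ccr-convergence} can hold, and handling both directions via the contrapositive of that implication.

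For the ``if'' direction, assume $\delta(\epsilon) > 0$ for every $\epsilon > 0$. Fix $\epsilon > 0$ and set $\delta \define \delta(\epsilon)$. Suppose for contradiction that there exists $(\alpha, \eta) \in \calA_{\calF} \times [0,1]$ with $\calC_\phi(\alpha, \eta) < \calC_{\phi, \calF}^*(\eta) + \delta$ and simultaneously $\calC_\psi(\alpha, \eta) \geq \calC_{\psi, \calF}^*(\eta) + \epsilon$. Then this pair is feasible in the optimization defining $\delta(\epsilon)$, so
\begin{equation*}
  \delta(\epsilon) \;\leq\; \calC_\phi(\alpha, \eta) - \calC_{\phi,\calF}^*(\eta) \;<\; \delta \;=\; \delta(\epsilon),
\end{equation*}
a contradiction. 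Hence \eqref{eq:ccr-convergence} holds with this choice of $\delta$, and $\phi$ is $(\psi,\calF)$-calibrated.

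For the ``only if'' direction, assume $\phi$ is $(\psi,\calF)$-calibrated and fix $\epsilon > 0$. By calibration there is some $\delta_0 > 0$ such that for all $(\alpha, \eta)$,
\begin{equation*}
  \calC_\phi(\alpha, \eta) < \calC_{\phi, \calF}^*(\eta) + \delta_0
  \;\Longrightarrow\;
  \calC_\psi(\alpha, \eta) < \calC_{\psi, \calF}^*(\eta) + \epsilon.
\end{equation*}
Taking the contrapositive, every $(\alpha, \eta)$ feasible in the definition \eqref{eq:calibration-function} of $\delta(\epsilon)$ must satisfy $\calC_\phi(\alpha, \eta) - \calC_{\phi,\calF}^*(\eta) \geq \delta_0$. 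Passing to the infimum over all such feasible pairs, $\delta(\epsilon) \geq \delta_0 > 0$.

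I do not expect a real obstacle, since the statement is essentially a tautological reformulation of calibration in terms of the worst-case excess $\phi$-CCR on the $\epsilon$-sublevel-violation set. The only subtle point worth mentioning is the edge case in which the feasible set $\{(\alpha,\eta) : \calC_\psi(\alpha,\eta) - \calC_{\psi,\calF}^*(\eta) \geq \epsilon\}$ is empty for some $\epsilon$; by the paper's convention that the infimum over the empty set is $+\infty$, one has $\delta(\epsilon) = +\infty > 0$, which is consistent with calibration being trivially satisfied at that $\epsilon$ (the premise of \eqref{eq:ccr-convergence} needs no restriction on $\delta$).
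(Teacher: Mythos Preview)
Your argument is correct and is the standard definitional unwinding one would expect. Note, however, that the paper does not supply its own proof of this proposition: it is imported verbatim as Lemma~2.9 of \citet{Steinwart:2007} and simply cited, so there is no in-paper proof to compare against. Your write-up would serve as a self-contained verification, and the edge case you flag (empty feasible set yielding $\delta(\epsilon)=+\infty$) is handled correctly under the paper's convention.
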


In order to see the relationship between $(\psi,\calF)$-excess risk and $(\phi,\calF)$-excess risk,
a stronger notion of calibrated losses than Definition~\ref{def:calibrated-loss} is necessary.
\begin{definition}
  \label{def:uniformly-calibrated-loss}
  For a loss $\psi: \R \to \R_{\geq 0}$ and a function class $\calF$,
  we say a loss $\phi: \R \to \R_{\geq 0}$ is \emph{uniformly $(\psi,\calF)$-calibrated},
  if for any $\epsilon > 0$, there exists $\delta > 0$ such that for all $\eta \in [0, 1]$, $f \in \calF$, and $x \in \calX$, we have
  \begin{align}
    \calC_\phi(f, \eta, x) < \calC_{\phi, \calF}^*(\eta, x) + \delta
    \implies
    \calC_\psi(f, \eta, x) < \calC_{\psi, \calF}^*(\eta, x) + \epsilon.
    \nonumber
  \end{align}
  The corresponding \emph{uniform calibration function} is defined as
  \begin{align*}
    \delta(\epsilon) = \inf_{\eta \in [0, 1]} \inf_{x \in \calX} \bar\delta(\epsilon, \eta, x).
  \end{align*}
\end{definition}
Note that Definition~\ref{def:uniformly-calibrated-loss} is slightly but substantially different from Definition~\ref{def:calibrated-loss}
in that the order of quantifiers of $\delta$ and $(\eta, x)$ is reversed.
With this notion, we can connect the surrogate excess risk to the target excess risk
as shown in the following statement.
\begin{proposition}[Theorem 2.13 in \citet{Steinwart:2007}]
  \label{prop:excess-risk-transform}
  Let $\delta: \R_{\geq 0} \to \R_{\geq 0}$ be the uniform calibration function of $\phi$ wrt $(\psi,\calF)$.
  Define $\check\delta: \R_{\geq 0} \to \R_{\geq 0}$ as $\check\delta(\epsilon) = \delta(\epsilon)$ if $\epsilon > 0$ and $\check\delta(0) = 0$.
  Suppose that $\phi$ and $\psi$ are $\P$-minimizable and $\calR_{\phi,\calF}^*, \calR_{\psi,\calF}^* < \infty$.
  Then, for all $f \in \calF$, we have
  \begin{align}
    \check\delta^{**}\left(\calR_\psi(f) - \calR_{\psi,\calF}^*\right) \leq \calR_\phi(f) - \calR_{\phi,\calF}^*,
    \label{eq:excess-risk-transform}
  \end{align}
  where $\check\delta^{**}$ denotes the Fenchel-Legendre biconjugate of $\check\delta$.
\end{proposition}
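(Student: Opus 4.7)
The plan is to lift a CCR-level pointwise inequality (at fixed $\eta$) to the global risk level via integration and Jensen's inequality. First, I would establish the pointwise bound: for every $\alpha \in \calA_\calF$ and $\eta \in [0,1]$,
\begin{align*}
  \check\delta\bigl(\calC_\psi(\alpha, \eta) - \calC_{\psi,\calF}^*(\eta)\bigr) \leq \calC_\phi(\alpha, \eta) - \calC_{\phi,\calF}^*(\eta).
\end{align*}
This essentially unwinds the definition~\eqref{eq:calibration-function}: setting $\epsilon := \calC_\psi(\alpha, \eta) - \calC_{\psi,\calF}^*(\eta) \geq 0$, the case $\epsilon = 0$ is trivial since $\check\delta(0) = 0$ and the RHS is nonnegative; for $\epsilon > 0$ the pair $(\alpha, \eta)$ is feasible for the infimum defining $\delta(\epsilon)$, so $\check\delta(\epsilon) = \delta(\epsilon) \leq \calC_\phi(\alpha, \eta) - \calC_{\phi,\calF}^*(\eta)$. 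Because $\check\delta^{**}$ is the largest convex lower-semicontinuous minorant of $\check\delta$, the same inequality persists with $\check\delta^{**}$ in place of $\check\delta$.

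Next, I would substitute $\alpha = f(X)$ and $\eta = \eta(X) := \P(Y = 1 \mid X)$ into the pointwise bound, integrate against $\P(X)$, and pass $\check\delta^{**}$ through the expectation using Jensen's inequality. This yields
\begin{align*}
  \check\delta^{**}\bigl(\E_X[\calC_\psi(f(X), \eta(X)) - \calC_{\psi,\calF}^*(\eta(X))]\bigr)
  \leq \E_X[\check\delta^{**}(\calC_\psi - \calC_{\psi,\calF}^*)]
  \leq \E_X[\calC_\phi - \calC_{\phi,\calF}^*].
\end{align*}
A small preliminary here is to verify that $\check\delta^{**}$ can be taken nondecreasing on $\R_{\geq 0}$, which follows from the monotonicity of $\check\delta$ (the largest convex minorant of a nondecreasing function extended by $+\infty$ on $\R_{<0}$ is nondecreasing on $\R_{\geq 0}$).

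Finally, I would convert expected CCRs into unconditional risks. The identifications $\E_X[\calC_\psi(f(X), \eta(X))] = \calR_\psi(f)$ and its $\phi$-analogue are immediate by definition. The remaining identifications $\E_X[\calC_{\psi,\calF}^*(\eta(X))] = \calR_{\psi,\calF}^*$ and the corresponding one for $\phi$ require interchanging the pointwise infimum over $\calA_\calF$ with the expectation over $X$, and this step is the main technical obstacle. It is automatic for the full class $\calFall$ via a measurable selection argument, but for restricted classes such as $\calFlin$ it is more delicate, because a single linear predictor generally cannot realize the pointwise $\alpha$-minimizer for every $x$ simultaneously. The slack on the $\psi$-side can be absorbed by monotonicity of $\check\delta^{**}$, so the essential requirement lies on the $\phi$-side; this is precisely the richness condition built into Steinwart's framework, and once it is in place, chaining the above inequalities yields the desired bound $\check\delta^{**}(\calR_\psi(f) - \calR_{\psi,\calF}^*) \leq \calR_\phi(f) - \calR_{\phi,\calF}^*$.
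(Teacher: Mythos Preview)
The paper does not prove this proposition; it is stated as Theorem~2.13 of \citet{Steinwart:2007} and imported without argument. So there is no ``paper's own proof'' to compare against---only the original source.

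Your outline is the standard route (pointwise CCR inequality, pass to $\check\delta^{**}$, Jensen, integrate), and it is correct up to the final step. You are also right that the crux is the $\phi$-side identification: from Jensen you obtain
\[
\check\delta^{**}\bigl(\calR_\psi(f)-\calR_{\psi,\calF}^*\bigr)\;\le\;\calR_\phi(f)-\E_X\bigl[\calC_{\phi,\calF}^*(\eta(X))\bigr],
\]
and since $\E_X[\calC_{\phi,\calF}^*(\eta(X))]\le \calR_{\phi,\calF}^*$ in general, the right-hand side can be \emph{larger} than $\calR_\phi(f)-\calR_{\phi,\calF}^*$, which is the wrong direction. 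Your appeal to a ``richness condition built into Steinwart's framework'' is too vague to close this gap. In \citet{Steinwart:2007} the result is actually formulated with respect to the \emph{inner} $\phi$-risk (the integral of the pointwise conditional infimum), and the passage to $\calR_{\phi,\calF}^*$ requires an explicit hypothesis that these coincide---essentially that for every $\epsilon>0$ there exists $f\in\calF$ with $\calC_\phi(f(x),\eta(x))\le \calC_{\phi,\calF}^*(\eta(x))+\epsilon$ for ($P_X$-almost) all $x$. For $\calFall$ this holds by measurable selection, as you note; for $\calFlin$ it is a genuine additional assumption that must be stated and verified, not waved at. If you want the proof to stand on its own, either (i) state the result with the inner $\phi$-risk on the right-hand side, or (ii) add the explicit ``exact minimizer'' / selection hypothesis on $(\phi,\calF,P)$ and invoke it.
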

The relationship in \eqref{eq:excess-risk-transform} is called an excess risk transform.
The excess risk transform is invertible iff $\phi$ is uniformly $(\psi,\calF)$-calibrated~\citep[Remark~2.14]{Steinwart:2007}.
In this case, we obtain the excess risk bound $\calR_\psi(f) - \calR_{\psi,\calF}^* \leq (\check\delta^{**})^{-1}(\calR_\phi(f) - \calR_{\phi,\calF}^*)$.\footnote{
  In addition, it is known that a non-vacuous and distribution-independent excess risk transform is available
  only if a surrogate is uniformly calibrated provided that the biconjugate of the calibration function is invertible~\citep[Theorem~2.17]{Steinwart:2007}.
  Hence, uniform calibration is necessary to obtain an excess risk bound.
}
In the end, the calibration function can be used in two ways:
Proposition~\ref{prop:calibration-condition} enables us to check if a surrogate loss is calibrated,
and Proposition~\ref{prop:excess-risk-transform} gives us a quantitative relationship between the surrogate excess risk and the target excess risk.
Such an analysis has been carried out in a number of learning problems as we mention in Section~\ref{sec:related}.

We review an important result regarding convex surrogates for the non-robust 0-1 loss $\ell_{01}$.
\begin{proposition}[Theorem 6 in \citet{Bartlett:2006}]
  \label{prop:convex-loss}
  Let $\phi$ be a convex surrogate loss.
  Then, $\phi$ is uniformly calibrated wrt $(\ell_{01},\calFall)$ if and only if it is differentiable at $0$ and $\phi'(0) < 0$.
\end{proposition}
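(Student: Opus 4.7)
The plan is to reduce $(\phi_{01},\calFall)$-calibration to a pointwise sign condition on the minimizers of the class-conditional risk $\calC_\phi(\alpha,\eta)=\eta\phi(\alpha)+(1-\eta)\phi(-\alpha)$, and then exploit convexity to translate that condition into a statement about the (sub)derivatives of $\phi$ at $0$. Since $\calA_{\calFall}=\R$ and the target Bayes CCR $\calC_{\phi_{01},\calFall}^*(\eta)=\min(\eta,1-\eta)$ is attained precisely when $\alpha(2\eta-1)>0$, Proposition~\ref{prop:calibration-condition} reduces calibration to the following pointwise condition: for every $\eta\neq 1/2$, the wrong-sign infimum $H^-(\eta)\define\inf_{\alpha(2\eta-1)\leq 0}\calC_\phi(\alpha,\eta)$ strictly exceeds the unconstrained infimum $H(\eta)\define\inf_{\alpha\in\R}\calC_\phi(\alpha,\eta)$. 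Promoting this pointwise inequality to $\delta(\epsilon)>0$ then follows from a continuity/compactness argument on $\{\eta:|2\eta-1|\geq\epsilon\}$, using concavity of $H$ (as an infimum over $\eta$ of linear functions).

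The key observation is that for convex $\phi$ the constrained infimum $H^-(\eta)$ is attained at the boundary $\alpha=0$ whenever the unconstrained minimizer lies on the correct side, giving $H^-(\eta)=\phi(0)$; hence $H^-(\eta)>H(\eta)$ is equivalent to $\alpha=0$ not being a minimizer of $\calC_\phi(\cdot,\eta)$, i.e.\ $0\notin\partial_\alpha\calC_\phi(0,\eta)=\bigl[\eta\phi'_-(0)-(1-\eta)\phi'_+(0),\,\eta\phi'_+(0)-(1-\eta)\phi'_-(0)\bigr]$. For the $(\Leftarrow)$ direction, differentiability at $0$ with $\phi'(0)<0$ collapses this subdifferential to $\{(2\eta-1)\phi'(0)\}$, which is nonzero with sign opposite to $2\eta-1$ for every $\eta\neq 1/2$, verifying the sign condition. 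For the $(\Rightarrow)$ direction, I argue the contrapositive: if $\phi'(0)$ exists but is nonnegative, then $(2\eta-1)\phi'(0)$ either vanishes (placing $0$ in the subdifferential for every $\eta$) or has the wrong sign (pushing the minimizer to the wrong half-line); if $\phi$ has a kink at $0$, so $\phi'_-(0)<\phi'_+(0)$, a direct algebraic check shows the interval above contains $0$ for $\eta$ ranging over an open neighborhood of $1/2$, exhibiting some $\eta\neq 1/2$ for which $\alpha=0$ is a minimizer.

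The main obstacle is the non-differentiable case of the converse direction: the subdifferential interval depends affinely on $\eta$, and one must verify $0\in\partial_\alpha\calC_\phi(0,\eta)$ persists in an open neighborhood of $\eta=1/2$ regardless of the signs of $\phi'_\pm(0)$ (covering all three configurations: both negative, straddling zero, or both positive). A secondary technicality is promoting the pointwise strict inequality in the $(\Leftarrow)$ direction to $\delta(\epsilon)>0$; this uses continuity of the concave map $\eta\mapsto\phi(0)-H(\eta)$ on $(0,1)$ together with its strict positivity on $\{\eta:|2\eta-1|\geq\epsilon\}$, giving a uniform lower bound by compactness.
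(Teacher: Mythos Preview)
The paper does not supply its own proof of this proposition; it is quoted verbatim as Theorem~6 of \citet{Bartlett:2006} and used as a black box. There is therefore nothing in the paper to compare your argument against. Your outline is essentially the original Bartlett--Jordan--McAuliffe argument: reduce calibration to the pointwise condition $H^-(\eta)>H(\eta)$ for $\eta\neq\tfrac12$, observe that for convex $\phi$ the wrong-side infimum equals $\phi(0)$ whenever the correct-side derivative is favorable, and then read off the derivative condition from the subdifferential of $\calC_\phi(\cdot,\eta)$ at $\alpha=0$.

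Two small points of precision. First, the sentence ``$H^-(\eta)>H(\eta)$ is equivalent to $\alpha=0$ not being a minimizer of $\calC_\phi(\cdot,\eta)$'' is not correct as a freestanding claim: when $\phi'(0)>0$ and $\eta>\tfrac12$, the point $\alpha=0$ is \emph{not} a minimizer, yet the unconstrained minimizer sits on the wrong half-line and $H^-(\eta)=H(\eta)$. You handle this case correctly later (``pushing the minimizer to the wrong half-line''), so the argument is sound, but the clean equivalence you should state is that, for $\eta>\tfrac12$, $H^-(\eta)>H(\eta)$ holds if and only if the \emph{right}-derivative $\partial_\alpha^+\calC_\phi(0,\eta)=\eta\phi'_+(0)-(1-\eta)\phi'_-(0)$ is strictly negative. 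Second, the map $\eta\mapsto\phi(0)-H(\eta)$ is convex (since $H$ is concave as an infimum of affine functions), not concave; your compactness step still goes through because this convex function is continuous on $(0,1)$ and lower semicontinuous at the endpoint $\eta=1$, hence attains a positive minimum on $[(1+\epsilon)/2,1]$.
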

As a result of Proposition~\ref{prop:convex-loss}, we know that many surrogate losses used in practice such as the hinge loss, logistic loss, and squared loss are uniformly calibrated
wrt $(\ell_{01},\calFall)$. One of our objectives in this paper is to establish a general class of loss functions that are calibrated wrt the adversarial 0-1 loss, in analogy to Proposition \ref{prop:convex-loss}.

Before proceeding to our main results, we present two lemmas that facilitate our analysis.
All proofs are deferred to Appendix~\ref{sec:proof}.
\begin{lemma}
  \label{lem:calibration-iff-condition}
  Let $\tcalX_\rho \define \calX \setminus B_2^\circ(\gamma + \rho)$ and $\phi$ be a continuous surrogate loss.
  Denote
  \begin{align}
    \delta_\rho(\epsilon)
    = \inf_{\eta \in [0,1]} \inf_{x \in \tcalX_\rho} \inf_{f \in \calFlin} \calC_{\phi}(f, \eta, x) - \calC_{\phi,\calFlin}^*(\eta, x)
    \quad \text{s.t.} \quad
    \calC_{\phi_\gamma}(f, \eta, x) - \calC_{\phi_\gamma,\calFlin}^*(\eta, x) \geq \epsilon.
    \nonumber
  \end{align}
  Then, $\phi$ is $(\phi_\gamma,\calFlin)$-calibrated
  if and only if $\delta_\rho(\epsilon) > 0$ for all $\epsilon > 0$ and $\rho \in (0, 1-\gamma)$.
\end{lemma}
The calibration function with the restricted domain $\delta_\rho$ is easier to work with in the subsequent analyses. Finally, we characterize the calibration function of an arbitrary surrogate loss $\phi$ wrt $(\phi_\gamma,\calFlin)$.
\begin{lemma}
  \label{lem:mer-calibration-function}
  Let $\phi$ be a surrogate loss.
  Then, the $(\phi_\gamma,\calFlin)$-calibration function is
  \begin{align}
    \bar\delta(\epsilon, \eta, x) = \begin{cases}
      \infty & \text{if $\epsilon > \max\{\eta, 1 - \eta\}$}, \\
      \inf\limits_{f \in \calFlin: |f(x)| \leq \gamma} \Delta\calC_{\phi,\calFlin}(f, \eta, x) & \text{if $|2\eta - 1| < \epsilon \leq \max\{\eta, 1 - \eta\}$}, \\
      \inf\limits_{f \in \calFlin: (2\eta - 1)f(x) \leq 0 \text{ or } |f(x)| \leq \gamma} \Delta\calC_{\phi,\calFlin}(f, \eta, x) & \text{if $\epsilon \leq |2\eta - 1|$},
    \end{cases}
    \label{eq:inner-calibration-function}
  \end{align}
  when $\xnorm > \gamma$, and $\bar\delta(\epsilon, \eta, x) = \infty$ when $\xnorm \leq \gamma$.
\end{lemma}
Lemmas~\ref{lem:calibration-iff-condition} and \ref{lem:mer-calibration-function} are used in the proofs and examples below.

\section{Convex Surrogates are Not \texorpdfstring{($\phi_\gamma$, $\calFlin$)}{(phi(gamma), F(lin)}-calibrated}
\label{sec:convex-surrogate}

Our first result concerns calibration of convex surrogate losses wrt the $\gamma$-robust 0-1 loss.

\begin{theorem}
  \label{thm:no-convex-calibrated-surrogate}
  For any margin-based surrogate loss $\phi: \R \to \R_{\geq 0}$,
  if $\phi$ is convex, then $\phi$ is not calibrated wrt ($\phi_\gamma$,$\calFlin$).
\end{theorem}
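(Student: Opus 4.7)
The plan is to invoke Proposition~\ref{prop:calibration-condition}: $\phi$ fails to be $(\phi_\gamma,\calF)$-calibrated as soon as some $\epsilon>0$ satisfies $\delta(\epsilon)=0$. Using the explicit formula for the calibration function provided by Lemma~\ref{lem:mer-calibration-function}, I extract such a vanishing by restricting attention to the symmetric label value $\eta=\tfrac12$, where convexity interacts badly with the shift $\gamma>0$.

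At $\eta=\tfrac12$ the class-conditional surrogate risk reduces to $\calC_\phi(\alpha,\tfrac12)=\tfrac12(\phi(\alpha)+\phi(-\alpha))$. Convexity of $\phi$ yields $\tfrac12(\phi(\alpha)+\phi(-\alpha))\geq \phi(0)$ for every $\alpha\in\R$, with equality at $\alpha=0$. Since $0\in[-1,1]\subseteq\calA_\calF$, this shows $\calC_{\phi,\calF}^*(\tfrac12)=\phi(0)$ and in particular $\Delta\calC_{\phi,\calF}(0,\tfrac12)=0$. Now, at $\eta=\tfrac12$ we have $|2\eta-1|=0$ and $\max\{\eta,1-\eta\}=\tfrac12$, so for every $\epsilon\in(0,\tfrac12]$ the middle case of Lemma~\ref{lem:mer-calibration-function} applies:
\[
\bar\delta(\epsilon,\tfrac12)=\inf_{\alpha\in\calA_\calF,\,|\alpha|\leq\gamma}\Delta\calC_{\phi,\calF}(\alpha,\tfrac12).
\]
Because $\gamma\in(0,1)$ and $0\in\calA_\calF$, the point $\alpha=0$ is feasible in this infimum and attains value $0$. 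Hence $\bar\delta(\epsilon,\tfrac12)=0$, so $\delta(\epsilon)=\inf_{\eta\in[0,1]}\bar\delta(\epsilon,\eta)=0$ for every $\epsilon\in(0,\tfrac12]$, and Proposition~\ref{prop:calibration-condition} delivers non-calibration.

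I do not foresee any real obstacle: once the CCR at $\eta=\tfrac12$ is written out, a one-line Jensen inequality closes the argument. The conceptual takeaway is that for any $\gamma>0$ the symmetric convex minimizer $\alpha=0$ lies strictly inside the tolerance interval $[-\gamma,\gamma]$, so the surrogate excess risk can be driven to zero while the $\phi_\gamma$-Bayes decision at $\eta=\tfrac12$ (which demands $|\alpha|>\gamma$) remains entirely wrong. This is precisely what breaks the familiar $\gamma=0$ calibration theory of Proposition~\ref{prop:convex-loss}, whose sufficient condition rests on the local behavior of $\phi$ at $0$ that the shifted target loss renders irrelevant.
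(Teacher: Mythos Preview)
Your proof is correct and follows essentially the same approach as the paper: both arguments exploit that at $\eta=\tfrac12$ the convex even function $\phi(\alpha)+\phi(-\alpha)$ is minimized at $\alpha=0\in[-\gamma,\gamma]$, so the infimum over the $\gamma$-margin equals the global infimum. The paper routes this through an intermediate equivalent characterization (part~\ref{lem:ccr-property:expanded-calibration-condition} of Lemma~\ref{lem:ccr-property}) while you go directly via Lemma~\ref{lem:mer-calibration-function} and Proposition~\ref{prop:calibration-condition}, but the mathematical content is identical.
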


\begin{proof} \textit{(Sketch)}
  In the non-robust setup, \citet{Bartlett:2006} showed that a surrogate loss is calibrated wrt ($\ell_{01}$,$\calFall$)
  iff $\inf_{(2\eta - 1)f(x) \leq 0} \calC_\phi(f, \eta, x)$ (the minimum $\phi$-risk over `wrong' predictions) is larger than $\inf_{f(x) \in \R} \calC_\phi(f, \eta, x)$ (the minimum $\phi$-risk over all predictions) for $\eta \ne \frac{1}{2}$.
  This means wrong predictions must be penalized more.
  In our robust setup, we must penalize not only wrong predictions but also predictions that fall in the $\gamma$-margin,
  i.e.,
  \begin{align}
    \inf_{f \in \calFlin: |f(x)| \leq \gamma} \calC_\phi(f, \eta, x) > \inf_{f \in \calFlin} \calC_\phi(f, \eta, x),
    \label{eq:penalize-gamma-margin}
  \end{align}
  which is an immediate corollary of Proposition~\ref{prop:calibration-condition} and Lemma~\ref{lem:mer-calibration-function}
  and stated in part~\ref{lem:ccr-property:expanded-calibration-condition} of Lemma~\ref{lem:ccr-property} in Appendix~\ref{sec:proof}.
  Condition~\eqref{eq:penalize-gamma-margin} becomes harder to satisfy as a data point gets more uncertain ($\eta \to \frac{1}{2}$).
  In the limit, we have $\inf_{|\alpha| \leq \gamma} \phi(\alpha) + \phi(-\alpha) > \inf_{\alpha \in \R} \phi(\alpha) + \phi(-\alpha)$,
  meaning that the even part of $\phi$ should take larger values in $|\alpha| \leq \gamma$ than in the rest of $\alpha$.
  However, $\phi(\alpha) + \phi(-\alpha)$ attains the infimum at $\alpha = 0$
  because $\phi(\alpha) + \phi(-\alpha)$ is convex and even as long as $\phi$ is convex.
  Therefore, the condition \eqref{eq:penalize-gamma-margin} would never be satisfied
  by convex surrogate $\phi$.
  This idea is illustrated in Figure~\ref{fig:convex-loss}.
\end{proof}

Hence, many popular surrogate losses such as the hinge, logistic, and squared error losses are not calibrated wrt ($\phi_\gamma$,$\calFlin$).
We defer all proofs to Appendix~\ref{sec:proof}.

\begin{figure}[t]
  \begin{minipage}[c]{0.45\columnwidth}
    \centering
    \scriptsize
    \def\varbeta{0.3}
    \def\vargamma{0.5}
    \includegraphics{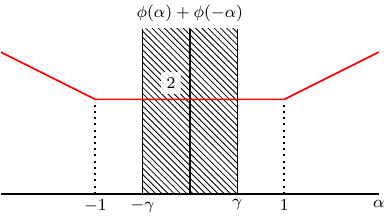}
    \caption{
      $\phi(\alpha) + \phi(-\alpha) = 2\calC_\phi\left(f, \frac{1}{2}, x\right)$ is illustrated with $\alpha = f(x)$,
      where $\phi$ is the hinge loss and $\gamma = 0.5$.
      $\phi(\alpha) + \phi(-\alpha)$ has the same minimizers in both $|\alpha| \leq \gamma$ and $|\alpha| \leq 1$.
    }
    \label{fig:convex-loss}
  \end{minipage}
  \hfill
  \begin{minipage}[c]{0.45\columnwidth}
    \centering
    \scriptsize
    \includegraphics{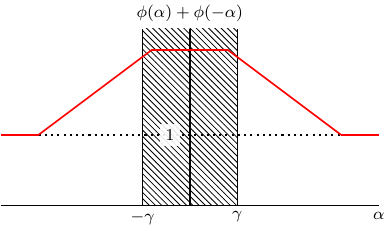}
    \caption{
      $\phi(\alpha) + \phi(-\alpha) = 2\calC_\phi\left(f, \frac{1}{2}, x\right)$ is illustrated with $\alpha = f(x)$,
      where $\phi$ is the ramp loss with $\beta = 0.6$ (defined in Section~\ref{sec:examples}) and $\gamma = 0.5$.
      The condition $\phi(\gamma) + \phi(-\gamma) > \phi(\alpha) + \phi(-\alpha)$ for $\alpha \in (\gamma, 1]$ reflects the idea
      that predictions falling into the shaded area ($|\alpha| \leq \gamma$) must be penalized more
      than the others.
    }
    \label{fig:qccv-even-loss}
  \end{minipage}
\end{figure}

Note that the definition of calibration makes no assumptions on the conditional distribution $\P(Y=+1|x)$.
If we additionally adopt the low noise assumption~\citep{Massart:2006}, then it is possible for a convex loss to be calibrated wrt ($\phi_\gamma$,$\calFlin$).
We will discuss the details later in Section~\ref{sec:low-noise}.

\section{Calibration Conditions for Nonconvex Surrogates}
\label{sec:calibrated-surrogate}

As seen in Section~\ref{sec:convex-surrogate},
convex surrogate losses that are calibrated wrt ($\phi_\gamma$,$\calFlin$) do not exist.
This motivates a search for nonconvex surrogate losses.
Nonconvex surrogates are used for outlier robustness~\citep{Collobert:2006,Masnadi-Shirazi:2009,Holland:2019} or label-noise robustness~\citep{Ghosh:2015,vanRooyen:2015,Charoenphakdee:2019}.
Bounded monotone surrogates such as the ramp loss and the sigmoid loss are simple and common choices for those purposes. In this section, we also look for good surrogates from bounded monotone losses.

The following assumption will be adopted.
\begin{assumption}
  \label{assump:qccv-ccr}
  For a margin-based loss function $\phi: \R \to \R_{\geq 0}$,
  $\phi(-\alpha) > \phi(\alpha)$ for $\alpha \in (\gamma, 1]$,
  and its $\phi$-CCR $\calC_\phi(\cdot, \eta)$ is quasiconcave for all $\eta \in [0, 1]$.
\end{assumption}
The assumption $\phi(-\alpha) > \phi(\alpha)$ for $\alpha \in (\gamma, 1]$ is naturally satisfied
by surrogates strictly decreasing in $[-\alpha_0, \alpha_0]$ with sufficiently large $\alpha_0 > 0$.

Next, we state our main positive result.
Its proof is included in Appendix~\ref{sec:proof}.
\begin{theorem}
  \label{thm:calibration-condition}
  Let $\phi: \R \to \R_{\geq 0}$ be a surrogate loss.
  Assume that $\phi$ is bounded, continuous, nonincreasing, and satisfies Assumption~\ref{assump:qccv-ccr}.
  Let $\calF = \calFlin$.
  Then,
  \begin{enumerate}
    \item \label{thm:calibration-condition:0/1}
    $\phi$ is $(\ell_{01},\calFlin)$-calibrated.

    \item \label{thm:calibration-condition:robust-0/1}
    $\phi$ is $(\phi_\gamma,\calFlin)$-calibrated
    if and only if $\phi(\gamma) + \phi(-\gamma) > \phi(\alpha) + \phi(-\alpha)$ for all $\alpha \in (\gamma, 1]$.
  \end{enumerate}
\end{theorem}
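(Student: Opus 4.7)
The plan is to combine Proposition~\ref{prop:calibration-condition}, which reduces $(\psi,\calFlin)$-calibration to strict positivity of the calibration function $\delta(\epsilon) = \inf_\eta \bar\delta(\epsilon,\eta)$ on $(0,\infty)$, with Lemma~\ref{lem:mer-calibration-function} applied to $\calF = \calFlin$ (noting that $\calA_{\calFlin} = [-1,1]$ meets the lemma's hypothesis). The preparatory computation is the Bayes CCR on $\calFlin$: writing
\begin{align*}
  \calC_\phi(\alpha,\eta) = E(\alpha) + (2\eta-1)\,O(\alpha),\qquad E(\alpha) \define \tfrac{\phi(\alpha)+\phi(-\alpha)}{2},\qquad O(\alpha) \define \tfrac{\phi(\alpha)-\phi(-\alpha)}{2},
\end{align*}
the quasiconcave-even hypothesis makes $E$ even and, being even-plus-quasiconcave, nondecreasing on $(-\infty,0]$ and nonincreasing on $[0,\infty)$, so $E$ is minimized on $[-1,1]$ at $\alpha = \pm 1$ with value $B/2$; nonincreasing $\phi$ makes $O$ odd and nonincreasing; and $\phi(-1) > \phi(1)$ yields $O(-1) > 0 > O(1)$. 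Combining these three facts shows $\calC_\phi(\cdot,\eta)$ is uniquely minimized on $[-1,1]$ at $\alpha^*(\eta) = \sign(2\eta - 1)$ for $\eta \ne \tfrac12$, with value $\calC_{\phi,\calFlin}^*(\eta) = \eta\phi(\sign(2\eta-1)) + (1-\eta)\phi(-\sign(2\eta-1))$, and $\calC_{\phi,\calFlin}^*(\tfrac12) = B/2$ attained at $\pm 1$.

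For Part~\ref{thm:calibration-condition:0/1}, I would specialize Lemma~\ref{lem:mer-calibration-function} to $\gamma = 0$ and verify positivity branch by branch. The first branch is $+\infty$. In the third branch ($\epsilon \le |2\eta - 1|$, necessarily $\eta \ne \tfrac12$), the constraint $(2\eta - 1)\alpha \le 0$ cuts $[-1,1]$ into the half-interval opposite to $\alpha^*(\eta)$; the monotonicity of $E$ and $O$ thereon reduces the minimization of $\Delta\calC_{\phi,\calFlin}$ to the smaller of the boundary excesses at $\alpha = -\sign(2\eta - 1)$ and $\alpha = 0$, namely $|2\eta - 1|(\phi(-1) - \phi(1))$ and $\phi(0) - \calC_{\phi,\calFlin}^*(\eta)$, both strictly positive. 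The middle branch ($|2\eta - 1| < \epsilon \le \max\{\eta, 1-\eta\}$) degenerates to $\phi(0) - \calC_{\phi,\calFlin}^*(\eta)$, which is positive by $\phi(0) = E(0) \ge E(1) = B/2$ (quasiconcavity of $E$) together with the strictness drawn from $\phi(-1) > \phi(1)$.

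For Part~\ref{thm:calibration-condition:robust-0/1}, the ``only if'' direction is the cleanest entry point: probing $\eta = \tfrac12$ activates the middle branch for every $\epsilon \in (0, \tfrac12]$, and it evaluates to $\inf_{|\alpha| \le \gamma} E(\alpha) - B/2$; quasiconcavity plus evenness of $E$ forces its minimum over $[-\gamma,\gamma]$ to be $E(\gamma) = \tfrac{\phi(\gamma) + \phi(-\gamma)}{2}$, so positivity of $\bar\delta$ is equivalent to $\phi(\gamma) + \phi(-\gamma) > B$. For the ``if'' direction, I would rerun the branch-wise analysis with general $\gamma > 0$: the third branch now has the enlarged constraint set $\{(2\eta - 1)\alpha \le 0\} \cup \{|\alpha| \le \gamma\}$, whose constrained infimum is the minimum of the ``wrong sign'' contribution $|2\eta - 1|(\phi(-1) - \phi(1))$ and the ``within margin'' contribution $\inf_{|\alpha| \le \gamma}\Delta\calC_{\phi,\calFlin}(\alpha,\eta)$; the latter, again by quasiconcavity of $E$, reduces to the value at $\alpha = \pm\gamma$ and is strictly positive precisely because $\phi(\gamma) + \phi(-\gamma) > B$.

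The main obstacle is securing uniformity of $\bar\delta(\epsilon,\eta)$ as $\eta \to \tfrac12$: the ``wrong sign'' contribution $|2\eta - 1|(\phi(-1) - \phi(1))$ vanishes in this limit, so the calibration gap must be carried by the ``within margin'' piece, for which the hypotheses ($\phi(0) > B/2$ in Part~\ref{thm:calibration-condition:0/1} and $\phi(\gamma) + \phi(-\gamma) > B$ in Part~\ref{thm:calibration-condition:robust-0/1}) serve as a positive floor uniform in $\eta$. The technical heart is tracking which of the three branches of~\eqref{eq:inner-calibration-function} is active for each $(\epsilon,\eta)$ and exploiting the quasiconcave-even structure to collapse interior minimizations over $[-\gamma,\gamma]$ and over half-intervals of $[-1,1]$ into boundary evaluations at $\alpha = \pm\gamma$ and $\alpha = \pm 1$, from which the global bound on $\delta(\epsilon)$ assembles.
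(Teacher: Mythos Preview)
Your overall strategy---working directly with the three branches of Lemma~\ref{lem:mer-calibration-function} and the even/odd decomposition $\calC_\phi(\alpha,\eta) = E(\alpha) + (2\eta-1)O(\alpha)$---is close in spirit to the paper's, which instead passes through the reformulations in Lemma~\ref{lem:ccr-property} and then invokes structural facts about $\calC_\phi$ collected in Lemma~\ref{lem:ccr-qccv}. Your computation of $\calC_{\phi,\calFlin}^*(\eta)$ is correct: on $[-1,1]$ both $E$ and $(2\eta-1)O$ are simultaneously minimized at $\alpha = \sign(2\eta-1)$, so their sum is too.

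There is, however, a genuine gap in the step where you claim that on the ``wrong sign'' half-interval (say $[-1,0]$ for $\eta > \tfrac12$), ``the monotonicity of $E$ and $O$ thereon reduces the minimization \ldots\ to the smaller of the boundary excesses.'' On $[-1,0]$, $E$ is \emph{nondecreasing} while $(2\eta-1)O$ is \emph{nonincreasing}; their minimizers sit at opposite endpoints, so neither monotonicity nor quasiconcavity of $E$ alone forces the infimum of $\calC_\phi(\cdot,\eta)$ to a boundary point. The paper handles exactly this by proving (Lemma~\ref{lem:ccr-qccv}, part~\ref{lem:ccr-qccv:qccv}) that $\calC_\phi(\cdot,\eta)$ is itself quasiconcave for every $\eta$, via a subderivative/quasimonotonicity argument that is not a one-liner; part~\ref{lem:ccr-qccv:ccr-inf} then gives the endpoint reduction you use.

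Your route can be repaired without that lemma, because calibration only needs a positive \emph{lower bound}, not the exact infimum. Writing $\Delta\calC_{\phi,\calFlin}(\alpha,\eta) = [E(\alpha)-E(1)] + (2\eta-1)[O(\alpha)-O(1)]$, both brackets are nonnegative on $[-1,0]$ for $\eta>\tfrac12$, and the second is at least $(2\eta-1)\tfrac{\phi(-1)-\phi(1)}{2}$ since $O(\alpha)\ge O(0)=0>O(1)$; in the third branch $|2\eta-1|\ge\epsilon$, so $\bar\delta(\epsilon,\eta)\ge \epsilon\,\tfrac{\phi(-1)-\phi(1)}{2}>0$. By contrast, on $[-\gamma,\gamma]$ your claim \emph{is} justified: there both $E(\alpha)-E(1)$ and $(2\eta-1)[O(\alpha)-O(1)]$ are minimized at $\alpha=\gamma$ (for $\eta>\tfrac12$), so the infimum is exactly $\Delta\calC_{\phi,\calFlin}(\gamma,\eta)$, and the condition $\phi(\gamma)+\phi(-\gamma)>B$ does the work. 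With these corrections your outline goes through; the paper's approach trades this ad hoc bounding for the cleaner but heavier quasiconcavity lemma.
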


\begin{proof} \textit{(Sketch of \ref{thm:calibration-condition:robust-0/1})}
  As in the proof sketch of Theorem~\ref{thm:no-convex-calibrated-surrogate},
  \eqref{eq:penalize-gamma-margin} is needed for $(\phi_\gamma,\calFlin)$-calibration, and thus
  $\phi(\alpha) + \phi(-\alpha)$ should take larger values in $|\alpha| \leq \gamma$ than in the rest of $\alpha$.
  Quasiconcavity of $\phi(\alpha) + \phi(-\alpha)$ naturally implies this property with a non-strict inequality,
  and the condition $\phi(\gamma) + \phi(-\gamma) > \phi(\alpha) + \phi(-\alpha)$ (for all $\alpha > \gamma$) ensures a strict inequality.
  Figure~\ref{fig:qccv-even-loss} illustrates this idea with the ramp loss.
\end{proof}

To the best our knowledge, this is the first characterization
of losses calibrated to $\phi_\gamma$.

\begin{remark}
For all $\alpha > \gamma$,
$\phi(\gamma) + \phi(-\gamma) \geq \phi(\alpha) + \phi(-\alpha)$ always holds when
$\phi$ is bounded, continuous, nonincreasing, and satisfies Assumption~\ref{assump:qccv-ccr} (see
part~\ref{lem:ccr-qccv:even-decreasing} of Lemma~\ref{lem:ccr-qccv} in
Appendix~\ref{sec:proof}). The strict inequality $\phi(\gamma) +
\phi(-\gamma) > \phi(\alpha) + \phi(-\alpha)$ is then necessary and sufficient for
$(\phi_\gamma,\calFlin)$-calibration.
\end{remark}

\begin{remark}
The ramp loss and the sigmoid loss are $(\ell_{01},\calFall)$-calibrated~\citep{Bartlett:2006,Charoenphakdee:2019}.
Note that these two losses are bounded, continuous, nonincreasing, and satisfy Assumption~\ref{assump:qccv-ccr},
hence $(\ell_{01}$,$\calFlin)$-calibrated.
\end{remark}

\begin{remark}
  After modifying the definition of the calibration (Definition~\ref{def:calibrated-loss}),
  the condition in Theorem~\ref{thm:calibration-condition} (part~\ref{thm:calibration-condition:robust-0/1}) has been changed from $\phi(\gamma) + \phi(-\gamma) > \phi(1) + \phi(-1)$ in the published version~\citep{Bao:2020b}.

  In parallel, Assumption~\ref{assump:qccv-ccr} is newly introduced,
  which is stronger than \emph{quasiconcave even} losses assumed in the published version: $C_{\phi}\left(\cdot, \frac{1}{2}\right)$ is quasiconcave.
  Indeed, quasiconcave even losses do not satisfy Assumption~\ref{assump:qccv-ccr} in general as was asserted by \citet[Lemma~13 (part~1)]{Bao:2020b}
  because Lemma~15 used in the proof has an error.
  A counterexample
  \begin{align*}
    \phi(\alpha) \define \frac{e^{-\alpha^2} + 1}{2} + \max\{-5, \min\{5, -x\}\} + \num{4.5}
  \end{align*}
  verifies this---$\calC_{\phi}(\cdot, \num{0.6})$ is not quasiconcave even though $C_{\phi}\left(\cdot, \frac{1}{2}\right)$ is (see Appendix~\ref{sec:additional-plots}).
  Nevertheless, natural monotone losses such as the ramp and sigmoid losses satisfy the stronger assumption.
\end{remark}

\section{Examples}
\label{sec:examples}

\begin{figure}[t]
  \centering
  \includegraphics[width=0.6\columnwidth]{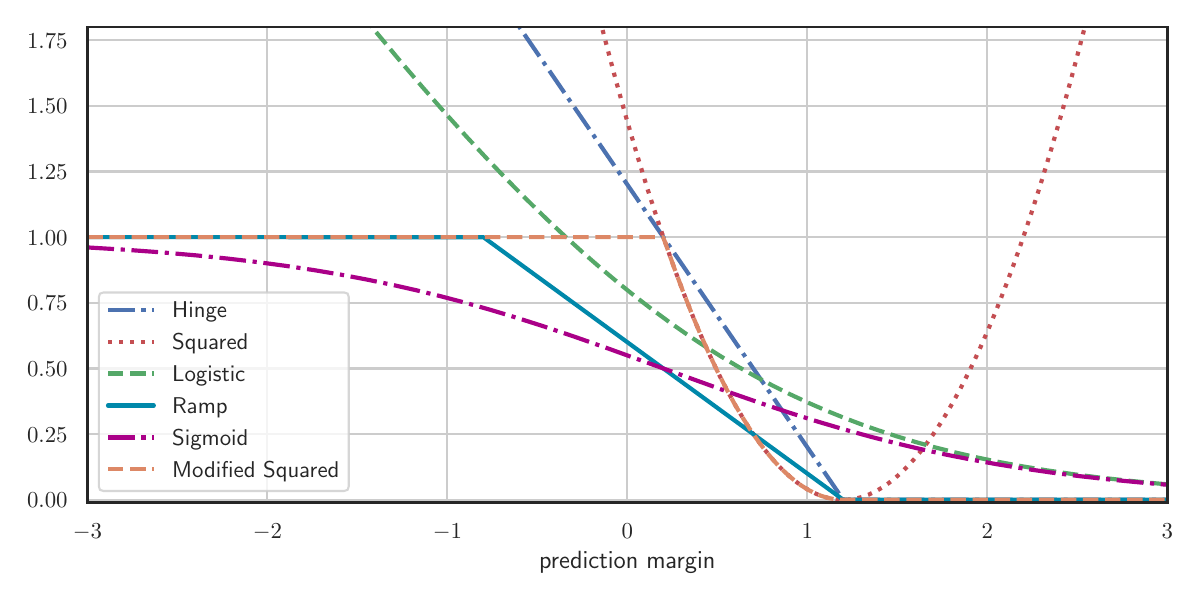}
  \caption{Surrogate losses. They are different from the traditional ones by horizontal translation of $+\beta$ ($\beta = 0.2$ here).}
  \label{fig:loss}
\end{figure}

Several examples of loss functions are shown in Figure~\ref{fig:loss}.
For each base surrogate $\phi$, we consider the shifted surrogate $\phi_\beta(\alpha) \define \phi(\alpha - \beta)$
with the horizontal shift parameter $\beta$.
The ramp, sigmoid, modified squared losses are examples of nonconvex losses satisfying Assumption~\ref{assump:qccv-ccr} when $\beta \geq 0$,
while the hinge, logistic, and squared losses are examples of convex losses.
We show $(\phi_\gamma,\calFlin)$-calibration functions in this subsection.\footnote{
  In this section, we call $\delta_\rho(\epsilon)$ defined in Lemma~\ref{lem:calibration-iff-condition} as the calibration function
  instead of $\bar\delta(\epsilon, \eta, x)$ with a slight abuse of terminology.
}
As a result, we will see that the ramp, sigmoid, and modified squared losses are calibrated with appropriate shift parameters.\footnote{
  After modifying the definition of calibration (Definition~\ref{def:calibrated-loss}),
  the admissible shift parameters slightly differ from the published version~\citep{Bao:2020b}.
  Specifically, the shift parameter range for the ramp loss has been changed from $0 < \beta <2$,
  and for the modified squared loss from $0 \leq \beta < 1$.
  It remains the same for the sigmoid loss.
}
Detailed derivations of the calibration functions
and the proofs of quasiconcavity are deferred to Appendix~\ref{sec:calibration-function}.

\subsection{Ramp Loss}

\begin{figure}[t]
  \centering
  \scriptsize
  \def\vargamma{0.2}
  \def\varrho{0.5}
  \def\varAz{(1+\vargamma+\varrho-\varbeta)}
  \def\varAo{(\varrho/2)}
  \def\vardeltaz{(\varrho/4)}
  \def\varepsz{(\vardeltaz/\varAz)}
  \subfigure[$0 \leq \beta < 1 - \gamma$][c]{
    \label{fig:calibration-function-ramp-loss:1}
    \def\varbeta{0.4}
    \def\varepsz{\varbeta/(2-\varbeta)}
    \includegraphics{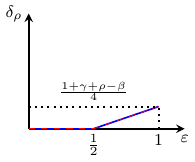}
  } \hfill
  \subfigure[$1 - \gamma \leq \beta < 1 + \gamma$][c]{
    \label{fig:calibration-function-ramp-loss:2}
    \def\varbeta{0.8}
    \includegraphics{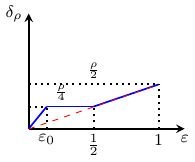}
  } \hfill
  \subfigure[$1 + \gamma \leq \beta < 1 + \gamma + \rho$][c]{
    \label{fig:calibration-function-ramp-loss:3}
    \def\varbeta{1.4}
    \includegraphics{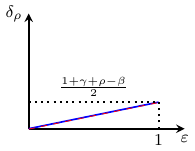}
  } \hfill
  \subfigure[$1+\gamma+\rho \leq \beta$][c]{
    \label{fig:calibration-function-ramp-loss:4}
    \includegraphics{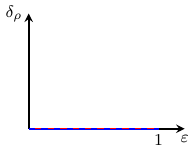}
  }
  \caption{
    The calibration function of the ramp loss.
    $\epsilon_0 \define \frac{\rho}{4(1+\gamma+\rho-\beta)}$.
    The dashed line is $\check\delta_\rho^{**}$.
  }
  \label{fig:calibration-function-ramp-loss}
\end{figure}

The ramp loss is $\phi(\alpha) = \min\left\{1, \max\left\{0, \frac{1-\alpha}{2}\right\}\right\}$.
We consider the shifted ramp loss: $\phi_\beta(\alpha) = \phi(\alpha - \beta) = \min\left\{1, \max\left\{0, \frac{1-\alpha+\beta}{2}\right\}\right\}$.
The $(\phi_\gamma,\calFlin)$-calibration function and its Fenchel-Legendre biconjugate of the ramp loss are plotted in Figure~\ref{fig:calibration-function-ramp-loss}.
We can see that the ramp loss is calibrated wrt ($\phi_\gamma$,$\calFlin$) when $1-\gamma < \beta < 1+\gamma$.
Since the ramp loss satisfies Assumption~\ref{assump:qccv-ccr} when $\beta \geq 0$,
we also observe that the ramp loss is not calibrated when $\beta = 0$
because it is symmetric loss~\citep{Charoenphakdee:2019}, that is, $\phi_0(\alpha) + \phi_0(-\alpha) = 1$ for all $\alpha \in \R$,
which does not satisfy the condition $\phi_0(\gamma) + \phi_0(-\gamma) > \phi_0(\alpha) + \phi_0(-\alpha)$ for all $\alpha \in (\gamma, 1]$ in Theorem~\ref{thm:calibration-condition}.

\subsection{Sigmoid Loss}

\begin{figure}[t]
  \centering
  \scriptsize
  \def\vargamma{0.2}
  \def\varrho{0.5}
  \def\varAz{(1/(1+exp(-\vargamma-\varrho-\varbeta))-1/(1+exp(\vargamma+\varrho-\varbeta)))}
  \def\varAo{(1/(1+exp(\vargamma-\varbeta))-1/(1+exp(-\vargamma-\varbeta))+1/(1+exp(-\vargamma-\varrho-\varbeta))-1/(1+exp(\vargamma+\varrho-\varbeta)))}
  \def\vardeltaz{(0.5*(1/(1+exp(\vargamma-\varbeta))+1/(1+exp(-\vargamma-\varbeta))-1/(1+exp(-\vargamma-\varrho-\varbeta))-1/(1+exp(\vargamma+\varrho-\varbeta))))}
  \def\varepsz{((\vardeltaz)/(\varAz))}
  \subfigure[$\beta = 0$][c]{
    \def\varbeta{0.0}
    \includegraphics{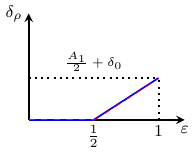}
  } \hfill
  \subfigure[$\beta = 1.0$][c]{
    \def\varbeta{1.0}
    \includegraphics{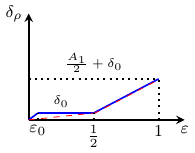}
  } \hfill
  \subfigure[$\beta = 2.0$][c]{
    \def\varbeta{2.0}
    \includegraphics{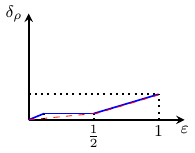}
  } \hfill
  \subfigure[$\beta = 3.0$][c]{
    \def\varbeta{3.0}
    \includegraphics{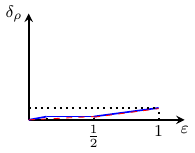}
  }
  \caption{
    The calibration function of the sigmoid loss.
    $A_0 \define \phi_\beta(-\gamma-\rho)-\phi_\beta(\gamma+\rho)$,
    $A_1 \define \phi_\beta(\gamma)-\phi_\beta(-\gamma)-\phi_\beta(\gamma+\rho)+\phi_\beta(-\gamma-\rho)$,
    $\delta_0 \define (\phi_\beta(\gamma)+\phi_\beta(-\gamma)-\phi_\beta(\gamma+\rho)-\phi_\beta(-\gamma-\rho))/2$,
    and $\epsilon_0 \define \frac{\delta_0}{A_0}$.
    The dashed line is $\check\delta_\rho^{**}$.
  }
  \label{fig:calibration-function-sigmoid-loss}
\end{figure}

The sigmoid loss is $\phi(\alpha) = \frac{1}{1 + e^{\alpha}}$.
We consider the shifted sigmoid loss: $\phi_{\beta}(\alpha) = \frac{1}{1 + e^{\alpha - \beta}}$ for $\beta > 0$.
The $(\phi_\gamma,\calFlin)$-calibration function is plotted in Figure~\ref{fig:calibration-function-sigmoid-loss}.
Thus, the sigmoid loss is $(\phi_\gamma,\calFlin)$-calibrated when $\delta_0 > 0$,
which is equivalent to $\beta > 0$.
Again, we observe that the sigmoid loss with $\beta = 0$ is not calibrated in the same way as the ramp loss
because it is symmetric.

\subsection{Modified Squared Loss}

\begin{figure}[t]
  \centering
  \scriptsize
  \def\vargamma{0.5}
  \def\varrho{0.2}
  \def\varAz{(\vargamma+\varrho-\varbeta)*(2+\varbeta-\vargamma-\varrho)}
  \def\varAo{(\varrho*(2+2*\varbeta-2*\vargamma-\varrho))}
  \def\vardeltaz{(\varAo)/2}
  \def\varepsz{(\vardeltaz)/(\varAz)}
  \subfigure[$\beta = 0$][c]{
    \label{fig:calibration-function-modified-squared-loss:0}
    \def\varbeta{0}
    \includegraphics{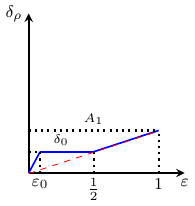}
  } \hfill
  \subfigure[$0 < \beta < \gamma$][c]{
    \label{fig:calibration-function-modified-squared-loss:1}
    \def\varbeta{0.2}
    \includegraphics{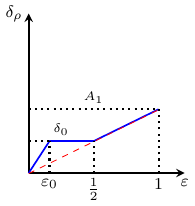}
  } \hfill
  \subfigure[$\gamma \leq \beta < \gamma + \rho$][c]{
    \label{fig:calibration-function-modified-squared-loss:2}
    \def\varbeta{0.6}
    \includegraphics{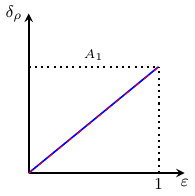}
  } \hfill
  \subfigure[$\gamma + \rho \leq \beta$][c]{
    \label{fig:calibration-function-modified-squared-loss:3}
    \includegraphics{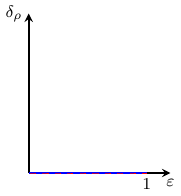}
  }
  \caption{
    The calibration function of the modified squared loss.
    The dashed line is $\check\delta_\rho^{**}$.
    $A_0 \define (\gamma+\rho-\beta)(2+\beta-\gamma-\rho)$,
    $A_1 \define \rho(2+2\beta-2\gamma-\rho)$,
    $\delta_0 \define \frac{A_1}{2}$, and
    $\epsilon_0 \define \frac{\delta_0}{A_0}$.
  }
  \label{fig:calibration-function-modified-squared-loss}
\end{figure}

We make a bounded monotone surrogate
$\phi(\alpha) = \mathrm{clip}_{[0,1]}(\max\{0, 1-\alpha\}^2)$
by modifying the squared loss,
where $\mathrm{clip}_{[a,b]}(\cdot)$ clips values outside the interval $[a,b]$,
and consider the shifted version $\phi_\beta(\alpha) \define \phi(\alpha - \beta)$.
The $(\phi_\gamma,\calFlin)$-calibration function and its Fenchel-Legendre biconjugate are plotted in Figure~\ref{fig:calibration-function-modified-squared-loss}.
We can deduce that the modified squared loss is calibrated wrt ($\phi_\gamma$,$\calFlin$) for all $0 \leq \beta \leq \gamma$.
In contrast to the proceeding examples, the modified squared loss is not symmetric.

\begin{figure}[t]
  \begin{minipage}[c]{0.6\columnwidth}
    \centering
    \scriptsize
    \def\varrho{0.2}
    \def\varAz{(\vargamma+\varrho-\varbeta)*(2+\varbeta-\vargamma-\varrho)}
    \def\varAo{(\varrho*(2+2*\varbeta-2*\vargamma-\varrho))}
    \def\vardeltaz{(\varAo)/2}
    \def\varepsz{(\vardeltaz)/(\varAz)}
    \subfigure[$\beta = -0.1$, $\gamma=0.2$]{
    \includegraphics{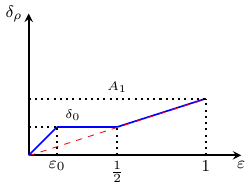}
    } \hfill
    \subfigure[$\beta = -0.2$, $\gamma = 0.2$]{
    \includegraphics{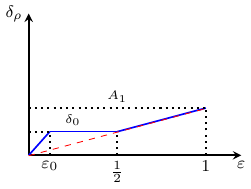}
    }
    \caption{
      The calibration function of the modified squared loss when $\beta<0$.
      $A_0$, $A_1$, $\delta_0$, $\epsilon_0$ are the same as in the caption of Figure~\ref{fig:calibration-function-modified-squared-loss}.
    }
    \label{fig:calibration-function-squared-loss:negative}
  \end{minipage} \hfill
  \begin{minipage}[c]{0.38\columnwidth}
    \centering
    \includegraphics{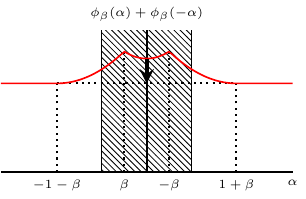}
    \caption{Illustration of $\phi_\beta(\alpha)+\phi_\beta(-\alpha)$ for the modified squared loss when $\gamma \leq 0.4$ and $-1-\gamma+\sqrt{1+2\gamma^2} < \beta < 0$. Here, $\beta = -0.2$ and $\gamma = 0.4$.}
    \label{fig:even-part-mod-squared-loss:negative}
  \end{minipage}
\end{figure}

Moreover, the modified squared loss is $(\phi_\gamma,\calFlin)$-calibrated
even if $\phi_\beta$ for $\beta < 0$ does not satisfy Assumption~\ref{assump:qccv-ccr}.\footnote{
  Indeed, its CCR is not necessarily quasiconcave. See Figure~\ref{fig:ccr-modified-squared-loss:non-qccv} in Appendix~\ref{sec:calibration-function:modified-squared}.
}
We plot two examples in Figure~\ref{fig:calibration-function-squared-loss:negative}.
As seen in the proof sketch of Theorem~\ref{thm:calibration-condition},
it is crucial that $\phi_\beta(\alpha) + \phi_\beta(-\alpha)$ takes higher values in $|\alpha| \leq \gamma$
than in $|\alpha| > \gamma$.
When $\gamma \leq \frac{2}{5}$,
the modified squared loss with $-1-\gamma+\sqrt{1+2\gamma^2} < \beta < 0$
satisfies this property (see Figure~\ref{fig:even-part-mod-squared-loss:negative}).

\subsection{Hinge Loss and Squared Loss}

\begin{figure}[t]
  \centering
  \scriptsize
  \subfigure[Hinge loss.]{
    \includegraphics{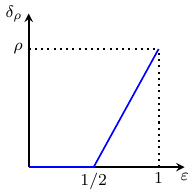}
    \label{fig:calibration-function-hinge-loss}
  }
  \subfigure[Squared loss.]{
    \includegraphics{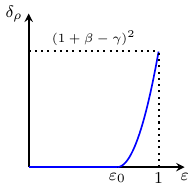}
    \label{fig:calibration-function-squared-loss}
  }
  \caption{
    The calibration functions of the hinge and squared loss.
    $\epsilon_0 \define \frac{1+\beta+\gamma}{2(1+\beta)}$.
  }
  \label{fig:calibration-function-convex-loss}
\end{figure}

Here we consider the shifted hinge loss $\phi_\beta(\alpha) = \max\{0, 1 - \alpha + \beta\}$,
and the shifted squared loss $\phi_\beta(\alpha) = (1 - \alpha + \beta)^2$
as examples of convex losses.
Their $(\phi_\gamma,\calFlin)$-calibration functions are shown in Figure~\ref{fig:calibration-function-convex-loss},
which tell us that the hinge and squared losses are not $(\phi_\gamma,\calFlin)$-calibrated.
This result aligns with Theorem~\ref{thm:no-convex-calibrated-surrogate}.

\section{Calibrated Losses under Low-noise Condition}
\label{sec:low-noise}

In Sections~\ref{sec:convex-surrogate} and \ref{sec:calibrated-surrogate},
we have seen that convex $\phi$ would not be $(\phi_\gamma,\calFlin)$-calibrated
while some nonconvex $\phi$ can be calibrated.
In this section, we will see that convex losses can be $(\phi_\gamma,\calFlin)$-calibrated
under a certain assumption on the conditional distribution.

\begin{assumption}
  \label{assump:massart}
  Let $\xi \in (0, 1)$.
  The conditional distribution satisfies $|2\P(Y=+1|X) - 1| \geq \xi$ almost surely.
\end{assumption}
This assumption is commonly known as Massart's noise condition and we sometimes refer to it as the $\xi$-Massart condition~\citep{Massart:2006}.
With the Massart condition, we further introduce a modified version of $(\phi_\gamma,\calFlin)$-calibrated losses and the calibration function.
\begin{definition}
  \label{def:massart-robustly-calibrated-loss}
  For the robust 0-1 loss $\phi_\gamma$ and a function class $\calF$,
  we say a loss $\phi: \R \to \R_{\geq 0}$ is \emph{$(\phi_\gamma,\calF)$-calibrated under $\xi$-Massart condition}
  if for any $\epsilon > 0$ and $\eta \in [0, 1]$ with $|2\eta-1| \geq \xi$,
  there exists $\delta > 0$ such that for all $f \in \calF$,
  we have
  \begin{align*}
    \calC_{\phi}(f, \eta, x) < \calC_{\phi,\calF}^*(\eta, x) + \delta
    \implies
    \calC_{\phi_\gamma}(f, \eta, x) < \calC_{\phi_\gamma,\calF}^*(\eta, x) + \epsilon.
  \end{align*}

  The corresponding $(\phi_\gamma,\calF)$-calibration function is defined as
  \begin{align*}
    \delta_{\xi}^{\mathrm{Massart}}(\epsilon) = \inf_{\substack{\eta \in [0, 1]\\|2\eta-1| \geq \xi}} \inf_{x \in \calX} \inf_{f \in \calF} \calC_{\phi}(f, \eta, x) - \calC_{\phi,\calF}^*(\eta, x)
    \quad \text{s.t.} \quad
    \calC_{\phi_\gamma}(f, \eta, x) - \calC_{\phi_\gamma,\calF}^*(\eta, x) \geq \epsilon.
  \end{align*}
\end{definition}
As seen in the case of $(\phi_\gamma,\calF)$-calibration (Proposition~\ref{prop:calibration-condition}),
it is necessary and sufficient to check $\delta_{\xi}^{\mathrm{Massart}}(\epsilon) > 0$ for all $\epsilon > 0$,
in order to check $(\phi_\gamma,\calF)$-calibration under $\xi$-Massart condition.

\begin{figure}[t]
  \centering
  \scriptsize
  \subfigure[Hinge loss ($\beta$ = 0).]{
    \includegraphics{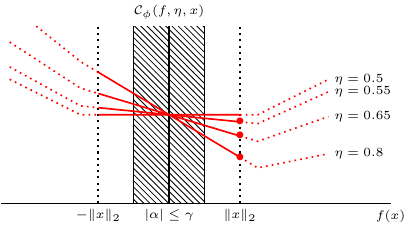}
    \label{fig:ccr-convex-loss:hinge}
  }
  \subfigure[Logistic loss.]{
    \includegraphics{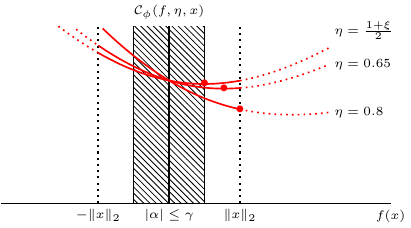}
    \label{fig:ccr-convex-loss:logistic}
  }
  \caption{
    $\phi$-CCR for the hinge and logistic loss with different $\eta$.
    The dots are minimizers of each line.
  }
  \label{fig:ccr-convex-loss}
\end{figure}

Then, we can obtain the positive result for convex losses under the Massart condition.
\begin{theorem}
  \label{thm:convex-loss-calibration}
  Under $\xi$-Massart condition,
  \begin{itemize}
    \item
    the shifted hinge loss $\phi(\alpha) = [1 - \alpha + \beta]_+$ with any shift $\beta \geq 0$ is $(\phi_\gamma,\calFlin)$-calibrated for any $\xi > 0$, and

    \item
    the logistic loss $\phi(\alpha) = \log(1 + e^{-\alpha})$ is $(\phi_\gamma,\calFlin)$-calibrated for $\xi > \tanh\left(\frac{\gamma}{2}\right)$.
  \end{itemize}
\end{theorem}

\begin{proof} \textit{(Sketch)}
  As we see in the proof sketches of Theorems~\ref{thm:no-convex-calibrated-surrogate} and \ref{thm:calibration-condition},
  the suboptimal predictions should be penalized strictly more than the optimal predictions.
  Under $\xi$-Massart condition,
  let us focus on predictions $f(x)$ for $\eta \geq \frac{1+\xi}{2}$.
  Since $f(x)$ spans $[-\xnorm, \xnorm]$ when $f \in \calFlin$ for a fixed $x$,
  the suboptimal predictions are obtained by the infimum of $\phi$-CCR in $f(x) \in [-\xnorm, \gamma]$ ($[-\xnorm, 0]$ is wrong predictions and $(0, \gamma]$ is predictions not robust),
  while the optimal predictions are obtained by the infimum in $f(x) \in [-\xnorm, \xnorm]$.
  Now take a look at Figure~\ref{fig:ccr-convex-loss}.
  Figure~\ref{fig:ccr-convex-loss:hinge} tells us
  that the optimal minimizers of the hinge loss is always $f(x) = \xnorm$ unless $\eta = \frac{1}{2}$.
  On the other hand, we can see that the optimal minimizers of the logistic loss satisfy $f(x) > \gamma$
  if $\eta > \frac{1+\xi}{2}$.
  They are strictly less penalized than the suboptimal minimizers
  thereby calibrated.
\end{proof}

Theorem~\ref{thm:convex-loss-calibration} shows
that surrogate losses could be $(\phi_\gamma,\calFlin)$-calibrated under the Massart condition
even if they are not calibrated for all distributions.

\begin{remark}
  \citet[Theorem~25]{Awasthi:2021} provide a sufficient condition for
  $(\ell_\gamma,\calFlin)$-consistency to hold for $(\phi_\gamma,\calFlin)$-calibrated surrogate loss.
  Their condition assumes $\calR_{\ell_\gamma,\calFlin}^* = 0$.
  Since $\calR_{\ell_{01},\calFlin} \leq \calR_{\ell_\gamma}$, this assumption immediately implies $\calR_{\ell_{01}}^* = 0$,
  which is equivalent to Assumption~\ref{assump:massart} with $\xi = 1$.
  Hence, convex losses lead to $(\ell_\gamma,\calFlin)$-consistency under the assumptions of \citet[Theorem~25]{Awasthi:2021}.
\end{remark}

\section{Simulation}
\label{sec:simulation}

\begin{table}[t]
  \centering
  \small
  \caption{
    The simulation results of the $\gamma$-adversarially robust 0-1 loss with $\gamma = 0.1$ and $\beta = 0.5$.
    50 trials are conducted for each pair of a method and dataset.
    Standard errors (multiplied by $10^4$) are shown in parentheses.
    Bold-faces indicate outperforming methods, chosen by one-sided t-test with the significant level 5\%.
  }
  \label{tab:cropped}
  \begin{tabular}{lllll}
    \toprule
        {} &                 Ramp &             Sigmoid &                Hinge &    Logistic \\
    \midrule
    0 vs 1 &            0.034 (3) &  \textbf{0.017 (2)} &           0.087 (12) &  0.321 (19) \\
    0 vs 2 &   \textbf{0.111 (7)} &          0.133 (10) &   \textbf{0.109 (8)} &  0.281 (19) \\
    0 vs 3 &   \textbf{0.107 (7)} &           0.126 (8) &            0.120 (9) &  0.307 (18) \\
    0 vs 4 &   \textbf{0.069 (6)} &          0.093 (12) &            0.072 (7) &  0.269 (21) \\
    0 vs 5 &  \textbf{0.233 (21)} &          0.340 (25) &  \textbf{0.233 (21)} &  0.269 (16) \\
    0 vs 6 &   \textbf{0.129 (8)} &          0.167 (13) &   \textbf{0.127 (8)} &  0.287 (22) \\
    0 vs 7 &   \textbf{0.067 (6)} &           0.073 (6) &            0.090 (9) &  0.302 (18) \\
    0 vs 8 &   \textbf{0.096 (7)} &          0.123 (12) &            0.100 (9) &  0.263 (20) \\
    0 vs 9 &   \textbf{0.082 (6)} &           0.101 (8) &            0.092 (8) &  0.279 (22) \\
    \bottomrule
  \end{tabular}
\end{table}

\begin{figure}[t]
  \centering
  \small
  \subfigure[Twonorm dataset ($\gamma=\num{0.1}, \beta=\num{0.2}$)]{
    \includegraphics[width=0.48\columnwidth]{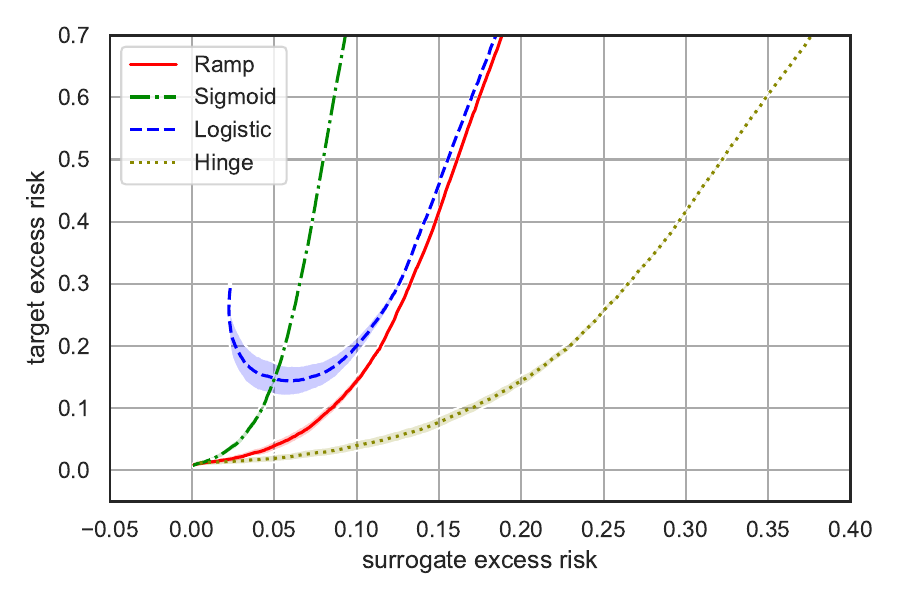}
    \label{fig:result:twonorm}
  }
  \subfigure[Advnorm dataset ($\gamma=\num{0.1}, \beta=\num{0.5}$)]{
    \includegraphics[width=0.48\columnwidth]{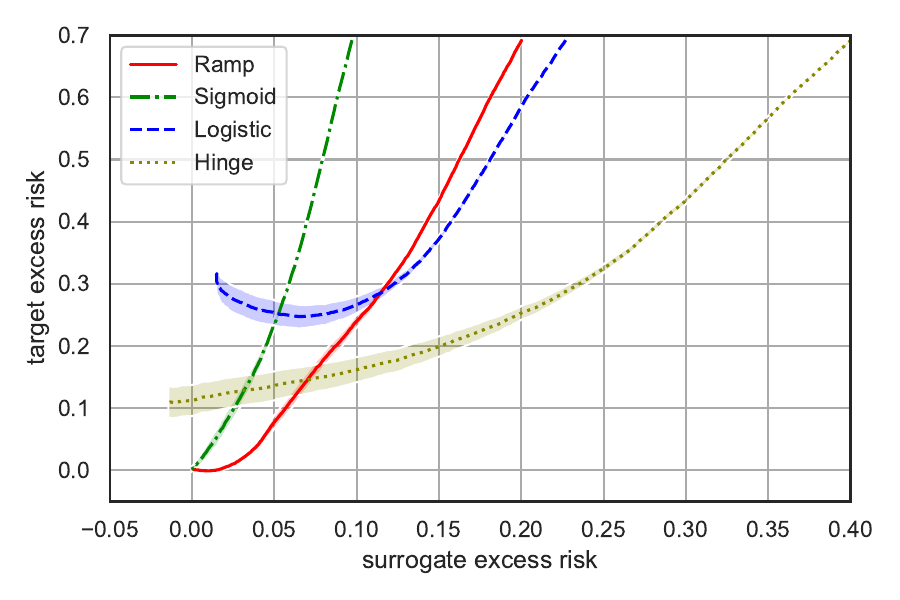}
    \label{fig:result:advnorm}
  }
  \caption{
    Optimization trajectories are shown.
    The horizontal (vertical, resp.) axis shows surrogate excess risk (excess risk of the robust 0-1 loss, resp.) on test data.
  }
  \label{fig:result}
\end{figure}

\begin{figure}[t]
  \centering
  \small
  \subfigure[Twonorm dataset]{
    \includegraphics[width=0.48\columnwidth]{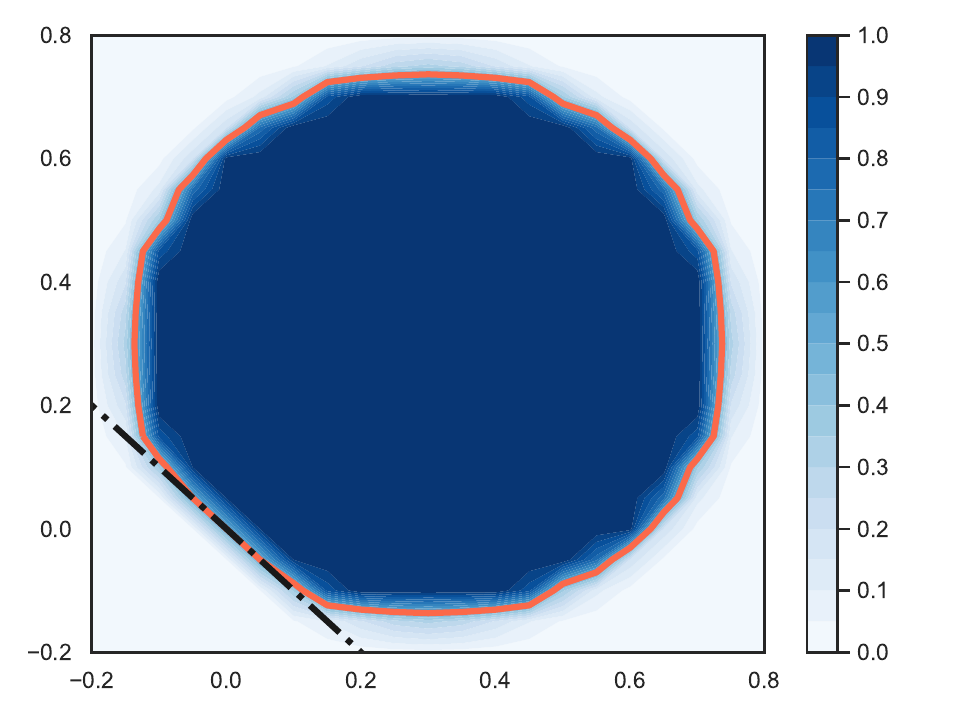}
    \label{fig:posterior:twonorm}
  }
  \subfigure[Advnorm dataset]{
    \includegraphics[width=0.48\columnwidth]{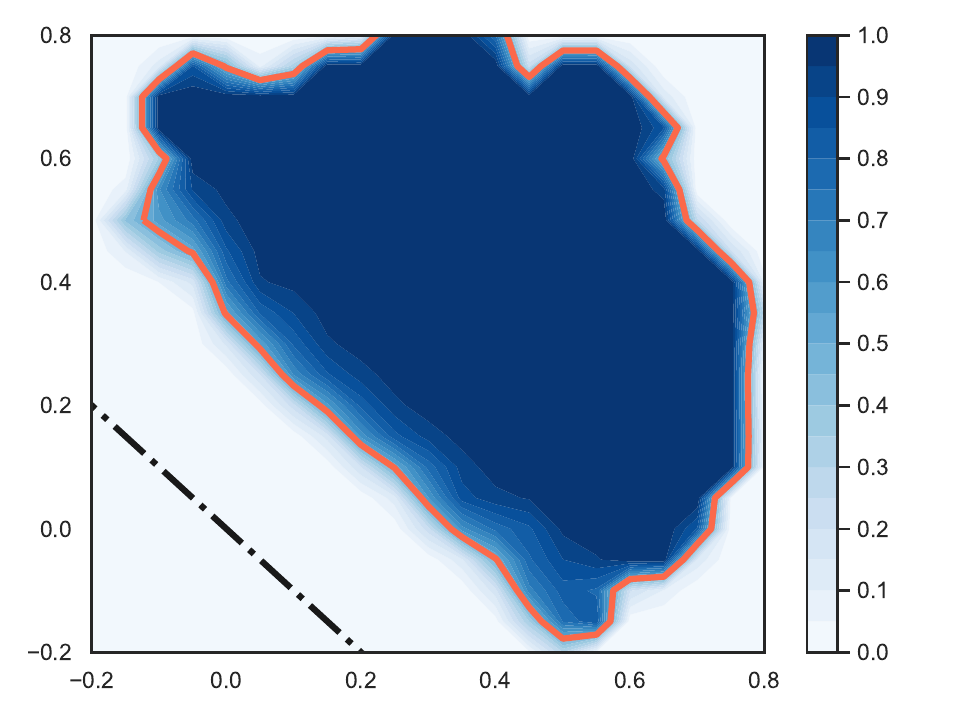}
    \label{fig:posterior:advnorm}
  }
  \caption{
    The estimated posterior distributions $\P(Y=+1|x)$ are plotted with the same scale.
    The estimation procedure is described in Appendix~\ref{sec:numerical-approximation-of-bayes-risk}.
    The black dashed line is the Bayes $(\phi_\gamma,\calFlin)$-classifier
    and the red solid line is the contour line for $\P(Y=+1|x) = \num{0.5}$.
    As can be seen, the posterior value changes more gradually around $\num{0.5}$ in Figure~\ref{fig:posterior:advnorm}.
  }
  \label{fig:posterior}
\end{figure}

\paragraph{Learning Curve on Synthetic Data.}

We use two synthetic datasets.
\begin{itemize}
  \item \textbf{Twonorm.}
  Positive data are generated from $\mathcal{N}([0.3\;0.3]^\top,0.1^2I_2)$
  and negative data are generated from $\mathcal{N}(-[0.3\;0.3]^\top,0.1^2I_2)$.
  The class ratio is $\num{0.5}$.
  All data points lie in the $\ell_2$ unit ball with high probability.
  The classifier $\theta = [1/\sqrt{2}\;1/\sqrt{2}]$ achieves $(\phi_\gamma,\calFlin)$-Bayes risk.

  \item \textbf{Advnorm.}
  First, clean positive data are generated from $\mathcal{N}([0.3\;0.3]^\top,0.142^2I_2)$
  and clean negative data are generated from $\mathcal{N}(-[0.3\;0.3]^\top,0.142^2I_2)$.
  Then, labels of $(x, y=+1)$ with $0 < x_1 + x_2 < 0.25$ are flipped to $y=-1$.
  All data points lie in the $\ell_2$ unit ball with high probability.
  The classifier $\theta = [1/\sqrt{2}\;1/\sqrt{2}]$ achieves $(\phi_\gamma,\calFlin)$-Bayes risk.
  This dataset is the same one as we use in the illustration of Figure~\ref{fig:twonorm}.
\end{itemize}
For each dataset, we generate $\num{500}$ training and $\num{500}$ test points.

Linear models $f(x) = \theta^\top x + \theta_0$ are used, where $\theta$ and $\theta_0$ are learnable parameters.
As surrogate losses, we use the ramp, sigmoid, logistic, and hinge losses.
Batch gradient descent with the fixed step size $0.01$ is used in optimization,
and $\num{3000}$ steps are run for each trial.
After every parameter update, the parameters are normalized to ensure $\|[\theta \; \theta_0]^\top\|_2 = 1$.

The robust 0-1 loss is used as the target loss.
The Bayes risk for each surrogate loss and the robust 0-1 loss is numerically computed,
which is used to compute the excess risk.
The detail of numerical approximation of the Bayes risks is explained in Appendix~\ref{sec:simulation-results}.
The surrogate and target excess risks are shown in Figure~\ref{fig:result}.
$\num{20}$ trials are run for each data realization.

As we can see from Figure~\ref{fig:result}, for both twonorm and advnorm, the
optimization trajectories of the calibrated surrogates (the ramp and sigmoid) have target excess risks tending to zero, while the logistic loss fails.
This observation agrees with our theoretical findings in Theorems~\ref{thm:no-convex-calibrated-surrogate} and \ref{thm:calibration-condition} for the logistic loss.
As for the hinge loss, we observe that it achieves the near-optimal target excess risk on twonorm.
This distribution does not satisfy Massart's condition for any $\xi > 0$, which suggests there might be a more general condition that guarantees calibration for certain convex losses.
For advnorm, which does not satisfy Massart's condition, hinge fails to converge to zero target excess risk, most likely because $\P(Y=+1|x)$ changes more smoothly around $\frac{1}{2}$ for advnorm compared to twonorm. (see Figure~\ref{fig:posterior}).

Note again that even if a surrogate loss $\phi$ is $(\phi_\gamma,\calFlin)$-calibrated,
it does not immediately imply $(\ell_\gamma,\calFlin)$-consistency as pointed out by \citet{Awasthi:2021}.
Nonetheless, nonconvex calibrated surrogate losses are useful in practice as illustrated above, and the hinge loss may also perform reasonably when there is not too much noise near the decision boundary.

\paragraph{Benchmark Data.}

We compare the ramp, sigmoid, hinge, and logistic losses on MNIST.
The results are shown in Table~\ref{tab:cropped},
where we see that nonconvex losses, especially the ramp loss, outperform convex losses
in terms of the robust 0-1 loss.
Details and full results appear in Appendix~\ref{sec:simulation-results}.

\section{Conclusion}
\label{sec:conclusion}

Calibration analysis was leveraged to analyze the adversarially robust 0-1 loss. Focusing on the class of linear classifiers, we found that no convex surrogate loss is calibrated wrt the adversarially robust 0-1 loss for general distributions. We also established necessary and sufficient conditions for a certain class of nonconvex surrogate losses to be calibrated wrt the adversarially robust 0-1 loss,
which includes shifted versions of the ramp and sigmoid losses.

\acks{
  HB was supported by JST ACT-I Grant Number JPMJPR18UI.
  CS was supported in part by NSF Grant Number 1838179.
  MS was supported by JST CREST Grant Number JPMJCR18A2.
}

\bibliography{bao}

\newpage
\appendix
\setcounter{footnote}{0}
\renewcommand{\thefootnote}{\roman{footnote}}
\begin{center}
  \fontsize{14.4pt}{20pt}
  \selectfont
  \rule[5pt]{\columnwidth}{1pt} \\
  \textbf{
    <<Appendix>> \\
    Calibrated Surrogate Losses for Adversarially Robust Classification
  } \\
  \rule[0pt]{\columnwidth}{1pt}
\end{center}

\section*{Index}

\begin{description}[itemsep=0pt,topsep=0pt,leftmargin=0pt,font=\normalfont]
  \item[Appendix~\ref{sec:convex-analysis}:] Overview of convex and quasiconvex analysis.
  \item[Appendix~\ref{sec:useful-lemmas}:] Technical lemmas used in Appendix~\ref{sec:proof}.
  \begin{description}[itemsep=0pt,topsep=0pt,itemindent=0pt,leftmargin=30pt,font=\normalfont]
    \item[Lemma~\ref{lem:ccr-property}:] Simplified necessary and sufficient conditions for calibrated losses.
    \item[Lemma~\ref{lem:ccr-qccv}:] Properties of class-conditional risk related to quasiconcavity.
  \end{description}
  \item[Appendix~\ref{sec:proof}:] Proofs.
  \begin{description}[itemsep=0pt,topsep=0pt,itemindent=0pt,leftmargin=30pt,font=\normalfont]
    \item[\S\ref{sec:proof:margin-error-rate}:] Simplification of the robust 0-1 loss $\ell_{\gamma}$ under $\calFlin$ (Proposition~\ref{prop:margin-error-rate}).
    \item[\S\ref{sec:proof:calibration-iff-condition}:] Necessary and sufficient condition of calibration with $\delta_\rho$ (Lemma~\ref{lem:calibration-iff-condition}).
    \item[\S\ref{sec:proof:mer-calibration-function}:] Detailed expression of $\delta_\rho$ (Lemma~\ref{lem:mer-calibration-function}).
    \item[\S\ref{sec:proof:no-convex-calibrated-surrogate}:] Nonexistence of convex calibrated losses (Theorem~\ref{thm:no-convex-calibrated-surrogate}).
    \item[\S\ref{sec:proof:calibration-condition}:] Characterization of calibrated losses via quasiconcavity (Theorem~\ref{thm:calibration-condition}).
    \item[\S\ref{sec:proof:convex-loss-calibration}:] Hinge and logistic losses are calibrated under Massart condition (Theorem~\ref{thm:convex-loss-calibration}).
  \end{description}
  \item[Appendix~\ref{sec:calibration-function}:] Derivation of calibration functions and analysis of quasiconcavity of each loss.
  \begin{description}[itemsep=0pt,topsep=0pt,itemindent=0pt,leftmargin=30pt,font=\normalfont]
    \item[\S\ref{sec:calibration-function:ramp}:] Ramp loss.
    \item[\S\ref{sec:calibration-function:sigmoid}:] Sigmoid loss.
    \item[\S\ref{sec:calibration-function:modified-squared}:] Modified squared loss.
    \item[\S\ref{sec:calibration-function:hinge}:] Hinge loss.
    \item[\S\ref{sec:calibration-function:squared}:] Squared loss.
  \end{description}
  \item[Appendix~\ref{sec:simulation-results}:] Simulation details and results.
  \item[Appendix~\ref{sec:additional-plots}:] A counterexample plot for the necessity of Assumption~\ref{assump:qccv-ccr}.
\end{description}

\section{Convex and Quasiconvex Analysis}
\label{sec:convex-analysis}

This section summarizes basic tools for convex and quasiconvex analysis.

\header{Quasiconvex function}
A function $h: S \to \R$ on a (finite-dimensional) vector space $S$ is said to be \emph{quasiconvex} if for all $x, y \in S$ and $\lambda \in [0, 1]$,
$h(\lambda x + (1 - \lambda) y) \leq \max\{h(x), h(y)\}$.
A function $h$ is said quasiconcave if $-h$ is quasiconvex:
For all $x, y \in S$ and $\lambda \in [0, 1]$, $h(\lambda x + (1 - \lambda)y) \geq \min\{h(x), h(y)\}$.
Intuitively, quasiconvexity relaxes convexity in that a function still preserves `unimodality'
though it loses definite curvature.
There is an equivalent definition (here we only show for quasiconcavity):
$h$ is quasiconcave if every superlevel set $\{x \mid h(x) \geq t\}$ for $t \in \R$ is a convex set~\citep{Boyd:2004}.

\header{Subderivative}
In order to analyze convexity and quasiconvexity, subderivative is a useful tool.
We adopt the Clarke definition of subderivative~\citep{Clarke:1990,Aussel:1994}.
Let $S^*$ be the dual space of $S$ and $\langle \cdot, \cdot \rangle$ be the dual pairing.\footnote{
  For two vector spaces $U$ and $V$ over the same field $F$
  and a bilinear map $\langle\cdot,\cdot,\rangle: U \times V \to F$,
  we say a triple $(U, V, \langle\cdot,\cdot,\rangle)$ is a dual pair
  if there exists $v \in V$ such that $\langle u, v \rangle \ne 0$ for all $u \in U$
  and there exists $u \in U$ such that $\langle u, v \rangle \ne 0$ for all $v \in V$.
  Here, $V$ is called a dual space of $V$,
  and $\langle\cdot,\cdot\rangle$ is called a dual pairing.
}
The (Clarke) subderivative of a lower semicontinuous function $h$ is the operator $\partial h: S \to S^*$ defined for each $x \in S$ such that
\begin{align}
  \partial h(x) \define \{x_* \in S^* \mid \langle x_*, x \rangle \leq h^\circ(x; v) \quad \forall v \in S \},
  \nonumber
\end{align}
where $h^\circ(x; v)$ is the Rockafellar directional derivative (see \citet{Clarke:1990} and \citet{Aussel:1994} for the formal definition).
When $h$ is locally Lipschitz at $x \in S$,
\citet{Clarke:1990} states that this is equivalent to $\partial h(x) = \conv\{ \lim \nabla f(x_i) \mid x_i \to x, x_i \not\in \Upsilon \cup \Omega_h \}$,
where $\conv$ is the convex hull, $\Upsilon$ is any set of measure zero, and $\Omega_h$ is the set of points where $h$ is non-differentiable.

\header{Properties of subderivative}
Several basic properties of subderivatives are shown in \citet[Section~2.3]{Clarke:1990} such as
$\partial(th)(x) = t\partial h(x) \define \{tx_* \mid x_* \in \partial h(x)\}$ (scalar multiples),
$\partial\left(\sum h_i\right)(x) \subseteq \sum \partial h_i(x) \define \left\{\sum x_{i,*} \mid x_{i,*} \in \partial h_i(x)\right\}$ (finite sums),
and $0 \in \partial h(x)$ if $h$ attains a local extrema at $x$.
When $h$ is locally Lipschitz, it clearly holds that $\partial h(x) = \{h'(x)\}$ if $h$ is differentiable at $x$.

\header{Operator monotonicity}
Convex smooth functions have monotonically nondecreasing derivatives.
This can be extended to non-smooth functions via subderivatives.
Let $h: S \to \R$ be a lower semicontinuous function.
Then $h$ is convex if and only if $\partial h: S \to S^*$ is a \emph{monotone} operator~\citep{Aussel:1994},
that is, $\langle y_* - x_*, y - x \rangle \geq 0$ for all $x, y \in \dom(h)$ and $x_* \in \partial h(x), y_* \in \partial h(y)$.
In addition, $h$ is quasiconvex if and only if $\partial h$ is a \emph{quasimonotone} operator~\citep{Aussel:1994},
that is, $\langle x_*, y - x \rangle > 0 \Longrightarrow \langle y_*, y - x \rangle \geq 0$ for all $x, y \in \dom(h)$ and $x_* \in \partial h(x), y_* \in \partial h(y)$.

\section{Useful Lemmas}
\label{sec:useful-lemmas}

The following lemmas are useful in the remaining proofs in Appendix~\ref{sec:proof}.
Their proofs appear in Appendices~\ref{sec:proof:ccr-property} and \ref{sec:proof:ccr-qccv}.

\begin{lemma}
  \label{lem:ccr-property}
  Let $\phi: \R \to \R_{\geq 0}$ be a margin-based loss function
  and $\calF = \calFlin$.
  \begin{enumerate}
    \item \label{lem:ccr-property:eta-symmetry}
    For all $f \in \calF$ and $x \in \calX$,
    $\calC_\phi(f, \eta, x)$ and $\Delta\calC_{\phi,\calF}(f, \eta, x)$ are symmetric about $\eta = \frac{1}{2}$,
    i.e., $\calC_\phi(f, \eta, x) = \calC_\phi(-f, 1 - \eta, x)$ and $\Delta\calC_{\phi,\calF}(f, \eta, x) = \Delta\calC_{\phi,\calF}(-f, 1 - \eta, x)$ for all $\eta \in [0, 1]$.

    \item \label{lem:ccr-property:alpha-symmetry}
    Fix $x \in \calX$.
    When $\eta = \frac{1}{2}$, we have
    \begin{align*}
      \inf_{f \in \calF: |f(x)| \leq \gamma} \Delta\calC_{\phi,\calF}\left(f, \tfrac{1}{2}, x\right)
      &= \inf_{f \in \calF: 0 \leq f(x) \leq \gamma} \Delta\calC_{\phi,\calF}\left(f, \tfrac{1}{2}, x\right) \\
      &= \inf_{f \in \calF: 0 \leq f(x) \leq \gamma} \calC_\phi\left(f, \tfrac{1}{2}, x\right) - \inf_{f \in \calF: f(x) \geq 0} \calC_\phi\left(f, \tfrac{1}{2}, x\right)
      .
    \end{align*}

    \item \label{lem:ccr-property:expanded-calibration-condition}
    A surrogate loss $\phi$ is calibrated wrt ($\phi_\gamma$,$\calF$) if and only if
    \begin{align*}
      \inf_{f \in \calF: |f(x)| \leq \gamma} \calC_\phi\left(f, \tfrac{1}{2}, x\right) &> \inf_{f \in \calF} \calC_\phi\left(f, \tfrac{1}{2}, x\right), \text{ and}\\
      \inf_{f \in \calF: f(x) \leq \gamma} \calC_\phi(f, \eta, x) &> \inf_{f \in \calF} \calC_\phi(f, \eta, x),
    \end{align*}
    for all $\eta \in \left(\tfrac{1}{2}, 1\right]$ and $x \in \calX$ such that $\xnorm > \gamma$.

    \item \label{lem:ccr-property:expanded-calibration-condition:0/1}
    A surrogate loss $\phi$ is calibrated wrt ($\ell_{01}$,$\calF$) if and only if
    \begin{align*}
      \inf_{f \in \calF: f(x) \leq 0} \calC_\phi(f, \eta, x) > \inf_{f \in \calF} \calC_\phi(f, \eta, x),
    \end{align*}
    for all $\eta \in \left(\frac{1}{2}, 1\right]$ and $x \in \calX \setminus \{0\}$.
  \end{enumerate}
\end{lemma}

Note that part~\ref{lem:ccr-property:expanded-calibration-condition:0/1} of Lemma~\ref{lem:ccr-property} can be regarded
as an equivalent condition of classification calibration~\citep[Definition~1]{Bartlett:2006} for $\calFlin$,
while \citet{Bartlett:2006} provides for $\calFall$.

\begin{lemma}
  \label{lem:ccr-qccv}
  Let $\phi: \R \to \R_{\geq 0}$ be a margin-based loss function.
  Let $\bar\calC_\phi(\alpha, \eta) \define \eta\phi(\alpha) + (1-\eta)\phi(-\alpha)$.
  If $\phi$ is bounded, continuous, non-increasing, and satisfies Assumption~\ref{assump:qccv-ccr},
  then
  \begin{enumerate}
    \item \label{lem:ccr-qccv:ccr-decreasing}
    for all $\eta \in \left(\frac{1}{2}, 1\right]$,
    $\bar\calC_\phi(\alpha, \eta)$ is nonincreasing in $\alpha$ when $\alpha \geq 0$.

    \item \label{lem:ccr-qccv:ccr-limit}
    for all $\eta \in \left(\frac{1}{2}, 1\right]$ and $\alpha > 0$,
    $\bar\calC_\phi(-\alpha, \eta) > \bar\calC_\phi(\alpha, \eta)$ if $\phi(-\alpha) > \phi(\alpha)$.

    \item \label{lem:ccr-qccv:even-decreasing}
    $\phi(\alpha) + \phi(-\alpha)$ is nonincreasing in $\alpha$ when $\alpha \geq 0$.

    \item \label{lem:ccr-qccv:ccr-inf}
    for $l, u \in \R$ ($l \leq u$),
    $\inf_{\alpha \in [l, u]} \bar\calC_\phi(\alpha, \eta) = \min\{\bar\calC_\phi(l, \eta), \bar\calC_\phi(u, \eta)\}$
    for all $\eta \in [0, 1]$.
  \end{enumerate}
\end{lemma}

\section{Deferred Proofs}
\label{sec:proof}

\subsection{Proof of Proposition~\ref{prop:margin-error-rate}}
\label{sec:proof:margin-error-rate}

\begin{proof}
  Fix $(x, y) \in \calX \times \calY$ and $f \in \calFlin$ associated with parameter $\theta \in \R^d$.
  Since we can prove the case $y = -1$ in the same manner,
  assume $y = +1$ below without loss of generality.

  We will check the existence of $\Delta_x \in B_2(\gamma)$
  such that $\theta^\top(x + \Delta_x) \leq 0$ and $x + \Delta_x \in \calX$,
  depending on the value $\theta^\top x$.
  If $\theta^\top x \leq 0$, the trivial choice $\Delta_x = 0$ satisfies $\theta^\top(x + \Delta_x) \leq 0$.

  If $0 < \theta^\top x \leq \gamma$, the choice $\Delta_x \define -(\theta^\top x)\theta$ satisfies them:
  $\|\Delta_x\|_2 = \theta^\top x \leq \gamma$ implies $\Delta_x \in B_2(\gamma)$,
  $\theta^\top(x + \Delta_x) = \theta^\top x - \theta^\top x = 0$,
  and $\|x + \Delta_x\|_2^2 = \|x\|^2 - \left(\theta^\top x\right)^2 \leq \xnorm^2 \leq 1$ implies $x + \Delta_x \in \calX$.

  If $\theta^\top x > \gamma$, we can check $\theta^\top(x + \Delta_x) > 0$ for any $\Delta_x \in B_2(\gamma)$.
  We consider the convex optimization problem
  $\min_{\Delta_x \in B_2(\gamma)} \theta^\top(x + \Delta_x)$.
  Consider the Lagrangian
  \begin{align*}
    \calL(\Delta_x, \mu) \define \theta^\top(x + \Delta_x) + \mu(\|\Delta_x\|_2 - \gamma),
  \end{align*}
  where $\mu \in \R$ is a KKT multiplier.
  Its KKT conditions are
  \begin{align*}
    \begin{cases}
      -\theta = \mu\frac{\Delta_x}{\|\Delta_x\|_2}, \\
      \|\Delta_x\|_2 \leq \gamma, \\
      \mu \geq 0, \\
      \mu(\|\Delta_x\|_2 - \gamma) = 0.
    \end{cases}
  \end{align*}
  The objective is minimized when the constraint $\|\Delta_x\|_2 \leq \gamma$ shall be activated,
  where the multiplier $\mu > 0$ and $\Delta_x = -\frac{\gamma}{\mu}\theta$,
  meaning that $\Delta_x$ is parallel to $\theta$ in the opposite direction.
  Hence, $\Delta_x = -\gamma\theta$ is the minimizer.
  We have $\theta^\top(x + \Delta_x) = \theta^\top x - \gamma > 0$
  with this minimizer $\Delta_x$.

  By combining the three cases, we have $\ell_\gamma(+1, x, f) = \ind{\theta^\top x \leq \gamma}$.
  \end{proof}

\subsection{Proof of Lemma~\ref{lem:calibration-iff-condition}}
\label{sec:proof:calibration-iff-condition}

\begin{proof}
  By Proposition~\ref{prop:calibration-condition},
  we need to show the following conditions are equivalent.
  \begin{enumerate}
    \renewcommand{\theenumi}{(\roman{enumi})}
    \renewcommand{\labelenumi}{\theenumi}
    \item \label{enum:calibration-iff:calibration-condition}
    For all $\epsilon > 0$, $\eta \in [0, 1]$, and $x \in \calX$, $\bar\delta(\epsilon, \eta, x) > 0$.

    \item \label{enum:calibration-iff:modified-condition}
    For all $\epsilon > 0$ and $\rho \in (0, 1-\gamma)$, $\delta_\rho(\epsilon) > 0$.
  \end{enumerate}

  From \eqref{eq:robust-01-inner-excess-risk} in the proof of Lemma~\ref{lem:mer-calibration-function},
  we have $\Delta\calC_{\phi_\gamma,\calFlin}(f, \eta, x) = 0$ for $x$ with $\xnorm \leq \gamma$.
  This means that the constraint $\Delta\calC_{\phi_\gamma,\calFlin}(f, \eta, x) \geq \epsilon$ in $\bar\delta$ would never be satisfied for $\epsilon > 0$,
  where the infimum value of $\bar\delta(\epsilon, \eta, x) = \infty$ for all $\epsilon > 0$, $\eta \in [0, 1]$.
  Note that
  \begin{align*}
    \delta_\rho(\epsilon)
    &= \inf_{\eta \in [0, 1]} \inf_{\xnorm \geq \gamma + \rho} \inf_{f \in \calFlin} \Delta\calC_{\phi,\calFlin}(f, \eta, x)
    \quad \text{s.t.} \quad
    \Delta\calC_{\phi_\gamma,\calFlin}(f, \eta, x) \geq \epsilon \\
    &= \inf_{\eta \in [0, 1]} \inf_{\xnorm \geq \gamma + \rho} \bar\delta(\epsilon, \eta, x).
  \end{align*}

  For \ref{enum:calibration-iff:calibration-condition} $\Rightarrow$ \ref{enum:calibration-iff:modified-condition},
  let $\tcalX_\rho \define \calX \setminus B_2^\circ(\gamma + \rho) = \{x \in \calX \mid \xnorm \geq \gamma + \rho\}$.
  For a fixed $\epsilon > 0$,
  the extreme value theorem states that $\delta_\rho(\epsilon) = \bar\delta(\epsilon, \eta_\epsilon, x_\epsilon)$
  for some $(\eta_\epsilon, x_\epsilon) \in [0,1] \times \tcalX_\rho$,
  by noting that $\bar\delta(\epsilon, \cdot, \cdot): [0, 1] \times \tcalX_\rho \to \R_{\geq 0}$ is continuous and its domain $[0, 1] \times \tcalX_\rho$ is compact.
  Indeed, $\bar\delta(\epsilon, \cdot, \cdot)$ is continuous
  because it is the infimum function of a continuous function over a compact set (see \eqref{eq:inner-calibration-function} in Lemma~\ref{lem:mer-calibration-function}).
  Eventually, we have $\delta_\rho(\epsilon) \geq \bar\delta(\epsilon, \eta_\epsilon, x_\epsilon) > 0$ by using \ref{enum:calibration-iff:calibration-condition}.

  Subsequently, we check \ref{enum:calibration-iff:modified-condition} $\Rightarrow$ \ref{enum:calibration-iff:calibration-condition}.
  The condition \ref{enum:calibration-iff:modified-condition} implies that
  $\bar\delta(\epsilon, \eta, x) \geq \delta_\rho(\epsilon) > 0$ for all $\epsilon > 0$, $\eta \in [0, 1]$, and $x \in \calX$ with $\xnorm > \gamma$.
  Together with $\bar\delta(\epsilon, \eta, x) = \infty$ for all $\epsilon > 0$, $\eta \in [0, 1]$, and $x \in \calX$ with $\xnorm \leq \gamma$,
  \ref{enum:calibration-iff:calibration-condition} is assured.
\end{proof}

\subsection{Proof of Lemma~\ref{lem:mer-calibration-function}}
\label{sec:proof:mer-calibration-function}

\begin{proof}
  We first simplify the constraint in the calibration function~\eqref{eq:calibration-function}.
  The $\phi_\gamma$-CCR for $f \in \calFlin$ at $x$ is
  \begin{align}
    \calC_{\phi_\gamma}(f, \eta, x)
    = \eta\ind{f(x) \leq \gamma} + (1 - \eta)\ind{f(x) \geq -\gamma}
    = \begin{cases}
      1 & \text{if $|f(x)| \leq \gamma$}, \\
      1 - \eta & \text{if $\gamma < f(x)$}, \\
      \eta & \text{if $f(x) < -\gamma$}.
    \end{cases}
    \label{eq:robust-01-ccr}
  \end{align}
  To compute the minimal ($\phi_\gamma$,$\calFlin$)-CCR,
  we divide into two cases.
  If $\xnorm \leq \gamma$, $\calC_{\phi_\gamma}(f, \eta, x) = 1$ for any $f \in \calFlin$
  because $|f(x)| \leq \gamma$.
  Thus, we have $\calC_{\phi_\gamma,\calFlin}^*(\eta, x) = 1$
  and $\Delta\calC_{\phi_\gamma,\calFlin}(f, \eta, x) = 0$.
  If $\xnorm > \gamma$, there exists $f \in \calFlin$ such that $\calC_{\phi_\gamma}(f, \eta, x) = \min\{\eta, 1 - \eta\}$.
  Thus, we have $\calC_{\phi_\gamma,\calFlin}^*(\eta, x) = \min\{\eta, 1 - \eta\}$.
  This implies that
  \begin{align*}
    \Delta\calC_{\phi_\gamma,\calFlin}(f, \eta, x) = \begin{cases}
      \max\{\eta, 1 - \eta\} & \text{if $|f(x)| \leq \gamma$}, \\
      |2\eta - 1| \cdot \ind{(2\eta - 1)f(x) \leq 0} & \text{if $\gamma < |f(x)|$}.
    \end{cases}
  \end{align*}
  Note that the latter case is obtained in the same manner as \citet[Proof of Theorem 3]{Bartlett:2006}.
  To sum it up, we obtain the expression of $\Delta\calC_{\phi_\gamma,\calFlin}$ as
  \begin{align}
    \Delta\calC_{\phi_\gamma,\calFlin}(f, \eta, x) = \begin{cases}
      0 & \text{if $\xnorm \leq \gamma$}, \\
      \max\{\eta, 1 - \eta\} & \text{if $\xnorm > \gamma$ and $|f(x)| \leq \gamma$}, \\
      |2\eta - 1| \cdot \ind{(2\eta-1)f(x) \leq 0} & \text{if $\xnorm > \gamma$ and $\gamma < |f(x)|$}.
    \end{cases}
    \label{eq:robust-01-inner-excess-risk}
  \end{align}

  Next, we simplify the infimum on $f$, $\inf_{f \in \calFlin}\{\Delta\calC_{\phi,\calFlin}(f, \eta, x) \mid \Delta\calC_{\phi_\gamma,\calFlin}(f, \eta, x) \geq \epsilon\} = \bar\delta(\epsilon, \eta, x)$ in \eqref{eq:calibration-function},
  for a fixed $\eta \in [0, 1]$ and $x \in \calX$.
  If $\xnorm \leq \gamma$ or $\epsilon > \max\{\eta, 1 - \eta\}$,
  no $f \in \calFlin$ achieves $\Delta\calC_{\phi_\gamma,\calFlin}(f, \eta, x) \geq \epsilon$,
  meaning that $\bar\delta(\epsilon, \eta, x) = \infty$.
  If $\xnorm > \gamma$ and $|2\eta - 1| < \epsilon \leq \max\{\eta, 1 - \eta\}$, $\Delta\calC_{\phi_\gamma,\calFlin}(f, \eta, x) \geq \epsilon$ is achieved when $|f(x)| \leq \gamma$.
  Hence, $\bar\delta(\epsilon, \eta, x) = \inf_f\{\Delta\calC_{\phi,\calFlin}(f, \eta, x) \mid |f(x)| \leq \gamma\}$.
  Note that $|2\eta - 1| \leq \max\{\eta, 1 - \eta\} = \frac{1 + |2\eta - 1|}{2}$ for all $\eta \in [0, 1]$.
  If $\xnorm > \gamma$ and $\epsilon \leq |2\eta - 1|$, $\Delta\calC_{\phi,\calFlin}(f, \eta, x) \geq \epsilon$ is achieved if either $|f(x)| \leq \gamma$ or $(2\eta - 1)f(x) \leq 0$ holds.
  Hence, $\bar\delta(\epsilon, \eta, x) = \inf_f\{\Delta\calC_{\phi,\calFlin}(f, \eta, x) \mid |f(x)| \leq \gamma \text{ or } (2\eta - 1)f(x) \leq 0\}$.
  These verify the statement of this lemma.
\end{proof}

\subsection{Proof of Theorem~\ref{thm:no-convex-calibrated-surrogate}}
\label{sec:proof:no-convex-calibrated-surrogate}

\begin{proof}
  Part~\ref{lem:ccr-property:expanded-calibration-condition} of Lemma~\ref{lem:ccr-property} states
  that $\phi$ is calibrated wrt ($\phi_\gamma$,$\calFlin$) if and only if
  \begin{align*}
    \inf_{f \in \calFlin: 0 \leq f(x) \leq \gamma} \calC_\phi\left(f, \tfrac{1}{2}, x\right) &> \inf_{f \in \calFlin: f(x) \geq 0} \calC_\phi\left(f, \tfrac{1}{2}, x\right)
    \quad \text{and} \\
    \inf_{f \in \calFlin: f(x) \leq \gamma} \calC_\phi(f, \eta, x) &> \inf_{f \in \calFlin: f(x) \geq 0} \calC_\phi(f, \eta, x)
    \quad \text{for any $\eta \in \left(\tfrac{1}{2}, 1\right]$},
  \end{align*}
  for all $x \in \calX$ such that $\xnorm > \gamma$.
  In order to show $\phi$ is not calibrated wrt ($\phi_\gamma$,$\calFlin$),
  it is sufficient to show the existence of $x \in \calX$ such that $\xnorm > \gamma$ and
  \begin{align*}
    & \inf_{f \in \calFlin: 0 \leq f(x) \leq \gamma} \calC_\phi\left(f, \tfrac{1}{2}, x\right) = \inf_{f \in \calFlin: f(x) \geq 0} \calC_\phi\left(f, \tfrac{1}{2}, x\right),
  \end{align*}
  which is equivalent to
  \begin{align}
    \inf_{f \in \calFlin: 0 \leq f(x) \leq \gamma} \phi(f(x)) + \phi(-f(x)) = \inf_{f \in \calFlin: f(x) \geq 0} \phi(f(x)) + \phi(-f(x)).
    \label{eq:sufficient-condition-for-no-convex-surrogate}
  \end{align}
  Since $\bar\phi(\alpha) \define \phi(\alpha) + \phi(-\alpha)$ is a convex even function,
  we have $\bar\phi(0) \leq \bar\phi(\alpha)$ for all $\alpha \in \R$.
  To see this, assume that there exists $\alpha_* \in \R$ such that $\alpha_* \ne 0$ and $\bar\phi(0) > \bar\phi(\alpha_*)$.
  Then, we also have $\bar\phi(-\alpha_*) < \bar\phi(0)$ since $\bar\phi$ is even.
  It follows that $\frac{1}{2}\{\bar\phi(-\alpha_*) + \bar\phi(\alpha_*)\} < \bar\phi(0)$.
  However, we have $\frac{1}{2}\{\bar\phi(-\alpha_*) + \bar\phi(\alpha_*)\} \geq \bar\phi\left(\frac{-\alpha_*+\alpha_*}{2}\right) = \bar\phi(0)$
  because of the convexity of $\bar\phi$.
  Hence, we see $\bar\phi(0) \leq \bar\phi(\alpha)$ for all $\alpha \in \R$.
  This means that $\inf_{0 \leq \alpha \leq \gamma} \bar\phi(\alpha) = \inf_{\alpha \in \calA: 0 \leq \alpha} \bar\phi(\alpha) = \bar\phi(0)$,
  where $\calA$ is any subset of the real line containing $0$.
  Note that for $x \in \calX$ such that $\xnorm > \gamma$,
  $f(x)$ ranges $[-\xnorm, \xnorm] \supseteq [-\gamma, \gamma]$ with $f \in \calFlin$.
  Therefore, for any choice of a fixed $x \in \calX$ such that $\xnorm > \gamma$,
  we have $\inf_{f \in \calFlin: 0 \leq f(x) \leq \gamma} \bar\phi(f(x)) = \inf_{0 \leq \alpha \leq \gamma} \bar\phi(\alpha)$
  and $\inf_{f \in \calFlin: f(x) \geq 0} \bar\phi(f(x)) = \inf_{0 \leq \alpha \leq \xnorm} \bar\phi(\alpha)$.
  This implies that the sufficient condition \eqref{eq:sufficient-condition-for-no-convex-surrogate} for nonexistence of convex surrogate losses holds.
\end{proof}

\subsection{Proof of Theorem~\ref{thm:calibration-condition}}
\label{sec:proof:calibration-condition}

Let $\bar\calC_\phi(\alpha, \eta) \define \eta\phi(\alpha) + (1-\eta)\phi(-\alpha)$.

\begin{proof}\textbf{of part~\ref{thm:calibration-condition:0/1}}
  By part~\ref{lem:ccr-property:expanded-calibration-condition:0/1} of Lemma~\ref{lem:ccr-property},
  $(\ell_{01},\calFlin)$-calibration is equivalent to
  \begin{align}
    \inf_{f \in \calFlin: f(x) \leq 0} \calC_\phi(f, \eta, x) > \inf_{f \in \calFlin} \calC_\phi(f, \eta, x)
    \quad \text{for all $\eta \in \left(\tfrac{1}{2}, 1\right]$ and $x \in \calX \setminus \{0\} $}.
    \label{eq:expanded-0/1-calibration-condition}
  \end{align}

  Fix an arbitrary $\eta$ such that $\frac{1}{2} < \eta \leq 1$ and $x \in \calX \setminus \{0\}$.
  We observe with part~\ref{lem:ccr-qccv:ccr-inf} of Lemma~\ref{lem:ccr-qccv} that
  \begin{align*}
    \inf_{f \in \calFlin: f(x) \leq 0} \calC_\phi(f, \eta, x)
    &= \inf_{\alpha \in [-\xnorm, 0]} \bar\calC_\phi(\alpha, \eta) \\
    &= \min\{\bar\calC_\phi(-\xnorm, \eta), \bar\calC_\phi(0, \eta)\}
    && \text{(part~\ref{lem:ccr-qccv:ccr-inf} of Lemma~\ref{lem:ccr-qccv})} \\
    &= \min\{\eta\overline{B} + (1-\eta)\underline{B}, \phi(0)\},
  \end{align*}
  and
  \begin{align*}
    \inf_{f \in \calFlin} \calC_\phi(f, \eta, x)
    &= \inf_{\alpha \in [-\xnorm, \xnorm]} \bar\calC_\phi(\alpha, \eta) \\
    &= \min\{\bar\calC_\phi(-\xnorm, \eta), \bar\calC_\phi(\xnorm, \eta)\}
    && \text{(part~\ref{lem:ccr-qccv:ccr-inf} of Lemma~\ref{lem:ccr-qccv})} \\
    &= \bar\calC_\phi(\xnorm, \eta)
    && \text{(part~\ref{lem:ccr-qccv:ccr-limit} of Lemma~\ref{lem:ccr-qccv})} \\
    &= \eta\underline{B} + (1 - \eta)\overline{B},
  \end{align*}
  where $\underline{B} \define \phi(\xnorm)$ and $\overline{B} \define \phi(-\xnorm)$.
  Note that $\overline{B} > \underline{B}$ since $\xnorm > 0$.
  Here,
  \begin{align*}
    \bar\calC_\phi(-\xnorm, \eta) - \bar\calC_\phi(\xnorm, \eta)
    &= (\overline{B} - \underline{B})(2\eta-1)
    > 0,
    \\
    \bar\calC_\phi(0, \eta) - \bar\calC_\phi(\xnorm, \eta)
    &= \phi(0)-\overline{B}+\eta(\overline{B}-\underline{B}) \\
    &\geq \frac{\overline{B} + \underline{B}}{2} - \overline{B} + \eta(\overline{B} - \underline{B}) \\
    &> \frac{\overline{B} + \underline{B}}{2} - \overline{B} + \frac{\overline{B} - \underline{B}}{2}
    && \text{($\overline{B} > \underline{B}$ and $\eta > \tfrac{1}{2}$)} \\
    &= 0,
  \end{align*}
  where the first inequality is shown by quasiconcavity of $\alpha \mapsto \phi(\alpha) + \phi(-\alpha)$.
  Indeed, by letting $F(\alpha) \define \phi(\alpha) + \phi(-\alpha)$,
  \begin{align*}
    2\phi(0) = F(0)
    = F\left(\frac{\xnorm}{2} + \frac{-\xnorm}{2}\right)
    \geq \min\{F(\xnorm), F(-\xnorm)\}
    = F(\xnorm)
    = \overline{B} + \underline{B}.
  \end{align*}
  Then, we have
  \begin{align*}
    \inf_{f \in \calFlin: f(x) \leq 0} & \calC_\phi(f, \eta, x) - \inf_{f \in \calFlin} \calC_\phi(f, \eta, x) \\
    &= \min\{
      \bar\calC_\phi(-\xnorm, \eta) - \bar\calC_\phi(\xnorm, \eta),
      \bar\calC_\phi(0, \eta) - \bar\calC_\phi(\xnorm, \eta)
    \} \\
    &> 0.
  \end{align*}
  This verifies the condition \eqref{eq:expanded-0/1-calibration-condition}.
\end{proof}

\begin{proof}\textbf{of part~\ref{thm:calibration-condition:robust-0/1}}
  $\phi$ is calibrated wrt ($\phi_\gamma$,$\calFlin$)
  if and only if
  \begin{align}
    \begin{aligned}
      & \text{(i)} \quad
      \inf\limits_{f \in \calFlin: |f(x)| \leq \gamma} \calC_\phi\left(f, \tfrac{1}{2}, x\right) &&> \inf\limits_{f \in \calFlin} \calC_\phi\left(f, \tfrac{1}{2}, x\right), \quad \text{and} \\
      & \text{(ii)} \quad
      \inf\limits_{f \in \calFlin: f(x) \leq \gamma} \calC_\phi(f, \eta, x) &&> \inf\limits_{f \in \calFlin} \calC_\phi(f, \eta, x)
      \quad \text{for all $\eta \in \left(\tfrac{1}{2}, 1\right]$}
    \end{aligned}
    \label{eq:expanded-calibration-condition}
  \end{align}
  for any $x \in \calX$ such that $\xnorm > \gamma$,
  by part~\ref{lem:ccr-property:expanded-calibration-condition} of Lemma~\ref{lem:ccr-property}.
  Now we show $\phi(\gamma) + \phi(-\gamma) > \phi(\alpha) + \phi(-\alpha)$ for any $\alpha \in (\gamma, 1]$,
  assuming (i) and (ii).
  For an arbitrary $\alpha \in (\gamma, 1]$,
  pick an $x$ such that $\xnorm = \alpha$,
  then $\{f(x) \mid f \in \calFlin\}$ ranges $[-\alpha, \alpha]$.
  \begin{align*}
    \phi(\gamma) + \phi(-\gamma)
    &= \inf_{0 \leq \alpha' \leq \gamma} \phi(\alpha') + \phi(-\alpha')
    && \text{(part~\ref{lem:ccr-qccv:even-decreasing} of Lemma~\ref{lem:ccr-qccv})} \\
    &= \inf_{f \in \calFlin: 0 \leq f(x) \leq \gamma} \phi(f(x)) + \phi(-f(x)) \\
    &= \inf_{f \in \calFlin: |f(x)| \leq \gamma} \phi(f(x)) + \phi(-f(x))
    && \text{($\phi(\alpha) + \phi(-\alpha)$ is even)} \\
    &> \inf_{f \in \calFlin} \phi(f(x)) + \phi(-f(x))
    && \text{((i) is used)} \\
    &= \inf_{-\xnorm \leq \alpha' \leq \xnorm} \phi(\alpha') + \phi(-\alpha') \\
    &= \inf_{0 \leq \alpha' \leq \alpha} \phi(\alpha') + \phi(-\alpha')
    && \text{($\phi(\alpha) + \phi(-\alpha)$ is even)} \\
    &= \phi(\alpha) + \phi(-\alpha).
    && \text{(part~\ref{lem:ccr-qccv:even-decreasing} of Lemma~\ref{lem:ccr-qccv})} \\
  \end{align*}

  Conversely, assume $\phi(\gamma) + \phi(-\gamma) > \phi(\alpha) + \phi(-\alpha)$ for any $\alpha \in (\gamma, 1]$.
  We will show (i) and (ii) in \eqref{eq:expanded-calibration-condition}.
  Fix an $x \in \calX$ such that $\xnorm > \gamma$ arbitrarily,
  then $\{f(x) \mid f \in \calFlin\}$ ranges $[-\xnorm, \xnorm] \supseteq [-\gamma, \gamma]$.
  Since $\phi(\alpha) + \phi(-\alpha)$ is nonincreasing in $\alpha \geq 0$ (part~\ref{lem:ccr-qccv:even-decreasing} of Lemma~\ref{lem:ccr-qccv}),
  we have
  \begin{align*}
    2\inf_{f \in \calFlin: |f(x)| \leq \gamma} \calC_\phi\left(f, \tfrac{1}{2}, x\right)
    &= \inf_{|\alpha| \leq \gamma} \phi(\alpha) + \phi(-\alpha)
    && \text{($f(x) \in [-\xnorm, \xnorm]$)} \\
    &= \inf_{0 \leq \alpha \leq \gamma} \phi(\alpha) + \phi(-\alpha)
    && \text{($\phi(\alpha) + \phi(-\alpha)$ is even)} \\
    &= \phi(\gamma) + \phi(-\gamma)
    && \text{(part~\ref{lem:ccr-qccv:even-decreasing} of Lemma~\ref{lem:ccr-qccv})} \\
    &> \phi(\xnorm) + \phi(-\xnorm)
    && \text{(by assumption)}\\
    &= \inf_{0 \leq \alpha \leq \xnorm} \phi(\alpha) + \phi(-\alpha),
    && \text{(part~\ref{lem:ccr-qccv:even-decreasing} of Lemma~\ref{lem:ccr-qccv})} \\
    &= \inf_{-\xnorm \leq \alpha \leq \xnorm} \phi(\alpha) + \phi(-\alpha)
    && \text{($\phi(\alpha) + \phi(-\alpha)$ is even)} \\
    &= 2\inf_{f \in \calFlin} \calC_\phi\left(f, \tfrac{1}{2}, x\right),
  \end{align*}
  which is equivalent to (i).
  For (ii), fix an $\eta$ such that $\frac{1}{2} < \eta \leq 1$.
  We first observe with parts~\ref{lem:ccr-qccv:ccr-limit} and \ref{lem:ccr-qccv:ccr-inf} of Lemma~\ref{lem:ccr-qccv} that
  \begin{align*}
    \inf_{-\xnorm \leq \alpha \leq \gamma} \bar\calC_\phi(\alpha, \eta)
    &= \min\left\{\bar\calC_\phi(-\xnorm, \eta), \bar\calC_\phi(\gamma, \eta)\right\}, \\
    \inf_{-\xnorm \leq \alpha \leq \xnorm} \bar\calC_\phi(\alpha, \eta)
    &= \min\left\{\bar\calC_\phi(-\xnorm, \eta), \bar\calC_\phi(\xnorm, \eta)\right\} = \bar\calC_\phi(\xnorm, \eta).
  \end{align*}
  Here, we have
  \begin{align*}
    \bar\calC_\phi(\xnorm, \eta) &= (\underline{B} - \overline{B})\eta + \overline{B}, \\
    \bar\calC_\phi(\gamma, \eta) &= (\phi(\gamma) - \phi(-\gamma))\eta + \phi(-\gamma),
  \end{align*}
  where $\underline{B} \define \phi(\xnorm)$ and $\overline{B} \define \phi(-\xnorm)$.
  Then, for all $\eta \in \left(\frac{1}{2}, 1\right]$,
  \begin{align*}
    \bar\calC_\phi(\gamma, \eta) - \bar\calC_\phi(\xnorm, \eta)
    &= (\phi(\gamma) - \phi(-\gamma) + \overline{B} - \underline{B})\eta + (\phi(-\gamma) - \overline{B}) \\
    &\geq (\phi(\gamma) - \phi(-\gamma) + \overline{B} - \underline{B}) \frac{1}{2} + \phi(-\gamma) - \overline{B} \\
    &= \frac{\{\phi(\gamma) + \phi(-\gamma)\} - \{\phi(\xnorm) + \phi(-\xnorm)\}}{2} \\
    &> 0,
  \end{align*}
  where the first inequality holds since $(\phi(\gamma) - \phi(-\gamma) + \overline{B} - \underline{B}) > 0$ and $\eta > \frac{1}{2}$,
  and the second inequality holds because of the assumption $\phi(\gamma) + \phi(-\gamma) > \phi(\alpha) + \phi(-\alpha)$ for any $\alpha \in (\gamma, 1]$.
  In addition, we have $\bar\calC_\phi(-\xnorm, \eta) > \bar\calC_\phi(\xnorm, \eta)$ for $\eta > \frac{1}{2}$
  by part~\ref{lem:ccr-qccv:ccr-limit} of Lemma~\ref{lem:ccr-qccv}.
  Therefore,
  \begin{align*}
    \inf_{f \in \calFlin: f(x) \leq \gamma} & \calC_\phi(f, \eta, x) - \inf_{f \in \calFlin} \calC_\phi(f, \eta, x) \\
    &= \inf_{-\xnorm \leq \alpha \leq \gamma} \bar\calC_\phi(\alpha, \eta) - \inf_{-\xnorm \leq \alpha \leq \xnorm} \bar\calC_\phi(\alpha, \eta) \\
    &= \min\{\bar\calC_\phi(-\xnorm,\eta) - \bar\calC_\phi(\xnorm,\eta), \bar\calC_\phi(\gamma,\eta) - \bar\calC_\phi(\xnorm,\eta)\} \\
    &> 0
  \end{align*}
  holds for all $\eta$ such that $\frac{1}{2} < \eta \leq 1$,
  and this verifies (ii).
\end{proof}

\subsection{Proof of Theorem~\ref{thm:convex-loss-calibration}}
\label{sec:proof:convex-loss-calibration}

\begin{proof}
  First, we derive the necessary and sufficient condition for $(\phi_\gamma,\calFlin)$-calibration under $\xi$-Massart condition.
  Let us introduce
  \begin{align*}
    \delta_{\rho,\xi}^{\mathrm{Massart}}(\epsilon)
    \define \inf_{\substack{\eta \in [0,1]\\|2\eta - 1| \geq \xi}} \inf_{x \in \tcalX_\rho} \inf_{f \in \calF} \calC_{\phi}(f, \eta, x) - \calC_{\phi,\calF}^*(\eta, x)
    \quad \text{s.t.} \quad
    \calC_{\phi_\gamma}(f, \eta, x) - \calC_{\phi_\gamma,\calF}^*(\eta, x) \geq \epsilon,
  \end{align*}
  where $\tcalX_\rho \define \calX \setminus B_2^\circ(\gamma + \rho)$.
  A loss function $\phi$ is $(\phi_\gamma,\calFlin)$-calibrated if and only if $\delta_{\xi}^{\mathrm{Massart}}(\epsilon) > 0$ for all $\epsilon > 0$.
  It is easy to see that $\phi$ is $(\phi_\gamma,\calFlin)$-calibrated if and only if $\delta_{\rho,\xi}^{\mathrm{Massart}}(\epsilon) > 0$ for all $\epsilon > 0$ and $\rho \in (0, 1-\gamma)$
  by the same argument as the proof of Lemma~\ref{lem:calibration-iff-condition}.
  By following the same argument as the proof of Lemma~\ref{lem:ccr-property} (part~\ref{lem:ccr-property:expanded-calibration-condition}),
  we claim that a surrogate $\phi$ is calibrated if and only if
  \begin{align*}
    \inf_{f \in \calFlin: f(x) \leq \gamma} \calC_{\phi}(f, \eta, x) > \inf_{f \in \calFlin} \calC_{\phi}(f, \eta, x)
  \end{align*}
  for all $\eta$ and $x$ such that $\eta \geq \frac{1+\xi}{2}$ and $\xnorm > \gamma$.\footnote{
    The proof of this argument is a routine given the proof of Lemma~\ref{lem:ccr-property} (part~\ref{lem:ccr-property:expanded-calibration-condition}),
    which is omitted.
  }
  Denote $\bar\calC_{\phi}(f(x), \eta) = \calC_{\phi}(f, \eta, x) = \eta\phi(f(x)) + (1-\eta)\phi(-f(x))$.
  Then, it is equivalent to
  \begin{align*}
    H(\eta, x) \define \inf_{\alpha \in [-\xnorm, \gamma]} \bar\calC_{\phi}(\alpha, \eta) - \inf_{\alpha \in [-\xnorm, \xnorm]} \bar\calC_{\phi}(\alpha, \eta) > 0
    \quad \text{for all $\eta \geq \tfrac{1+\xi}{2}$,}
  \end{align*}
  by noting that $f(x)$ spans $[-\xnorm, \xnorm]$ for $f \in \calFlin$.

  Next, we will check each loss function.

  \paragraph{Shifted hinge loss.}
  Since
  \begin{align*}
    \bar\calC_{\phi}(\alpha, \eta)
    = \begin{cases}
      -\eta\alpha + \eta(1+\beta) & \text{if $\alpha < -(1+\beta)$,} \\
      (1-2\eta)\alpha + (1+\beta) & \text{if $-(1+\beta) \leq \alpha < 1+\beta$,} \\
      (1-\eta)\alpha + (1-\eta)(1+\beta) & \text{if $1+\beta \leq \alpha$,}
    \end{cases}
  \end{align*}
  we have
  \begin{align*}
    \inf_{\alpha \in [-\xnorm, \gamma]} \bar\calC_{\phi}(\alpha, \eta) = \bar\calC_{\phi}(\gamma, \eta),
    \quad
    \inf_{\alpha \in [-\xnorm, \xnorm]} \bar\calC_{\phi}(\alpha, \eta) = \bar\calC_{\phi}(\xnorm, \eta).
  \end{align*}
  Hence,
  $H(\eta, x) = (1-2\eta)(\gamma-\xnorm) \geq \xi(\xnorm-\gamma) > 0$,
  implying that the hinge loss is $(\phi_\gamma,\calFlin)$-calibrated under any $\xi > 0$.

  \paragraph{Logistic loss.}
  The minimizer of
  \begin{align*}
    \bar\calC_{\phi}(\alpha, \eta)
    = \eta\log(1+e^{-\alpha}) + (1-\eta)\log(1+e^{\alpha})
  \end{align*}
  in $\alpha \in \R$ is $\alpha^*(\eta) = \ln\left(\frac{\eta}{1-\eta}\right)$.
  When $\eta \geq \frac{1+\xi}{2}$ with $\xi > \tanh\left(\frac{\gamma}{2}\right)$,
  we have $\alpha^*(\eta) > \gamma$.
  Since $\bar\calC_{\phi}(\alpha, \eta)$ is convex in $\alpha$,
  it is decreasing for $\alpha \leq \alpha^*(\eta)$.
  Hence,
  \begin{itemize}
    \item
    \underline{when $\alpha^*(\eta) \leq \xnorm$},
    $H(\eta, x) = \bar\calC_{\phi}(\gamma, \eta) - \bar\calC_{\phi}(\alpha^*(\eta), \eta) > 0$,
    and

    \item
    \underline{when $\gamma < \xnorm < \alpha^*(\eta)$},
    $H(\eta, x) = \bar\calC_{\phi}(\gamma, \eta) - \bar\calC_{\phi}(\xnorm, \eta) > 0$
    since $\gamma < \xnorm$.
  \end{itemize}
  Therefore, the logistic loss is $(\phi_\gamma,\calFlin)$-calibrated under $\xi > \tanh\left(\frac{\gamma}{2}\right)$.
\end{proof}

\subsection{Proof of Lemma~\ref{lem:ccr-property}}
\label{sec:proof:ccr-property}

\begin{proof}
  \textbf{Parts~\ref{lem:ccr-property:eta-symmetry} and \ref{lem:ccr-property:alpha-symmetry}} are obvious from the definition of the class-conditional $\phi$-risk.

  \header{Part~\ref{lem:ccr-property:expanded-calibration-condition}}
  Let $\bar\delta: \R_{\geq 0} \times [0, 1] \times \calX \to \R_{\geq 0}$ be the $(\phi_\gamma,\calFlin)$-calibration function,
  whose expression is given in Proposition~\ref{lem:mer-calibration-function}:
  for $x \in \calX$ such that $\xnorm > \gamma$,
  \begin{align*}
    \bar\delta(\epsilon, \eta, x) = \begin{cases}
      \infty & \text{if $\epsilon > \max\{\eta, 1 - \eta\}$}, \\
      \inf\limits_{f \in \calFlin: |f(x)| \leq \gamma} \Delta\calC_{\phi,\calFlin}(f, \eta, x) & \text{if $|2\eta - 1| < \epsilon \leq \max\{\eta, 1 - \eta\}$}, \\
      \inf\limits_{f \in \calFlin: |f(x)| \leq \gamma \text{ or } (2\eta - 1)f(x) \leq 0} \Delta\calC_{\phi,\calFlin}(f, \eta, x) & \text{if $\epsilon \leq |2\eta - 1|$},
    \end{cases}
  \end{align*}
  and $\bar\delta(\epsilon, \eta, x) = \infty$ for $\xnorm \leq \gamma$.
  Proposition~\ref{prop:calibration-condition} states that $\phi$ is $(\phi_\gamma,\calFlin)$-calibrated
  if and only if $\bar\delta(\epsilon, \eta, x) > 0$
  for all $\epsilon > 0$, $\eta \in [0,1]$, and $x \in \calX$ with $\xnorm > \gamma$.
  We subsequently fix $\xnorm > \gamma$
  and simplify the third expression $\inf_f \{ \Delta\calC_{\phi,\calFlin}(f, \eta, x) \mid f \in \calFlin, |f(x)| \leq \gamma \text{ or } (2\eta - 1)f(x) \leq 0 \}$ first.
  Using part~\ref{lem:ccr-property:eta-symmetry} of Lemma~\ref{lem:ccr-property} and the symmetry of $\calFlin$,
  since we have for $\eta \leq \frac{1}{2}$,
  \begin{align*}
    \inf_{f \in \calFlin: |f(x)|\leq \gamma \text{ or } (2\eta-1)f(x) \leq 0} \calC_\phi(f, \eta, x)
    &= \inf_{f \in \calFlin: f(x) \geq -\gamma} \calC_\phi(f, \eta, x) \\
    &= \inf_{f \in \calFlin: f(x) \geq -\gamma} \calC_\phi(-f, 1-\eta, x)
    && \text{(part~\ref{lem:ccr-property:eta-symmetry} of Lemma~\ref{lem:ccr-property})} \\
    &= \inf_{f \in \calFlin: f(x) \leq \gamma} \calC_\phi(f, 1-\eta, x),
    && \text{(replace $-f$ with $f$)}
  \end{align*}
  and for $\eta \geq \frac{1}{2}$,
  \begin{align*}
    \inf_{f \in \calFlin: |f(x)|\leq \gamma \text{ or } (2\eta-1)f(x) \leq 0} \calC_\phi(f, \eta, x)
    &= \inf_{f \in \calFlin: f(x) \leq \gamma} \calC_\phi(f, \eta, x).
  \end{align*}
  By combining these two, we see that $\inf_{f \in \calFlin: f(x) \leq \gamma} \Delta\calC_{\phi,\calFlin}(f, \eta, x) > 0$ holds for all $\eta \geq \frac{1}{2}$
  if and only if
  $\inf_f \{ \Delta\calC_{\phi,\calFlin}(f, \eta, x) > 0 \mid f \in \calFlin, |f(x)|\leq \gamma \text{ or } (2\eta-1)f(x) \leq 0 \}$
  holds for all $\eta \in [0, 1]$.
  Hence,
  \begin{align*}
    \inf_{f \in \calFlin: |f(x)|\leq \gamma \text{ or } (2\eta-1)f(x) \leq 0} \Delta\calC_{\phi,\calFlin}(f, \eta, x) > 0
  \end{align*}
  for $\epsilon > 0$ and $\eta \in [0, 1]$ such that $\epsilon \leq |2\eta-1|$
  if and only if
  \begin{align*}
    \inf_{f \in \calFlin: f(x) \leq \gamma} \Delta\calC_{\phi,\calFlin}(f, \eta, x) > 0
  \end{align*}
  for $\epsilon > 0$ and $\eta \in [\frac{1}{2}, 1]$ such that $\epsilon \leq 2\eta-1$.

  Note that the second expression $\inf_f \{ \Delta\calC_{\phi,\calFlin}(f, \eta, x) \mid f \in \calFlin, |f(x)| \leq \gamma \}$ can be simplified in the same way.
  Therefore, $\bar\delta(\epsilon, \eta, x) > 0$ for all $\epsilon > 0$, $\eta \in [0,1]$, and $x \in \calX$
  if and only if
  \begin{align*}
    \begin{cases}
      \inf\limits_{f \in \calFlin: |f(x)| \leq \gamma} \calC_\phi(f, \eta, x) > \inf\limits_{f \in \calFlin} \calC_\phi(f, \eta, x) & \text{for all $\eta \geq \frac{1}{2}$ such that $2\eta-1 < \epsilon \leq \eta$}, \\
      \inf\limits_{f \in \calFlin: f(x) \leq \gamma} \calC_\phi(f, \eta, x) > \inf\limits_{f \in \calFlin} \calC_\phi(f, \eta, x) & \text{for all $\eta \geq \frac{1}{2}$ such that $\epsilon \leq 2\eta-1$},
    \end{cases}
  \end{align*}
  for all $\epsilon > 0$, which is equivalent to
  \begin{align*}
    \begin{cases}
      \inf\limits_{f \in \calFlin: |f(x)| \leq \gamma} \calC_\phi(f, \eta, x) > \inf\limits_{f \in \calFlin} \calC_\phi(f, \eta, x) & \text{for all $\eta \geq \frac{1}{2}$ such that $\epsilon \leq \eta < \frac{1 + \epsilon}{2}$}, \\
      \inf\limits_{f \in \calFlin: f(x) \leq \gamma} \calC_\phi(f, \eta, x) > \inf\limits_{f \in \calFlin} \calC_\phi(f, \eta, x) & \text{for all $\eta \geq \frac{1}{2}$ such that $\frac{1 + \epsilon}{2} \leq \eta \leq 1$},
    \end{cases}
  \end{align*}
  for all $\epsilon > 0$.

  We immediately observe that
  \begin{align*}
    \bigcup_{\epsilon > 0} \Set{\eta \geq \frac{1}{2} | \varepsilon \leq \eta < \frac{1+\varepsilon}{2}}
    &= \Set{\frac{1}{2} \leq \eta \leq 1}, \; \text{and}\\
    \bigcup_{\epsilon > 0} \Set{\eta \geq \frac{1}{2} | \frac{1+\varepsilon}{2} \leq \eta \leq 1}
    &= \Set{\frac{1}{2} < \eta \leq 1}.
  \end{align*}
  Therefore, we reduce the above conditions as
  \begin{align*}
    \begin{cases}
      \inf\limits_{f \in \calFlin: |f(x)| \leq \gamma} \calC_\phi(f, \eta, x) > \inf\limits_{f \in \calFlin} \calC_\phi(f, \eta, x) & \text{if $\frac{1}{2} \leq \eta \leq 1$}, \\
      \inf\limits_{f \in \calFlin: f(x) \leq \gamma} \calC_\phi(f, \eta, x) > \inf\limits_{f \in \calFlin} \calC_\phi(f, \eta, x) & \text{if $\frac{1}{2} < \eta \leq 1$}.
    \end{cases}
  \end{align*}
  Note that $\inf_{|f(x)| \leq \gamma} \calC_\phi(f, \eta, x) \geq \inf_{f \in \calFlin: f(x) \leq \gamma} \calC_\phi(f, \eta, x)$ (inequality is not strict) always holds for all $\eta$.
  Since the first case is included in the second case except when $\eta = \frac{1}{2}$,
  this is equivalent to
  \begin{align*}
    \inf\limits_{f \in \calFlin: |f(x)| \leq \gamma} \calC_\phi\left(f, \tfrac{1}{2}, x\right) &> \inf\limits_{f \in \calFlin} \calC_\phi\left(f, \tfrac{1}{2}, x\right),
    \; \text{and} \\
    \inf\limits_{f \in \calFlin: f(x) \leq \gamma} \calC_\phi(f, \eta, x) &> \inf\limits_{f \in \calFlin} \calC_\phi(f, \eta, x)
    \; \text{for $\eta \in \left(\tfrac{1}{2},1\right]$}.
  \end{align*}

  \header{Part~\ref{lem:ccr-property:expanded-calibration-condition:0/1}}
  First, we obtain the calibration function $\bar\delta$ wrt $(\ell_{01},\calFlin)$ as in Lemma~\ref{lem:mer-calibration-function}.
  The $\ell_{01}$-CCR for $f \in \calFlin$ at $x$ is
  \begin{align*}
    \calC_{\ell_{01}}(f, \eta, x)
    = \eta\ind{\sign(f(x)) = +1} + (1-\eta)\ind{\sign(f(x))=-1}
    = \begin{cases}
      \eta & \text{if $f(x) \geq 0$,} \\
      1 - \eta & \text{if $f(x) < 0$.}
    \end{cases}
  \end{align*}
  To compute $\Delta\calC_{\ell_{01},\calFlin}(f, \eta, x)$,
  note that given $x$, $f(x)$ ranges $[-\xnorm, \xnorm]$ for $f \in \calFlin$.
  When $\xnorm = 0$, $\calC_{\ell_{01}}(f, \eta, x) = \eta$ for all $f \in \calFlin$,
  implying $\Delta\calC_{\ell_{01},\calFlin}(f, \eta, x) = 0$.
  When $\xnorm > 0$, we have $\Delta\calC_{\ell_{01},\calFlin}(f, \eta, x) = |2\eta-1|\cdot\ind{(2\eta-1)f(x) \leq 0}$
  as in the same way as \citet[Lemma~4.1]{Steinwart:2007}.
  Hence,
  \begin{align*}
    \Delta\calC_{\ell_{01},\calFlin}(f, \eta, x) = \begin{cases}
      0 & \text{if $\xnorm = 0$,} \\
      |2\eta-1|\cdot\ind{(2\eta-1)f(x) \leq 0} & \text{if $\xnorm > 0$.}
    \end{cases}
  \end{align*}
  Now, if $\xnorm = 0$ or $\epsilon > |2\eta-1|$, we always have $\Delta\calC_{\ell_{01},\calFlin}(f, \eta, x) < \epsilon$
  hence $\bar\delta(\epsilon, \eta, x) = \infty$.
  If $\xnorm > 0$ and $\epsilon \leq |2\eta - 1|$,
  we have $\Delta\calC_{\ell_{01},\calFlin}(f, \eta, x) < \epsilon$ if and only if $(2\eta-1)f(x) > 0$.
  Therefore, we have
  \begin{align*}
    \bar\delta(\epsilon, \eta, x) = \begin{cases}
      \infty & \text{if $\xnorm = 0$ or $\epsilon > |2\eta - 1|$,} \\
      \inf_{f \in \calFlin: (2\eta-1)f(x) \leq 0} \Delta\calC_{\phi,\calFlin}(f, \eta, x) & \text{if $\xnorm > 0$ and $\epsilon \leq |2\eta - 1|$.}
    \end{cases}
  \end{align*}

  Next, by Proposition~\ref{prop:calibration-condition}, $\phi$ is $(\ell_{01},\calFlin)$-calibrated
  if and only if $\bar\delta(\epsilon, \eta, x) > 0$
  for all $\epsilon > 0$, $\eta \in [0, 1]$, and $x \in \calX$.
  In the same way as part~\ref{lem:ccr-property:expanded-calibration-condition} of Lemma~\ref{lem:ccr-property},
  this is equivalent to
  \begin{align*}
    \inf\limits_{f \in \calFlin: f(x) \leq 0} \calC_\phi(f, \eta, x) > \inf\limits_{f \in \calFlin} \calC_\phi(f, \eta, x)
    \quad \text{for all $\eta \geq \tfrac{1}{2}$ such that $\tfrac{1 + \epsilon}{2} \leq \eta \leq 1$},
  \end{align*}
  for all $\epsilon > 0$ and $x \in \calX \setminus \{0\}$, by using part~\ref{lem:ccr-property:eta-symmetry} of Lemma~\ref{lem:ccr-property} and symmetry of $\calFlin$.
  This is equivalent to the lemma statement.
\end{proof}

\subsection{Proof of Lemma~\ref{lem:ccr-qccv}}
\label{sec:proof:ccr-qccv}

Denote $\bar\phi(\alpha) \define \phi(\alpha) + \phi(-\alpha)$.

\begin{proof} \textit{(of Lemma~\ref{lem:ccr-qccv})}
  \header{Part~\ref{lem:ccr-qccv:ccr-decreasing}}
  Fix an $\eta \in \left(\tfrac{1}{2}, 1\right]$
  and $\alpha_1, \alpha_2 \geq 0$ such that $\alpha_1 < \alpha_2$.
  By the fact that $\phi$ is nonincreasing, we have
  \begin{align*}
    \phi(\alpha_1) - \phi(-\alpha_1) - \phi(\alpha_2) + \phi(-\alpha_2)
    &= (\phi(\alpha_1) - \phi(\alpha_2)) + (\phi(-\alpha_2) - \phi(-\alpha_1)) \\
    &\geq 0.
  \end{align*}
  Then,
  \begin{align*}
    \bar\calC_\phi(\alpha_1, \eta) - \bar\calC_\phi(\alpha_2, \eta)
    &= (\phi(\alpha_1) - \phi(-\alpha_1) - \phi(\alpha_2) + \phi(-\alpha_2))\eta + \phi(-\alpha_1) - \phi(-\alpha_2) \\
    &\geq (\phi(\alpha_1) - \phi(-\alpha_1) - \phi(\alpha_2) + \phi(-\alpha_2))\frac{1}{2} + \phi(-\alpha_1) - \phi(-\alpha_2) \\
    &= \frac{\phi(\alpha_1) + \phi(-\alpha_1) - \phi(\alpha_2) - \phi(-\alpha_2)}{2} \\
    &\geq 0,
  \end{align*}
  where the last inequality holds because $\phi(\alpha) + \phi(-\alpha)$ is nonincreasing
  when $\alpha \geq 0$ by part~\ref{lem:ccr-qccv:even-decreasing}.
  Therefore, $\bar\calC_\phi(\alpha, \eta)$ is nonincreasing in $\alpha \geq 0$.

  \header{Part~\ref{lem:ccr-qccv:ccr-limit}}
  Fix an $\eta \in \left(\tfrac{1}{2},1\right]$. Then,
  \begin{align*}
    \bar\calC_\phi(-\alpha, \eta) - \bar\calC_\phi(\alpha, \eta)
    = (2\eta-1)(\phi(-\alpha) - \phi(\alpha))
    > 0.
  \end{align*}

  \header{Part~\ref{lem:ccr-qccv:even-decreasing}}
  $\bar\phi$ is an even function, so it is symmetric in $\alpha = 0$.
  $\bar\phi$ is continuous because of continuity of $\phi$.
  Every quasiconcave continuous function is nondecreasing, or nonincreasing,
  or there is global maxima in its domain~\citep{Boyd:2004}.
  If $\bar\phi$ is either nondecreasing or nonincreasing in $\alpha$,
  it is a constant function in $\alpha$ and clearly nonincreasing in $\alpha \geq 0$.
  If $\bar\phi$ has global maxima,
  i.e., there is a point $\alpha_* \in \dom(\bar\phi)$ such that $\bar\phi$ is nondecreasing for $\alpha \leq \alpha_*$ and nonincreasing for $\alpha \geq \alpha_*$,
  it is still nonincreasing in $\alpha \geq 0$.
  This is clear when $\alpha_* \leq 0$.
  When $\alpha_* > 0$, $\bar\phi$ may only be a constant function in $\alpha \in [0, \alpha_*]$
  otherwise we have a point $\tilde\alpha \in [0, \alpha_*)$ such that $\bar\phi(\tilde\alpha) < \bar\phi(\alpha_*)$;
  hence $\bar\phi(\alpha_*) = \bar\phi(-\alpha_*)$ ($\define \bar\phi_*$) by the symmetry and $\bar\phi_0 \define \bar\phi(\tilde\alpha) < \bar\phi_*$,
  which means there is no convex superlevel sets for $\bar\phi$ within the range $(\bar\phi_0, \bar\phi_*)$.
  For example, pick $t \in (\bar\phi_0, \bar\phi_*)$ and consider $t$-superlevel set of $\bar\phi$.
  If $t$-superlevel set is convex, it must contain every point in $[-\alpha_*, \alpha_*]$ since $t < \bar\phi_* = \bar\phi(-\alpha_*) = \bar\phi(\alpha_*)$.
  However, $t$-superlevel set would not contain $\tilde\alpha \in [-\alpha_*, \alpha_*]$ since $t > \bar\phi_0 = \bar\phi(\tilde\alpha)$.
  This contradicts the quasiconcavity of $\bar\phi$.
  In any cases, $\bar\phi$ is nonincreasing in $\alpha \geq 0$.

  \header{Part~\ref{lem:ccr-qccv:ccr-inf}}
  This is an immediate consequence of the quasiconcavity and continuity of $\bar\calC_\phi(\alpha, \eta)$ (Assumption~\ref{assump:qccv-ccr}).
\end{proof}

\section{Derivation of Calibration Functions}
\label{sec:calibration-function}

In this section, we derive closed-forms of $(\phi_\gamma,\calFlin)$-calibration functions for several surrogate losses $\phi$
by minimizing $\bar\delta(\epsilon, \eta, x)$ in \eqref{eq:inner-calibration-function} wrt $\eta \in [0, 1]$ and $x \in \tcalX_\rho$,
or $\xnorm \geq \gamma + \rho$ in other words.
Let $\bar\calC_\phi(\alpha, \eta) \define \eta\phi(\alpha) + (1-\eta)\phi(-\alpha)$ to simplify notation.

\subsection{Ramp Loss}
\label{sec:calibration-function:ramp}

\begin{figure}[t]
  \centering
  \scriptsize
  \def\ylim{1.1}
  \def\vareta{0.75}
  \def\vargamma{0.3}
  \def\changepoints{{-1-\varbeta},{-1+\varbeta},{1-\varbeta},{1+\varbeta}}
  \subfigure[$0 \leq \beta < 1 - \gamma$][c]{
    \def\varbeta{0.3}
    \includegraphics{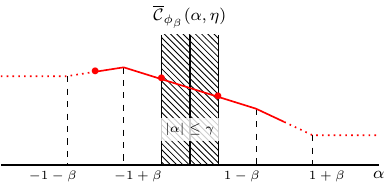}
  } \hfill
  \subfigure[$1 - \gamma \leq \beta < 1 + \gamma$][c]{
    \def\varbeta{0.8}
    \includegraphics{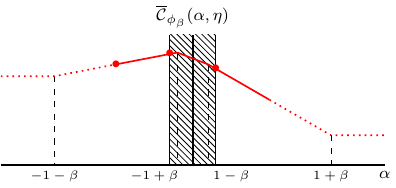}
  } \hfill
  \subfigure[$1 + \gamma \leq \beta < 2$][c]{
    \def\varbeta{1.5}
    \includegraphics{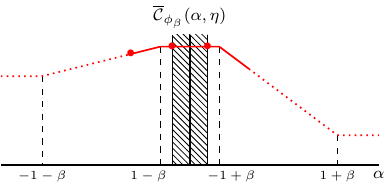}
  } \hfill
  \subfigure[$2 \leq \beta$][c]{
    \def\varbeta{2.2}
    \includegraphics{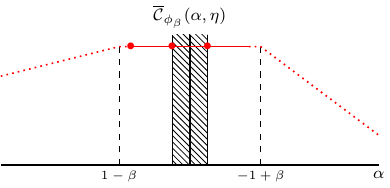}
  }
  \caption{The class-conditional risk for the ramp loss.}
  \label{fig:ccr-ramp-loss}
\end{figure}

The ramp loss is $\phi(\alpha) = \min\left\{1, \max\left\{0, \frac{1-\alpha}{2}\right\}\right\}$.
We consider the shifted ramp loss: $\phi_\beta(\alpha) = \phi(\alpha - \beta)$:
\begin{align}
  \phi_\beta(\alpha) = \begin{cases}
    1 & \text{if $\alpha \leq -1 + \beta$}, \\
    \frac{1 - \alpha + \beta}{2} & \text{if $-1 + \beta < \alpha \leq 1 + \beta$}, \\
    0 & \text{if $1 + \beta < \alpha$}.
  \end{cases}
  \nonumber
\end{align}
The $\phi_\beta$-CCR is plotted in Figure~\ref{fig:ccr-ramp-loss}.
We can confirm that $\bar\calC_{\phi_\beta}$ is quasiconcave with each $\beta \geq 0$.

\subsubsection{Minimal Inner Risk}

By part~\ref{lem:ccr-qccv:ccr-inf} of Lemma~\ref{lem:ccr-qccv}, it is easy to check
\begin{align*}
  \calC_{\phi_\beta,\calFlin}^*(\eta, x)
  &= \inf_{f \in \calFlin} \calC_{\phi_\beta}(f, \eta, x)
  = \inf_{\alpha \in [-\xnorm, \xnorm]} \bar\calC_{\phi_\beta}(\alpha, \eta) \\
  &= \min\{\bar\calC_{\phi_\beta}(-\xnorm, \eta), \bar\calC_{\phi_\beta}(\xnorm, \eta)\}.
\end{align*}

\subsubsection{Calibration Function}

We analyze $\phi_\beta$-CCR $\calC_{\phi_\beta}(f, \eta, x) = \eta\phi_\beta(f(x)) + (1 - \eta)\phi_\beta(-f(x)) = \bar\calC_{\phi_\beta}(f(x), \eta)$,
and restrict $\eta > \frac{1}{2}$ by virtue of the symmetry of $\calC_{\phi_\beta}$ (part~\ref{lem:ccr-property:eta-symmetry} in Lemma~\ref{lem:ccr-property}).
It is easy to see $\calC_{\phi_\beta,\calFlin}^*(\eta, x) = \bar\calC_{\phi_\beta}(\xnorm, \eta)$.
Subsequently, we divide into cases depending on the relationship among
$\bar\calC_{\phi_\beta}(-\xnorm, \eta)$, $\bar\calC_{\phi_\beta}(\gamma, \eta)$, and $\bar\calC_{\phi_\beta}(-\gamma, \eta)$.

\header{(A) When $0 \leq \beta < 1 - \gamma$}
\begin{align*}
  \bar\calC_{\phi_\beta}(-\xnorm, \eta) &= \begin{cases}
    \eta + \frac{1 - \xnorm + \beta}{2}(1-\eta) & \text{if $\gamma+\rho \leq \xnorm < 1-\beta$,} \\
    \frac{1 + \xnorm + \beta}{2}\eta + \frac{1 - \xnorm + \beta}{2}(1 - \eta) & \text{otherwise,}
  \end{cases} \\
  \bar\calC_{\phi_\beta}(\gamma, \eta) &= \frac{1-\gamma+\beta}{2}\eta + \frac{1+\gamma+\beta}{2}(1 - \eta), \\
  \bar\calC_{\phi_\beta}(-\gamma, \eta) &= \frac{1+\gamma+\beta}{2}\eta + \frac{1-\gamma+\beta}{2}(1 - \eta),
\end{align*}
from which it follows that $\bar\calC_{\phi_\beta}(-\gamma, \eta) - \bar\calC_{\phi_\beta}(\gamma, \eta) = \frac{\gamma}{2}(2\eta - 1) > 0$,
that is, $\bar\calC_{\phi_\beta}(-\gamma, \eta) > \bar\calC_{\phi_\beta}(\gamma, \eta)$ for all $\eta > \frac{1}{2}$.
In addition, since when $\gamma+\rho \leq \xnorm < 1-\beta$,
\begin{align*}
  \bar\calC_{\phi_\beta}(-\xnorm, \eta) - \bar\calC_{\phi_\beta}(\gamma, \eta)
  = (\gamma + \xnorm)\left(\eta - \frac{1}{2}\right),
\end{align*}
we have $\calC_{\phi_\beta}(\gamma, \eta) < \calC_{\phi_\beta}(-\xnorm, \eta)$ for such $\beta$ and $\xnorm$.
By part~\ref{lem:ccr-qccv:ccr-inf} in Lemma~\ref{lem:ccr-qccv},
it follows that
\begin{align*}
  \inf_{f \in \calFlin: |f(x)| \leq \gamma} \calC_{\phi_\beta}(f, \eta, x)
  &= \inf_{|\alpha| \leq \gamma} \bar\calC_{\phi_\beta}(\alpha, \eta)
  = \bar\calC_{\phi_\beta}(\gamma, \eta), \\
  \inf_{f \in \calFlin: f(x) \leq \gamma} \calC_{\phi_\beta}(f, \eta, x)
  &= \inf_{\alpha \in [-\xnorm, \gamma]} \bar\calC_{\phi_\beta}(\alpha, \eta)
  = \min\{\bar\calC_{\phi_\beta}(\gamma, \eta), \bar\calC_{\phi_\beta}(-\xnorm, \eta)\}.
\end{align*}
Thus, by Lemma~\ref{lem:mer-calibration-function},
\begin{align*}
  \bar\delta(\epsilon, \eta, x)
  &= \begin{cases}
    \infty & \text{if $\eta < \epsilon$,} \\
    \bar\calC_{\phi_\beta}(\gamma, \eta) - \bar\calC_{\phi_\beta}(\xnorm, \eta) & \text{if $\epsilon \leq \eta$ and $\bar\calC_{\phi_\beta}(-\xnorm, \eta) \geq \bar\calC_{\phi_\beta}(\gamma, \eta)$,} \\
    \bar\calC_{\phi_\beta}(-\xnorm, \eta) - \bar\calC_{\phi_\beta}(\xnorm, \eta) & \text{if $\epsilon \leq \eta$ and $\bar\calC_{\phi_\beta}(-\xnorm, \eta) < \bar\calC_{\phi_\beta}(\gamma, \eta)$,}
  \end{cases}
  \\
  &= \begin{cases}
    \infty & \text{if $\eta < \epsilon$,} \\
    \bar\calC_{\phi_\beta}(\gamma, \eta) - \bar\calC_{\phi_\beta}(\xnorm, \eta) & \text{if $\epsilon \leq \eta$ and $\bar\calC_{\phi_\beta}(-\xnorm, \eta) \geq \bar\calC_{\phi_\beta}(\gamma, \eta)$,} \\
    \frac{1+\xnorm-\beta}{2}\left(\eta-\frac{1}{2}\right) & \text{if $\epsilon \leq \eta$ and $\bar\calC_{\phi_\beta}(-\xnorm, \eta) < \bar\calC_{\phi_\beta}(\gamma, \eta)$,}
  \end{cases}
\end{align*}
where the last identity holds because $\xnorm \geq 1-\beta$ when $\bar\calC_{\phi_\beta}(-\xnorm, \eta) < \bar\calC_{\phi_\beta}(\gamma, \eta)$.
Since $\bar\calC_{\phi_\beta}(\alpha, \eta)$ is nonincreasing in $\alpha \geq 0$ (part~\ref{lem:ccr-qccv:ccr-decreasing} of Lemma~\ref{lem:ccr-qccv}),
we know $\calC_{\phi_\beta}^*(\eta, x) = \bar\calC_{\phi_\beta}(\xnorm, \eta)$ is maximized at $\xnorm = \gamma+\rho$,
which implies
\begin{align*}
  \delta_\rho(\epsilon)
  &= \inf_{\eta \in \left(\frac{1}{2}, 1\right]} \inf_{x \in \calX: \xnorm \geq \gamma + \rho} \bar\delta(\epsilon, \eta, x) \\
  &= \inf_{\eta \in \left(\frac{1}{2}, 1\right]: \epsilon \leq \eta} \frac{1+\gamma+\rho-\beta}{2}\left(\eta-\frac{1}{2}\right) \\
  &= \frac{1+\gamma+\rho-\beta}{2}\left[\epsilon - \frac{1}{2}\right]_+.
\end{align*}

\header{(B) When $1 - \gamma \leq \beta < 1 + \gamma$}
\begin{align*}
  \bar\calC_{\phi_\beta}(-\xnorm, \eta) &= \eta + \frac{1 - \xnorm + \beta}{2}(1 - \eta), \\
  \bar\calC_{\phi_\beta}(\gamma, \eta) &= \frac{1-\gamma+\beta}{2}\eta + (1 - \eta), \\
  \bar\calC_{\phi_\beta}(-\gamma, \eta) &= \eta + \frac{1-\gamma+\beta}{2}(1 - \eta),
\end{align*}
from which it follows that $\bar\calC_{\phi_\beta}(-\gamma, \eta) - \bar\calC_{\phi_\beta}(\gamma, \eta) = \frac{1+\gamma-\beta}{2}(2\eta - 1) > 0$,
that is, $\bar\calC_{\phi_\beta}(-\gamma, \eta) > \bar\calC_{\phi_\beta}(\gamma, \eta)$ for all $\eta > \frac{1}{2}$.
In addition, since
\begin{align*}
  \bar\calC_{\phi_\beta}(\gamma, \eta) - \bar\calC_{\phi_\beta}(-\xnorm, \eta)
  = -\frac{2-2\beta+\gamma+\xnorm}{2}\left(\eta - \eta_0(x)\right),
  && \left(\eta_0(x) \define \frac{1}{1+\frac{1+\gamma-\beta}{1+\xnorm-\beta}} \right)
\end{align*}
and $2-2\beta+\gamma+\xnorm>0$,
we have $\bar\calC_{\phi_\beta}(\gamma, \eta) > \bar\calC_{\phi_\beta}(-\xnorm, \eta)$ if $\frac{1}{2} < \eta < \eta_0(x)$,
and $\bar\calC_{\phi_\beta}(\gamma, \eta) \leq \bar\calC_{\phi_\beta}(-\xnorm, \eta)$ if $\eta_0(x) \leq \eta$.
Note that $\frac{1}{2} < \eta_0(x) < 1$.
\begin{itemize}
  \item \underline{If $\frac{1}{2} < \eta < \eta_0(x)$:}
  By part~\ref{lem:ccr-qccv:ccr-inf} in Lemma~\ref{lem:ccr-qccv},
  it follows that
  \begin{align*}
    \inf_{f \in \calFlin: |f(x)| \leq \gamma} \calC_{\phi_\beta}(f, \eta, x)
    &= \inf_{|\alpha| \leq \gamma} \bar\calC_{\phi_\beta}(\alpha, x)
    = \bar\calC_{\phi_\beta}(\gamma, \eta), \\
    \inf_{f \in \calFlin: f(x) \leq \gamma} \calC_{\phi_\beta}(f, \eta, x)
    &= \inf_{\alpha \in [-\xnorm, \gamma]} \bar\calC_{\phi_\beta}(\alpha, x)
    = \bar\calC_{\phi_\beta}(-\xnorm, \eta).
  \end{align*}
  Thus, by Lemma~\ref{lem:mer-calibration-function},
  \begin{align*}
    \bar\delta(\epsilon, \eta, x)
    &= \begin{cases}
      \infty & \text{if $\eta < \epsilon$}, \\
      \bar\calC_{\phi_\beta}(\gamma, \eta) - \bar\calC_{\phi_\beta}(\xnorm, \eta)
      = \frac{\xnorm-\gamma}{2}\eta & \text{if $\epsilon \leq \eta < \frac{1+\epsilon}{2}$}, \\
      \bar\calC_{\phi_\beta}(-\xnorm, \eta) - \bar\calC_{\phi_\beta}(\xnorm, \eta) \\
      \quad = (1 + \xnorm - \beta)\left(\eta - \frac{1}{2}\right) & \text{if $\frac{1+\epsilon}{2} \leq \eta$},
    \end{cases}
  \end{align*}
  and
  \begin{align*}
    \inf_{\eta \in \left(\frac{1}{2}, \eta_0(x)\right)} \bar\delta(\epsilon, \eta, x)
    &= \min\left\{
      \frac{\xnorm - \gamma}{2}\max\left\{\epsilon, \frac{1}{2}\right\},
      (1 + \xnorm - \beta)\epsilon
    \right\} \\
    &= \begin{cases}
      (1 + \xnorm - \beta)\epsilon & \text{if $0 < \epsilon \leq \epsilon_0(x)$,} \\
      \frac{\xnorm - \gamma}{4} & \text{if $\epsilon_0(x) < \epsilon \leq \frac{1}{2}$,} \\
      \frac{\xnorm - \gamma}{2}\epsilon & \text{if $\frac{1}{2} < \epsilon$},
    \end{cases}
  \end{align*}
  where $\epsilon_0(x) \define \frac{\xnorm - \gamma}{4(1 + \xnorm - \beta)}$.
  Finally, by taking the infimum over $x \in \tcalX_\rho$,
  \begin{align*}
    \inf_{x \in \tcalX_\rho} \inf_{\eta \in \left(\frac{1}{2}, \eta_0(x)\right)} \bar\delta(\epsilon, \eta, x)
    &= \begin{cases}
      (1 + \gamma + \rho - \beta)\epsilon & \text{if $0 < \epsilon \leq \epsilon_0$,} \\
      \frac{\rho}{4} & \text{if $\epsilon_0 < \epsilon \leq \frac{1}{2}$,} \\
      \frac{\rho}{2}\epsilon & \text{if $\frac{1}{2}< \epsilon$},
    \end{cases}
  \end{align*}
  where $\epsilon_0 \define \frac{\rho}{4(1 + \gamma + \rho - \beta)}$.

  \item \underline{If $\eta_0(x) \leq \eta \leq 1$:}
  By part~\ref{lem:ccr-qccv:ccr-inf} in Lemma~\ref{lem:ccr-qccv},
  it follows that
  \begin{align*}
    \inf_{f \in \calFlin: |f(x)| \leq \gamma} \calC_{\phi_\beta}(f, \eta, x)
    &= \inf_{|\alpha| \leq \gamma} \bar\calC_{\phi_\beta}(\alpha, \eta)
    = \bar\calC_{\phi_\beta}(\gamma, \eta), \\
    \inf_{f \in \calFlin: f(x) \leq \gamma} \calC_{\phi_\beta}(f, \eta, x)
    &= \inf_{\alpha \in [-\xnorm, \gamma]} \bar\calC_{\phi_\beta}(\alpha, \eta)
    = \bar\calC_{\phi_\beta}(\gamma, \eta)
  \end{align*}
  Thus,
  \begin{align*}
    \bar\delta(\epsilon, \eta, x)
    = \begin{cases}
      \infty & \text{if $\eta < \epsilon$},
      \\
      \bar\calC_{\phi_\beta}(\gamma, \eta) - \bar\calC_{\phi_\beta}(\xnorm, \eta) = \frac{\xnorm - \gamma}{2}\eta
      & \text{if $\epsilon \leq \eta$},
    \end{cases}
  \end{align*}
  and
  \begin{align*}
    \inf_{\eta \in [\eta_0(x), 1]} \bar\delta(\epsilon, \eta, x)
    &= \frac{\xnorm - \gamma}{2}\max\left\{\eta_0(x), \epsilon\right\}.
  \end{align*}
  By taking the infimum over $x \in \tcalX_\rho$,
  \begin{align*}
    \inf_{x \in \tcalX_\rho} \inf_{\eta \in [\eta_0(x), 1]} \bar\delta(\epsilon, \eta, x)
    &= \begin{cases}
      \frac{\rho}{2}\epsilon_1 & \text{if $0 < \epsilon \leq \epsilon_1$,} \\
      \frac{\rho}{2}\epsilon & \text{if $\epsilon_1 < \epsilon$,}
    \end{cases}
  \end{align*}
  where $\epsilon_1 \define \eta_0(\gamma + \rho) = \frac{1+\gamma+\rho-\beta}{2+2\gamma-2\beta+\rho}$.
\end{itemize}
Finally, we obtain the calibration function by combining the above cases as follows.
\begin{align*}
  \delta_\rho(\epsilon) = \begin{cases}
    (1+\gamma+\rho-\beta)\epsilon & \text{if $0 < \epsilon \leq \epsilon_0$,} \\
    \frac{\rho}{4} & \text{if $\epsilon_0 < \epsilon \leq \frac{1}{2}$,} \\
    \frac{\rho}{2}\epsilon & \text{if $\frac{1}{2} < \epsilon$.}
  \end{cases}
\end{align*}

\header{(C) When $1 + \gamma \leq \beta < 2$}
It is easy to see
\begin{align*}
  \inf_{f \in \calFlin: |f(x)| \leq \gamma} \calC_{\phi_\beta}(f, \eta, x)
  &= \inf_{|\alpha| \leq \gamma} \bar\calC_{\phi_\beta}(\alpha, \eta) = 1,
  \\
  \inf_{f \in \calFlin: f(x) \leq \gamma} \calC_{\phi_\beta}(\alpha, \eta)
  &= \inf_{\alpha \in [-\xnorm, \gamma]} \bar\calC_{\phi_\beta}(\alpha, \eta) = \bar\calC_{\phi_\beta}(-\xnorm, \eta) \\
  &= \begin{cases}
    1 & \text{if $\gamma < \xnorm \leq -1+\beta$}, \\
    \eta + \frac{1-\xnorm+\beta}{2}(1-\eta) & \text{if $-1+\beta < \xnorm \leq 1$},
  \end{cases}
  \\
  \calC_{\phi_\beta,\calFlin}^*(\eta, x) &= \bar\calC_{\phi_\beta}(\xnorm, \eta) \\
  &= \begin{cases}
    1 & \text{if $\gamma < \xnorm \leq -1+\beta$}, \\
    \frac{1-\xnorm+\beta}{2}\eta + (1-\eta) & \text{if $-1+\beta < \xnorm \leq 1$}.
  \end{cases}
\end{align*}
Hence, by part~\ref{lem:ccr-qccv:ccr-inf} in Lemma~\ref{lem:ccr-qccv},
it follows that
\begin{align*}
  \bar\delta(\epsilon, \eta, x)
  &= \begin{cases}
    \infty & \text{if $\eta < \epsilon$}, \\
    1 - \bar\calC_{\phi_\beta}(\xnorm, \eta) & \text{if $\epsilon \leq \eta < \frac{1 + \epsilon}{2}$}, \\
    \bar\calC_{\phi_\beta}(-\xnorm, \eta) - \bar\calC_{\phi_\beta}(\xnorm, \eta) & \text{if $\frac{1 + \epsilon}{2} \leq \eta$}.
  \end{cases} \\
  &= \begin{cases}
    \infty & \text{if $\eta < \epsilon$}, \\
    0 & \text{if $\epsilon \leq \eta < \frac{1+\epsilon}{2}$ and $\gamma < \xnorm \leq -1+\beta$}, \\
    \frac{1+\xnorm-\beta}{2}\eta & \text{if $\epsilon \leq \eta < \frac{1+\epsilon}{2}$ and $-1+\beta < \xnorm \leq 1$}, \\
    0 & \text{if $\frac{1+\epsilon}{2} \leq \eta$ and $\gamma < \xnorm \leq -1+\beta$}, \\
    (1+\xnorm-\beta)\left(\eta-\frac{1}{2}\right) & \text{if $\frac{1+\epsilon}{2} \leq \eta$ and $-1+\beta < \xnorm \leq 1$}.
\end{cases}
\end{align*}
Thus, Lemma~\ref{lem:mer-calibration-function} implies $\delta_\rho(\epsilon) = \inf_{\eta \in \left(\frac{1}{2}, 1\right]} \inf_{\xnorm \in [\gamma + \rho, 1]} \bar\delta(\epsilon, \eta, x) = 0$
when $\gamma + \rho \leq -1 + \beta \iff 1 + \gamma + \rho \leq \beta$,
by setting $\xnorm \leq -1 + \beta$ and arbitrary $\eta$.
When $1 + \gamma + \rho > \beta$, $\delta_\rho(\epsilon) = \frac{1+\gamma+\rho-\beta}{2}\epsilon$
by setting $\xnorm = \gamma + \rho$.

\header{(D) When $2 \leq \beta$}
In this case, $\bar\calC_{\phi_\beta}(\alpha, \eta) = 1$ for all $\eta \in [0, 1]$ and $\alpha \in [-1, 1]$.
Hence, $\Delta\calC_{\phi_\beta,\calFlin}(f, \eta, x) = 0$ and $\delta_\rho(\epsilon) = 0$.

To sum up, the $(\phi_\gamma,\calFlin)$-calibration function and its Fenchel-Legendre biconjugate
of the ramp loss is as follows:
\begin{itemize}
  \item If $0 \leq \beta < 1 - \gamma$,
  $\delta_\rho(\epsilon) = \delta_\rho^{**}(\epsilon) = \frac{1+\gamma+\rho-\beta}{2}\left[\epsilon-\frac{1}{2}\right]_+$.

  \item If $1 - \gamma \leq \beta < 1 + \gamma$,
  \begin{align*}
    \delta_\rho(\epsilon) = \begin{cases}
      (1+\gamma+\rho-\beta)\epsilon & \text{if $0 < \epsilon \leq \epsilon_0$,} \\
      \frac{\rho}{4} & \text{if $\epsilon_0 < \epsilon \leq \frac{1}{2}$,} \\
      \frac{\rho}{2}\epsilon & \text{if $\frac{1}{2} < \epsilon$,}
    \end{cases}
    \quad \text{and} \quad
    \delta_\rho^{**}(\epsilon) = \frac{\rho}{2}\epsilon.
  \end{align*}

  \item If $1 + \gamma \leq \beta < 1 + \gamma + \rho$,
  $\delta_\rho(\epsilon) = \delta_\rho^{**}(\epsilon) = \frac{1+\gamma+\rho-\beta}{2}\epsilon$.

  \item If $1 + \gamma + \rho \leq \beta$,
  $\delta_\rho(\epsilon) = \delta_\rho^{**}(\epsilon) = 0$.
\end{itemize}
We see that the ramp loss is $(\phi_\rho,\calFlin)$-calibrated when $1 - \gamma \leq \beta < 1 + \gamma$.

\subsection{Sigmoid Loss}
\label{sec:calibration-function:sigmoid}

\begin{figure}[t]
  \centering
  \scriptsize
  \def\ylim{1.1}
  \def\vareta{0.55}
  \def\vargamma{0.3}
  \def\varbeta{0.5}
  \includegraphics{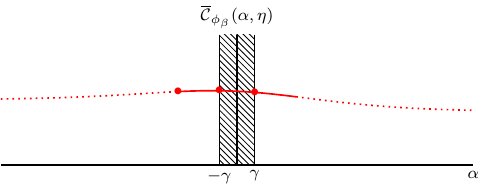}
  \caption{The class-conditional risk of the sigmoid loss.}
  \label{fig:ccr-sigmoid-loss}
\end{figure}

The sigmoid loss is $\phi(\alpha) = \frac{1}{1 + e^{\alpha}}$.
We consider the shifted sigmoid loss: $\phi_{\beta}(\alpha) = \frac{1}{1 + e^{\alpha - \beta}}$ for $\beta \geq 0$.
$\phi_{\beta}$-CCR is
\begin{align*}
  \calC_{\phi_\beta}(f, \eta, x)
  &= \bar\calC_{\phi_\beta}(f(x), \eta) \\
  &= \frac{\eta}{1 + e^{f(x) - \beta}} + \frac{1 - \eta}{1 + e^{-f(x) - \beta}}.
\end{align*}
$\bar\calC_{\phi_\beta}$ is plotted in Figure~\ref{fig:ccr-sigmoid-loss},
from which we can see $\bar\calC_{\phi_\beta}$ is quasiconcave when $\beta \geq 0$.

\subsubsection{Minimal Inner Risk}

By part~\ref{lem:ccr-qccv:ccr-inf} of Lemma~\ref{lem:ccr-qccv}, it is easy to check
\begin{align*}
  \calC_{\phi_\beta,\calFlin}^*(\eta, x)
  &= \inf_{f \in \calFlin} \calC_{\phi_\beta}(f, \eta, x) \\
  &= \inf_{\alpha \in [-\xnorm, \xnorm]} \bar\calC_{\phi_\beta}(\alpha, \eta) \\
  &= \min\{\bar\calC_{\phi_\beta}(-\xnorm, \eta), \bar\calC_{\phi_\beta}(\xnorm, \eta)\} \\
  &= \min\left\{
    \frac{\eta}{1+e^{\xnorm-\beta}} + \frac{1-\eta}{1+e^{-\xnorm-\beta}},
    \frac{\eta}{1+e^{-\xnorm-\beta}} + \frac{1-\eta}{1+e^{\xnorm-\beta}}
  \right\}.
\end{align*}

\subsubsection{Calibration Function}

We focus on the case $\eta > \frac{1}{2}$ due to the symmetry of $\calC_{\phi_\beta}$.
The minimal inner risk is
\begin{align*}
  \calC_{\phi_\beta,\calFlin}^*(\eta, x)
  = \bar\calC_{\phi_\beta}(\xnorm, \eta)
  = \frac{\eta}{1+e^{\xnorm-\beta}} + \frac{1-\eta}{1+e^{-\xnorm-\beta}}.
\end{align*}
We compute $\calC_{\phi_\beta}(f, \eta, x)$.
Since
\begin{align*}
  \bar\calC_{\phi_\beta}(-\gamma, \eta) - \bar\calC_{\phi_\beta}(\gamma, \eta)
  &= \left(\frac{\eta}{1 + e^{-\gamma-\beta}} + \frac{1 - \eta}{1 + e^{\gamma-\beta}}\right) - \left(\frac{\eta}{1 + e^{\gamma-\beta}} + \frac{1 - \eta}{1 + e^{-\gamma-\beta}}\right) \\
  &= (2\eta - 1)\left(\frac{1}{1 + e^{-\gamma-\beta}} - \frac{1}{1 + e^{\gamma-\beta}}\right) \\
  &> 0, \qquad \text{(since $-\gamma-\beta < \gamma-\beta$)}
\end{align*}
we have $\bar\calC_{\phi_\beta}(\gamma, \eta) < \bar\calC_{\phi_\beta}(-\gamma, \eta)$ for all $\eta > \frac{1}{2}$,
implying that $\inf_{|\alpha| \leq \gamma} \bar\calC_{\phi_\beta}(\alpha, \eta) = \bar\calC_{\phi_\beta}(\gamma, \eta)$
and $\inf_{f: |f(x)| \leq \gamma} \calC_{\phi_\beta}(f, \eta, x) = \bar\calC_{\phi_\beta}(\gamma, \eta)$.
Note that there exists $f \in \calFlin$ such that $f(x) = \gamma$ since we assume $\|x\|_2 > \gamma$.
On the other hand, we divide cases to compute $\inf_{f: f(x) \leq \gamma} \calC_{\phi_\beta}(f, \eta, x) = \inf_{\alpha \in [-\xnorm, \gamma]} \bar\calC_{\phi_\beta}(\alpha, \eta)$.
By part~\ref{lem:ccr-qccv:ccr-inf} of Lemma~\ref{lem:ccr-qccv},
\begin{align*}
  \inf_{f \in \calFlin: f(x) \leq \gamma} \calC_{\phi_\beta}(f, \eta, x)
  = \begin{cases}
    \bar\calC_{\phi_\beta}(-\xnorm, \eta) & \text{if $\bar\calC_{\phi_\beta}(-\xnorm) < \bar\calC_{\phi_\beta}(\gamma, \eta)$}, \\
    \bar\calC_{\phi_\beta}(\gamma, \eta) & \text{if $\bar\calC_{\phi_\beta}(-\xnorm) \geq \bar\calC_{\phi_\beta}(\gamma, \eta)$}.
  \end{cases}
\end{align*}
Thus, by Lemma~\ref{lem:mer-calibration-function},
we can compute $\delta$ by evaluating $\bar\delta$
by dividing the cases regarding $\eta$ and $x$.
If $\bar\calC_{\phi_\beta}(-\xnorm) < \bar\calC_{\phi_\beta}(\gamma, \eta)$ and $\eta \geq \frac{1+\epsilon}{2}$,
\begin{align*}
  \bar\delta(\epsilon, \eta, x)
  &= \bar\calC_{\phi_\beta}(-\xnorm, \eta) - \bar\calC_{\phi_\beta}(\xnorm, \eta) \\
  &= (2\eta-1)\left\{\phi_\beta(-\xnorm) - \phi_\beta(\xnorm)\right\}
  ,
\end{align*}
which is minimized at $\eta = \frac{1+\epsilon}{2}$ and $\xnorm = \gamma + \rho$
because $\phi_\beta(-\xnorm) - \phi_\beta(\xnorm) > 0$ is increasing in $\xnorm$
and the constraint
\begin{align*}
  \left\{\bar\calC_{\phi_\beta}(-\xnorm) < \bar\calC_{\phi_\beta}(\gamma, \eta)\right\} \wedge \left\{\eta \geq \frac{1+\epsilon}{2}\right\}
  \iff F(\xnorm) := \frac{\phi_\beta(-\gamma) - \phi_\beta(\xnorm)}{\phi_\beta(-\xnorm) - \phi_\beta(\gamma)} < \frac{1+\epsilon}{1-\epsilon}
\end{align*}
is always satisfied for any $\epsilon > 0$ and $x$ such that $\xnorm > \gamma$.
Note that $F(\xnorm)$ is increasing in $\xnorm$ thereby maximized at $\xnorm = 1$,
where $F(1) < 1 < \frac{1+\epsilon}{1-\epsilon}$.
Under the choice of the minimizers,
\begin{align*}
  \bar\delta\left(\epsilon, \frac{1+\epsilon}{2}, x\right)
  = \epsilon\left\{\phi_\beta(-\gamma-\rho) - \phi_\beta(\gamma+\rho)\right\}
  \define A_0\epsilon,
\end{align*}
where $A_0 \define \phi_\beta(-\gamma-\rho) - \phi_\beta(\gamma+\rho)$.

If $\bar\calC_{\phi_\beta}(-\xnorm) \geq \bar\calC_{\phi_\beta}(\gamma, \eta)$ and $\epsilon \leq \eta$,
we have $F(\xnorm) \geq \frac{\epsilon}{2-\epsilon}$.
This constraint with $\xnorm \geq \gamma + \rho$ is always satisfied
because $F$ is increasing and $F(\xnorm) > F(\gamma) = 1 \geq \frac{\epsilon}{2-\epsilon}$ for all $\epsilon \in (0, 1)$.
Consequently,
\begin{align*}
  \bar\delta(\epsilon, \eta, x)
  &= \bar\calC_{\phi_\beta}(\gamma, \eta) - \bar\calC_{\phi_\beta}(\xnorm, \eta) \\
  &= \left\{\phi_\beta(\gamma) -\phi_\beta(-\gamma) - \phi_\beta(\xnorm) + \phi_\beta(-\xnorm)\right\}\eta + \phi_\beta(-\gamma) - \phi_\beta(-\xnorm),
\end{align*}
which is minimized at $\eta = \max\left\{\epsilon, \frac{1}{2}\right\}$ and $\xnorm = \gamma + \rho$
because it is nondecreasing in both $\eta$ and $x$.
by noting that $-\bar\calC_{\phi_\beta}(\cdot, \eta)$ is nondecreasing (part~\ref{lem:ccr-qccv:ccr-decreasing} of Lemma~\ref{lem:ccr-qccv}).
Under the choice of the minimizers,
\begin{align*}
  \bar\delta\left(\epsilon, \max\left\{\epsilon, \frac{1}{2}\right\}, \gamma + \rho\right)
  &= A_1\left[\epsilon - \frac{1}{2}\right]_+ + \delta_0,
\end{align*}
where
\begin{align*}
  A_1 &\define \phi_\beta(\gamma) - \phi_\beta(-\gamma) - \phi_\beta(\gamma + \rho) + \phi_\beta(-\gamma - \rho), \\
  \delta_0 &\define \frac{\phi_\beta(\gamma) + \phi_\beta(-\gamma) - \phi_\beta(\gamma + \rho) - \phi_\beta(-\gamma - \rho)}{2}.
\end{align*}
Note that $\delta_0 > 0$ because $\phi_\beta(\alpha) + \phi_\beta(-\alpha)$ is nonincreasing in $\alpha \geq 0$
(part~\ref{lem:ccr-qccv:even-decreasing} of Lemma~\ref{lem:ccr-qccv}).

If $\bar\calC_{\phi_\beta}(-\xnorm) < \bar\calC_{\phi_\beta}(\gamma, \eta)$ and $\epsilon \leq \eta < \frac{1+\epsilon}{2}$,
which is equivalent to $\frac{1 + \epsilon}{1 - \epsilon} < F(\xnorm) \leq \frac{\epsilon}{2-\epsilon}$,
this is never satisfied because $\frac{1+\epsilon}{1-\epsilon} \geq \frac{\epsilon}{2-\epsilon}$ for all $\epsilon \in (0, 1)$.

By combining these cases, we have
\begin{align*}
  \delta_\rho(\epsilon)
  &= \inf_{\eta \in \left[\frac{1}{2}, 1\right]} \inf_{\xnorm > \gamma + \rho} \bar\delta(\epsilon, \eta, x)
  = \begin{cases}
    A_0\epsilon & \text{if $0 < \epsilon \leq \epsilon_0$}, \\
    \delta_0 & \text{if $\epsilon_0 < \epsilon \leq \frac{1}{2}$}, \\
    A_1\left(\epsilon-\frac{1}{2}\right) + \delta_0 & \text{if $\epsilon > \frac{1}{2}$},
  \end{cases} \\
  \delta_\rho^{**}(\epsilon) &= \begin{cases}
    A_0\epsilon & \text{if $0 < \epsilon \leq \frac{1}{2}$,} \\
    A_1\left(\epsilon-\frac{1}{2}\right) + \delta_0 & \text{if $\frac{1}{2} < \epsilon$,}
  \end{cases}
\end{align*}
where $\epsilon_0 \define \frac{\delta_0}{A_0}$.

Thus, the sigmoid loss is calibrated wrt $(\phi_\gamma,\calFlin)$ when $\delta_0 > 0$.
This always holds as long as $\beta > 0$.

\subsection{Modified Squared Loss}
\label{sec:calibration-function:modified-squared}

We design a bounded and nonincreasing surrogate loss by modifying the squared loss,
which we call modified squared loss here:
\begin{align}
  \phi(\alpha) = \begin{cases}
    1 & \text{if $\alpha \leq 0$}, \\
    (1 - \alpha)^2 & \text{if $0 < \alpha \leq 1$}, \\
    0 & \text{if $1 < \alpha$},
  \end{cases}
  \nonumber
\end{align}
and consider the shifted version $\phi_\beta(\alpha) \define \phi(\alpha - \beta)$:
\begin{align}
  \phi_\beta(\alpha) = \begin{cases}
    1 & \text{if $\alpha \leq \beta$}, \\
    (1 - \alpha + \beta)^2 & \text{if $\beta < \alpha \leq 1 + \beta$}, \\
    0 & \text{if $1 + \beta < \alpha$}.
  \end{cases}
  \nonumber
\end{align}
$\bar\calC_{\phi_\beta}$ is plotted in Figure~\ref{fig:ccr-modified-squared-loss},
from which we can see $\bar\calC_{\phi_\beta}$ is quasiconcave when $\beta \geq 0$.

\subsubsection{Calibration Function}

Now we consider $\phi_\beta$-CCR $\calC_{\phi_\beta}(f, \eta, x) = \bar\calC_{\phi_\beta}(\alpha, \eta) = \eta\phi(\alpha) + (1 - \eta)\phi(-\alpha)$,
where $\alpha = f(x)$,
and focus on the case $\eta > \frac{1}{2}$ due to the symmetry of $\bar\calC_{\phi_\beta}$ in $\eta$
(part~\ref{lem:ccr-property:eta-symmetry} of Lemma~\ref{lem:ccr-property}).
By part~\ref{lem:ccr-qccv:ccr-inf} of Lemma~\ref{lem:ccr-qccv},
it is easy to see
\begin{align*}
  \calC_{\phi_\beta,\calFlin}^*(\eta, x)
  = \min\{\bar\calC_{\phi_\beta}(-\xnorm, \eta), \bar\calC_{\phi_\beta}(\xnorm, \eta)\}
  = \bar\calC_{\phi_\beta}(\xnorm, \eta).
\end{align*}
We divide into three cases depending on the relationship among
$\bar\calC_{\phi_\beta}(-\xnorm, \eta)$, $\bar\calC_{\phi_\beta}(-\gamma, \eta)$, and $\bar\calC_{\phi_\beta}(\xnorm, \eta)$,

\header{(A) When $0 \leq \beta < \gamma$}
Since
\begin{align*}
  \bar\calC_{\phi_\beta}(-\gamma, \eta) - \bar\calC_{\phi_\beta}(\gamma, \eta)
  &= \left\{\eta\cdot 1 + (1 - \eta)(1 - \gamma + \beta)^2\right\} - \left\{\eta(1 - \gamma + \beta)^2 + (1 - \eta)\cdot 1\right\} \\
  &= (2\eta - 1)(\gamma - \beta)\left\{2 - (\gamma - \beta)\right\} \\
  &\geq 0,
\end{align*}
we have $\bar\calC_{\phi_\beta}(\gamma, \eta) < \bar\calC_{\phi_\beta}(-\gamma, \eta)$ for all $\eta > \frac{1}{2}$.
On the other hand, since
\begin{align*}
  \bar\calC_{\phi_\beta}(\gamma, \eta) - \bar\calC_{\phi_\beta}(-\xnorm, \eta)
  &= -\left\{(1-(1-\gamma+\beta)^2)+(1-(1-\xnorm+\beta)^2)\right\}(\eta - \eta_0(x)) \\
  & \text{where} \quad
  \eta_0(x) \define \frac{1-(1-\xnorm+\beta)^2}{(1-(1-\gamma+\beta)^2)+(1-(1-\xnorm+\beta)^2)}
\end{align*}
and $\frac{1}{2} < \eta_0(x) < 1$,
we have $\bar\calC_{\phi_\beta}(\gamma, \eta) \geq \bar\calC_{\phi_\beta}(-\xnorm, \eta)$ if $\frac{1}{2} < \eta \leq \eta_0(x)$
and $\bar\calC_{\phi_\beta}(\gamma, \eta) < \bar\calC_{\phi_\beta}(-\xnorm, \eta)$ if $\eta > \eta_0(x)$.
\begin{itemize}
  \item \underline{If $\frac{1}{2} < \eta \leq \eta_0(x)$:}
  By part~\ref{lem:ccr-qccv:ccr-inf} in Lemma~\ref{lem:ccr-qccv},
  \begin{align*}
    \inf_{f \in \calFlin: |f(x)| \leq \gamma \text{ or } (2\eta-1)f(x) \leq 0} \calC_{\phi_\beta}(f, \eta, x)
    &= \inf_{\alpha \in [-\xnorm, \gamma]} \bar\calC_{\phi_\beta}(\alpha, \eta)
    = \bar\calC_{\phi_\beta}(-\xnorm, \eta), \\
    \inf_{f \in \calFlin: |f(x)| \leq \gamma} \calC_{\phi_\beta}(f, \eta, x)
    &= \inf_{|\alpha| \leq \gamma} \bar\calC_{\phi_\beta}(\alpha, \eta)
    = \bar\calC_{\phi_\beta}(\gamma, \eta).
  \end{align*}
  Thus, by Lemma~\ref{lem:mer-calibration-function},
  \begin{align*}
    \bar\delta(\epsilon, \eta, x) = \begin{cases}
      \infty & \text{if $\eta < \epsilon$}, \\
      \bar\calC_{\phi_\beta}(\gamma, \eta) - \bar\calC_{\phi_\beta}(\xnorm, \eta) \\
      \quad = (\xnorm-\gamma)(2+2\beta-\gamma-\xnorm)\eta & \text{if $\epsilon \leq \eta < \frac{1 + \epsilon}{2}$}, \\
      \bar\calC_{\phi_\beta}(-\xnorm, \eta) - \bar\calC_{\phi_\beta}(\xnorm, \eta) \\
      \quad = (\xnorm-\beta)(2-\xnorm+\beta)(2\eta - 1) & \text{if $\frac{1 + \epsilon}{2} \leq \eta$}.
    \end{cases}
  \end{align*}
  Hence we obtain
  \begin{align*}
    \inf_{\eta \in \left(\frac{1}{2}, \eta_0(x)\right]} \inf_{\xnorm \geq \gamma + \rho} \bar\delta(\epsilon, \eta, x)
    = \begin{cases}
      A_0\epsilon & \text{if $0 < \epsilon \leq \epsilon_0$}, \\
      \delta_0 & \text{if $\epsilon_0 < \epsilon \leq \frac{1}{2}$}, \\
      A_1\epsilon & \text{if $\frac{1}{2} < \epsilon$},
    \end{cases}
  \end{align*}
  where
  $A_0 \define (\gamma+\rho-\beta)(2+\beta-\gamma-\rho)$,
  $A_1 \define \rho(2+2\beta-2\gamma-\rho)$,
  $\delta_0 \define \frac{A_1}{2}$, and
  $\epsilon_0 \define \frac{\delta_0}{A_0}$.
  Note that the second case would not degenerate ($\delta_0 > 0$).

  \item \underline{If $\eta_0(x) < \eta \leq 1$:}
  By part~\ref{lem:ccr-qccv:ccr-inf} in Lemma~\ref{lem:ccr-qccv},
  it follows that
  \begin{align*}
    \inf_{f \in \calFlin: |f(x)| \leq \gamma \text{ or } (2\eta-1)f(x) \leq 0} \calC_{\phi_\beta}(f, \eta, x)
    &= \inf_{\alpha \in [-\xnorm, \gamma]} \bar\calC_{\phi_\beta}(\alpha, \eta)
    = \bar\calC_{\phi_\beta}(\gamma, \eta), \\
    \inf_{f \in \calFlin: |f(x)| \leq \gamma} \calC_{\phi_\beta}(f, \eta, x)
    &= \inf_{|\alpha| \leq \gamma} \bar\calC_{\phi_\beta}(\alpha, \eta)
    = \bar\calC_{\phi_\beta}(\gamma, \eta).
  \end{align*}
  Thus, by Lemma~\ref{lem:mer-calibration-function},
  \begin{align*}
    \bar\delta(\epsilon, \eta, x) = \begin{cases}
      \infty & \text{if $\eta < \epsilon$,} \\
      \bar\calC_{\phi_\beta}(\gamma, \eta) - \bar\calC_{\phi_\beta}(\xnorm, \eta)
      = (\xnorm-\gamma)(2+2\beta-\gamma-\xnorm)\eta
      & \text{if $\epsilon \leq \eta$}.
    \end{cases}
  \end{align*}
  Hence we obtain
  \begin{align*}
    \inf_{\eta \in \left(\eta_0(x), 1\right]} \inf_{\xnorm \geq \gamma + \rho} \bar\delta(\epsilon, \eta)
    = \begin{cases}
      \eta_0(\gamma+\rho)\epsilon & \text{if $ 0 < \epsilon \leq \eta_0(\gamma+\rho)$}, \\
      A_1\epsilon & \text{if $\eta_0(\gamma+\rho) < \epsilon$}.
    \end{cases}
  \end{align*}
\end{itemize}
Note that $\eta_0(\gamma+\rho) > \frac{1}{2}$.
Combining the above, we obtain the $(\phi_\gamma,\calFlin)$-calibration function from Lemma~\ref{lem:mer-calibration-function}:
\begin{align*}
  \delta_\rho(\epsilon)
  = \begin{cases}
    A_0\epsilon & \text{if $0 < \epsilon \leq \epsilon_0$}, \\
    \delta_0 & \text{if $\epsilon_0 < \epsilon \leq \frac{1}{2}$}, \\
    A_1\epsilon & \text{if $\frac{1}{2} < \epsilon$},
  \end{cases}
\end{align*}
where
$A_0 = (\gamma+\rho-\beta)(2+\beta-\gamma-\rho)$,
$A_1 = \rho(2+2\beta-2\gamma-\rho)$,
$\delta_0 = \frac{A_1}{2}$, and
$\epsilon_0 = \frac{\delta_0}{A_0}$.

\header{(B) When $\gamma \leq \beta < 1$}
It is easy to see
\begin{align*}
  \inf_{f \in \calFlin: |f(x)| \leq \gamma} \calC_{\phi_\beta}(f, \eta, x)
  &= \inf_{|\alpha| \leq \gamma} \bar\calC_{\phi_\beta}(\alpha, \eta)
  = 1, \\
  \inf_{f \in \calFlin: |f(x)| \leq \gamma \text{ or } (2\eta-1)f(x) \leq 0} \calC_{\phi_\beta}(f, \eta, x)
  &= \inf_{\alpha \in [-\xnorm, \gamma]} \bar\calC_{\phi_\beta}(\alpha, \eta)
  = \bar\calC_{\phi_\beta}(-\xnorm, \eta).
\end{align*}
Hence, by noting that $\bar\calC_{\phi_\beta}(\xnorm, \eta) = \bar\calC_{\phi_\beta}(-\xnorm, \eta) = 1$ for $\xnorm \leq \beta$,
it follows that
\begin{align*}
  \bar\delta(\epsilon, \eta, x) = \begin{cases}
    \infty & \text{if $\eta < \epsilon$}, \\
    1 - \bar\calC_{\phi_\beta}(\xnorm, \eta)
    = (\xnorm - \beta)(2 - \xnorm + \beta)\eta
    & \text{if $\epsilon \leq \eta < \frac{1 + \epsilon}{2}$ and $\xnorm > \beta$}, \\
    \bar\calC_{\phi_\beta}(-\xnorm, \eta) - \bar\calC_{\phi_\beta}(\xnorm, \eta) \\
    \qquad = (\xnorm - \beta)(2 - \xnorm + \beta)(2\eta-1)
    & \text{if $\frac{1 + \epsilon}{2} \leq \eta$ and $\xnorm > \beta$} \\
    0 & \text{if $\epsilon \leq \eta$ and $\xnorm \leq \beta$}.
  \end{cases}
\end{align*}
Thus, by Lemma~\ref{lem:mer-calibration-function},
we have $\delta_\rho(\epsilon) = 0$ when $\beta \geq \gamma + \rho$,
and
\begin{align*}
  \delta_\rho(\epsilon)
  = \inf_{\eta \in \left(\frac{1}{2}, 1\right]} \inf_{\xnorm \geq \gamma + \rho} \bar\delta(\epsilon, \eta, x)
  &= A_0\epsilon
\end{align*}
when $\beta < \gamma + \rho$,
where $A_0 = (\gamma + \rho - \beta)(2 + \beta - \gamma - \rho)$.

\header{(C) When $1 \leq \beta$}
In this case, $\bar\calC_{\phi_\beta}(\alpha, \eta) = 1$ for all $\alpha \in [-1, 1]$.
Hence, $\Delta\calC_{\phi_\beta,\calFlin}(f, \eta, x) = 0$ and $\delta_\rho(\epsilon) = 0$.

To sum up, the $(\phi_\gamma,\calFlin)$-calibration function and its Fenchel-Legendre biconjugate
of the modified squared loss are as follows:
\begin{itemize}
  \item If $0 \leq \beta < \gamma$,
  \begin{align*}
    \delta_\rho(\epsilon)
    = \begin{cases}
      A_0\epsilon & \text{if $0 < \epsilon \leq \epsilon_0$}, \\
      \delta_0 & \text{if $\epsilon_0 < \epsilon \leq \frac{1}{2}$}, \\
      A_1\epsilon & \text{if $\frac{1}{2} < \epsilon$},
    \end{cases}
    \quad \text{and} \quad
    \delta_\rho^{**}(\epsilon) = A_1\epsilon,
  \end{align*}
  where
  $A_0 = (\gamma+\rho-\beta)(2+\beta-\gamma-\rho)$,
  $A_1 = \rho(2+2\beta-2\gamma-\rho)$,
  $\delta_0 = \frac{A_1}{2}$, and
  $\epsilon_0 = \frac{\delta_0}{A_0}$.

  \item If $\gamma \leq \beta < \gamma + \rho$, $\delta_\rho(\epsilon) = \delta_\rho^{**}(\epsilon) = A_0\epsilon$.

  \item If $\gamma + \rho \leq \beta$, $\delta(\epsilon) = \delta^{**}(\epsilon) = 0$.
\end{itemize}
We deduce that the modified squared loss is calibrated wrt $(\phi_\gamma,\calFlin)$ if $0 \leq \beta \leq \gamma$.

\begin{figure}[t]
  \centering
  \scriptsize
  \def\ylim{1.1}
  \def\vareta{0.75}
  \def\vargamma{0.3}
  \def\changepoints{{-1-\varbeta},{-\varbeta},{\varbeta},{1+\varbeta}}
  \subfigure[$0 \leq \beta < \gamma$][c]{
    \def\varbeta{0.2}
    \includegraphics{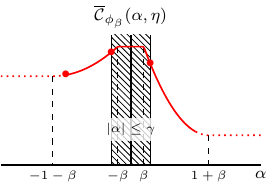}
  } \hfill
  \subfigure[$\gamma \leq \beta < 1$][c]{
    \def\varbeta{0.5}
    \includegraphics{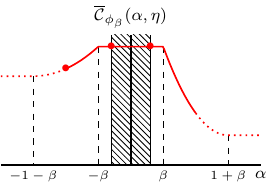}
  } \hfill
  \subfigure[$1 \leq \beta$][c]{
    \def\varbeta{1.2}
    \includegraphics{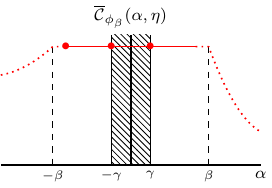}
  }
  \caption{The class-conditional risk of the modified squared loss.}
  \label{fig:ccr-modified-squared-loss}
\end{figure}

\subsubsection{When \texorpdfstring{$\beta < 0$}{beta < 0}}

\begin{figure}[t]
  \centering
  \scriptsize
  \def\ylim{0.9}
  \def\vargamma{0.3}
  \def\changepoints{{-1-\varbeta},{-\varbeta},{\varbeta},{1+\varbeta}}
  \def\varbeta{-0.2}
  \subfigure[$\eta = 0.7$][c]{
    \includegraphics{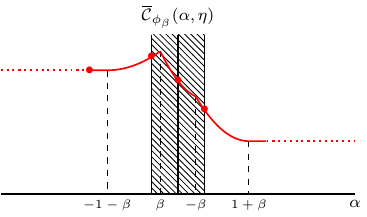}
  }
  \subfigure[$\eta = 0.5$][c]{
    \includegraphics{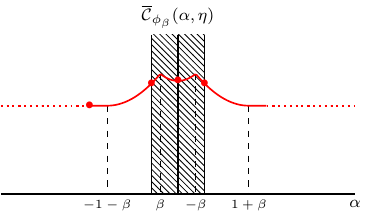}
  }
  \caption{The class-conditional risk of the modified squared loss when $\gamma < \frac{2}{5}$ and $-1 - \gamma + \sqrt{1+2\gamma^2} < \beta < 0$.}
  \label{fig:ccr-modified-squared-loss:non-qccv}
\end{figure}

In this case, the CCR of the modified squared loss is no longer quasiconcave (see Figure~\ref{fig:ccr-modified-squared-loss:non-qccv} (b)).
However, $\phi_\beta$ is still $(\phi_\gamma,\calFlin)$-calibrated
under some $\gamma$ and $\beta < 0$.
Here, we show an example.

Assume that $0 < \gamma < \frac{2}{5}$ and $(-\gamma<)-1-\gamma+\sqrt{1+2\gamma^2} < \beta < 0$.
We focus on $\eta > \frac{1}{2}$ due to the symmetry of $\bar\calC_{\phi_\beta}(\alpha, \eta)$ in $\eta$
(part~\ref{lem:ccr-property:eta-symmetry} of Lemma~\ref{lem:ccr-property}).
Since we still have $\eta_0(x) > \frac{1}{2}$.
we can confirm in the same way as the case (A) that
$\bar\calC_{\phi_\beta}(-\gamma, \eta) > \bar\calC_{\phi_\beta}(\gamma, \eta)$,
$\bar\calC_{\phi_\beta}(\gamma, \eta) \geq \bar\calC_{\phi_\beta}(-\xnorm, \eta)$ if $\frac{1}{2} < \eta \leq \eta_0(x)$,
and $\bar\calC_{\phi_\beta}(\gamma, \eta) < \bar\calC_{\phi_\beta}(-\xnorm, \eta)$ if $\eta_0(x) < \eta$.
In addition, we see that
\begin{align*}
  \bar\calC_{\phi_\beta}(-\xnorm, \eta) - \bar\calC_{\phi_\beta}(0, \eta)
  &= \left\{\eta + (1-\eta)(1-\xnorm+\beta)^2\right\} - (1+\beta)^2 \\
  &= \eta(1-(1-\xnorm+\beta^2)) -\xnorm(2+2\beta-\xnorm) \\
  &> \frac{1}{2}(1-(1-\xnorm+\beta^2)) - \xnorm(2+2\beta-\xnorm) \\
  &> \frac{1}{2}(1-(1-\xnorm)^2) - \xnorm(2-\xnorm) \\
  & \qquad \text{(nonincreasing in $-1-\gamma+\sqrt{1+2\gamma^2}<\beta<0$)} \\
  &= \frac{1}{2}\xnorm(\xnorm - 2) \\
  &< 0, \\
  \bar\calC_{\phi_\beta}(0, \eta) - \bar\calC_{\phi_\beta}(\gamma, \eta)
  &= (1+\beta)^2 - \left\{\eta(1-\gamma+\beta)^2+(1-\eta)\right\} \\
  &= (1+\beta)^2 - 1 + \eta(1-(1-\gamma+\beta)^2) \\
  &> (1+\beta)^2 - 1 + \frac{1}{2}(1-(1-\gamma+\beta)^2) \\
  &> (1+\beta)^2 - 1 + \frac{1}{2}(1-(1-\gamma)^2) \\
  &= (1+\beta)^2 + (1+\gamma)^2 - \frac{1}{2} \\
  &> 0, \\
  \bar\calC_{\phi_\beta}(\gamma) - \bar\calC_{\phi_\beta}(\xnorm)
  &> 0.
\end{align*}
Then, we have
$\bar\calC_{\phi_\beta}(0, \eta) > \bar\calC_{\phi_\beta}(-\xnorm, \eta)$ and
$\bar\calC_{\phi_\beta}(0, \eta) > \bar\calC_{\phi_\beta}(\gamma, \eta) > \bar\calC_{\phi_\beta}(\xnorm, \eta)$.
Figure~\ref{fig:ccr-modified-squared-loss:non-qccv} and the above comparisons give us
\begin{align*}
  \calC_{\phi_\beta,\calFlin}^*(\eta, x)
  &= \inf_{\alpha \in [-\xnorm, \xnorm]} \bar\calC_{\phi_\beta}(\alpha, \eta)
  = \calC_{\phi_\beta}(\xnorm, \eta), \\
  \inf_{f \in \calF: |f(x)| \leq \gamma} \calC_{\phi_\beta}(f, \eta, x)
  &= \inf_{|\alpha| \leq \gamma} \bar\calC_{\phi_\beta}(\alpha, \eta)
  = \bar\calC_{\phi_\beta}(\gamma, \eta), \\
  \inf_{f \in \calF: f(x) \leq \gamma} \calC_{\phi_\beta}(f, \eta, x)
  &= \inf_{\alpha \in [-\xnorm, \gamma]} \bar\calC_{\phi_\beta}(\alpha, \eta)
  = \min\{\bar\calC_{\phi_\beta}(-\xnorm, \eta), \bar\calC_{\phi_\beta}(\gamma, \eta)\}. \\
\end{align*}
By Lemma~\ref{lem:mer-calibration-function},
when $\epsilon \leq \eta < \frac{1+\epsilon}{2}$,
\begin{align*}
  \bar\delta(\epsilon, \eta, x)
  &= \inf_{f \in \calFlin: |f(x)| \leq \gamma} \Delta\calC_{\phi_\beta}(f, \eta, x) \\
  &= \bar\calC_{\phi_\beta}(\gamma, \eta) - \bar\calC_{\phi_\beta}(\xnorm, \eta) \\
  &= \left\{\eta(1-\gamma+\beta)^2+(1-\eta)\right\} - \left\{\eta(1-\xnorm+\beta)^2+(1-\eta)\right\} \\
  &= (\xnorm-\gamma)(2+2\beta-\gamma-\xnorm)\eta,
\end{align*}
and
\begin{align*}
  \inf_{\eta \in \left[\epsilon, \frac{1+\epsilon}{2}\right] \cap \left(\frac{1}{2}, 1\right]} \inf_{\xnorm \geq \gamma+\rho} \bar\delta(\epsilon, \eta, x)
  &= \inf_{\xnorm \geq \gamma+\rho} (\xnorm-\gamma)(2+2\beta-\gamma-\xnorm)\max\left\{\epsilon,\frac{1}{2}\right\} \\
  &= \begin{cases}
    \delta_0 & \text{if $0 < \epsilon \leq \frac{1}{2}$,} \\
    A_1\epsilon & \text{if $\frac{1}{2} < \epsilon$.}
  \end{cases}
\end{align*}
When $\frac{1+\epsilon}{2} \leq \eta$,
\begin{align*}
  \bar\delta(\epsilon, \eta)
  &= \inf_{f \in \calFlin: f(x) \leq \gamma} \Delta\calC_{\phi_\beta}(f, \eta, x) \\
  &= \min\left\{
    \bar\calC_{\phi_\beta}(-\xnorm, \eta) - \bar\calC_{\phi_\beta}(\xnorm, \eta),
    \bar\calC_{\phi_\beta}(\gamma, \eta) - \bar\calC_{\phi_\beta}(\xnorm, \eta)
  \right\} \\
  &= \min\left\{
    (1-(1-\xnorm+\beta))^2(2\eta-1),
    (\xnorm-\gamma)(2+2\beta-\gamma-\xnorm)\eta
  \right\},
\end{align*}
and
\begin{align*}
  & \inf_{\eta \in \left[\frac{1+\epsilon}{2}, 1\right] \cap \left(\frac{1}{2}, 1\right]} \inf_{\xnorm \geq \gamma+\rho} \bar\delta(\epsilon, \eta, x) \\
  &\quad = \inf_{\xnorm \geq \gamma+\rho} \min\left\{
    (1-(1-\xnorm+\beta)^2)\epsilon,
    (\xnorm-\gamma)(2+2\beta-\gamma-\xnorm)\frac{1+\epsilon}{2}
  \right\} \\
  &= \min\left\{
    A_0\epsilon,
    A_1\frac{1+\epsilon}{2}
  \right\}.
\end{align*}

Hence, $(\phi_\gamma,\calFlin)$-calibration function of $\phi_\beta$ is
\begin{align*}
  \delta_\rho(\epsilon)
  &= \begin{cases}
    A_0\epsilon & \text{if $0 < \epsilon \leq \epsilon_0$,} \\
    \delta_0 & \text{if $\epsilon_0 < \epsilon \leq \frac{1}{2}$,} \\
    A_1\epsilon & \text{if $\frac{1}{2} < \epsilon$,}
  \end{cases}
\end{align*}
where
$A_0 = (\gamma+\rho-\beta)(2+\beta-\gamma-\rho)$,
$A_1 = \rho(2+2\beta-2\gamma-\rho)$,
$\delta_0 = \frac{A_1}{2}$, and
$\epsilon_0 = \frac{\delta_0}{A_0}$.
We can see that the second case would not degenerate (i.e., $\delta_0 > 0$)
under the range $-1-\gamma+\sqrt{1+2\gamma^2} < \beta < 0$ and $0 < \gamma \leq \frac{2}{5}$.

\subsection{Hinge Loss}
\label{sec:calibration-function:hinge}

The $\phi_\beta$-CCR is $\calC_{\phi_\beta}(f, \eta, x) = \bar\calC_{\phi_\beta}(f(x), \eta)$, where
\begin{align*}
  \bar\calC_{\phi_\beta}(\alpha, \eta) = \begin{cases}
    -\eta\alpha + \eta(1 + \beta) & \text{if $\alpha < -(1 + \beta)$}, \\
    (1 - 2\eta)\alpha + (1 + \beta) & \text{if $-(1 + \beta) \leq \alpha < 1 + \beta$}, \\
    (1 - \eta)\alpha + (1 - \eta)(1 + \beta) & \text{if $1 + \beta < \alpha$}.
  \end{cases}
\end{align*}

\subsubsection{Minimal Inner Risk}

When $\eta > \frac{1}{2}$, $\bar\calC_{\phi_\beta}(\alpha, \eta)$ is minimized at $\alpha = \xnorm$,
and when $\eta \leq \frac{1}{2}$, $\bar\calC_{\phi_\beta}(\alpha, \eta)$ is minimized at $\alpha = -\xnorm$.
Hence,
\begin{align*}
  \calC_{\phi_\beta,\calFlin}^*(\eta, x)
  &= \inf_{\alpha \in [-\xnorm, \xnorm]} \bar\calC_{\phi_\beta}(\alpha, \eta) \\
  &= \begin{cases}
    \bar\calC_{\phi_\beta}(\xnorm, \eta) & \text{if $\eta > \frac{1}{2}$} \\
    \bar\calC_{\phi_\beta}(-\xnorm, \eta) & \text{if $\eta \leq \frac{1}{2}$}
  \end{cases} \\
  &= -|1 - 2\eta|\cdot\xnorm + 1 + \beta.
\end{align*}

\subsubsection{Calibration Function}

We restrict the range of $\eta$ to $\eta > \frac{1}{2}$
by virtue of part~\ref{lem:ccr-property:eta-symmetry} of Lemma~\ref{lem:ccr-property}.
Then, $\calC_{\phi_\beta,\calFlin}^*(\eta, x) = \bar\calC_{\phi_\beta}(\xnorm, \eta)$.
$\bar\calC_{\phi_\beta}(\alpha, \eta)$ is plotted in Figure~\ref{fig:ccr-hinge-loss} in case of $\eta > \frac{1}{2}$.
From the figure, we can see that
\begin{align*}
  \inf_{f \in \calFlin: f(x) \leq \gamma} \calC_{\phi_\beta}(f, \eta, x)
  &= \inf_{\substack{\alpha \in [-\xnorm, \xnorm]: |\alpha| \leq \gamma \text{ or } \\ (2\eta - 1)\alpha \leq 0}} \bar\calC_{\phi_\beta}(\alpha, \eta) \\
  &= \bar\calC_{\phi_\beta}(\gamma, \eta) \\
  &= \inf_{|\alpha| \leq \gamma} \bar\calC_{\phi_\beta}(\alpha, \eta) \\
  &= \inf_{f \in \calFlin: |f(x)| \leq \gamma} \calC_{\phi_\beta}(f, \eta, x),
\end{align*}
by noting that $\xnorm > \gamma$ is assumed.
Hence, by Lemma~\ref{lem:mer-calibration-function},
\begin{align*}
  \bar\delta(\epsilon, \eta, x) = \begin{cases}
    \infty & \text{if $\xnorm \leq \gamma$ or $\eta < \epsilon$}, \\
    \bar\calC_{\phi_\beta}(\gamma, \eta) - \bar\calC_{\phi_\beta,\calFlin}^*(\xnorm, \eta) = (\xnorm-\gamma)(2\eta-1) & \text{if $\xnorm > \gamma$ and $\epsilon \leq \eta$},
  \end{cases}
\end{align*}
for $\eta > \frac{1}{2}$,
and
\begin{align*}
  \delta_\rho(\epsilon)
  = \inf_{\eta \in \left[\frac{1}{2}, 1\right]} \inf_{x \in \tcalX_\rho} \bar\delta(\epsilon, \eta, x)
  = \begin{cases}
    0 & \text{if $0 < \epsilon \leq \frac{1}{2}$}, \\
    2\rho\left(\epsilon - \frac{1}{2}\right) & \text{if $\frac{1}{2} < \epsilon$},
  \end{cases}
\end{align*}
and $\delta_\rho^{**}(\epsilon) = \delta_\rho(\epsilon)$.

\begin{figure}[t]
  \begin{minipage}[c]{0.49\columnwidth}
    \centering
    \scriptsize
    \def\ylim{2.5}
    \def\vareta{0.75}
    \def\vargamma{0.3}
    \def\varbeta{0.3}
    \includegraphics{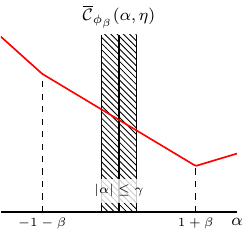}
    \caption{The class-conditional risk of the hinge loss.}
    \label{fig:ccr-hinge-loss}
  \end{minipage}
  \hfill
  \begin{minipage}[c]{0.49\columnwidth}
    \centering
    \scriptsize
    \def\ylim{3.1}
    \def\vareta{0.75}
    \def\vargamma{0.3}
    \def\varbeta{0.2}
    \includegraphics{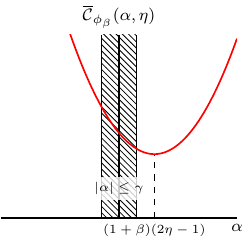}
    \caption{The class-conditional risk of the squared loss.}
    \label{fig:ccr-squared-loss}
  \end{minipage}
\end{figure}

\subsection{Squared Loss}
\label{sec:calibration-function:squared}

The $\phi_\beta$-CCR is $\calC_{\phi_\beta}(f, \eta, x) = \bar\calC_{\phi_\beta}(f(x), \eta)$, where
\begin{align*}
  \bar\calC_{\phi_\beta}(\alpha, \eta)
  &= \eta(1 - \alpha + \beta)^2 + (1 - \eta)(1 + \alpha + \beta)^2 \nonumber \\
  &= \{\alpha - (1 + \beta)(2\eta - 1)\}^2 + 4(1 + \beta)^2\eta(1 - \eta).
\end{align*}
Let $\alpha_* \define (1+\beta)(2\eta-1)$.

\subsubsection{Minimal Inner Risk}

When $\eta > \frac{1}{2}$, $\bar\calC_{\phi_\beta}(\alpha, \eta)$ is minimized at $\alpha = \xnorm$ if $\xnorm \geq \alpha_*$, and at $\alpha = \alpha_*$ if $\xnorm < \alpha_*$.
When $\eta \leq \frac{1}{2}$, $\bar\calC_{\phi_\beta}(\alpha, \eta$) is minimized at $\alpha = -\xnorm$ if $\xnorm \leq -\alpha_*$, and at $\alpha = \alpha_*$ if $\xnorm > -\alpha_*$.
Hence,
\begin{align*}
  \calC_{\phi_\beta,\calFlin}^*(\eta, x)
  &= \inf_{\alpha \in [-\xnorm, \xnorm]} \bar\calC_{\phi_\beta}(\alpha, \eta) \\
  &= \begin{cases}
    \bar\calC_{\phi_\beta}(\xnorm, \eta) & \text{if $\eta > \frac{1}{2}$ and $\xnorm \geq \alpha_*$}, \\
    \bar\calC_{\phi_\beta}(-\xnorm, \eta) & \text{if $\eta \leq \frac{1}{2}$ and $\xnorm \leq -\alpha_*$}, \\
    \bar\calC_{\phi_\beta}(\alpha_*, \eta) & \text{otherwise}.
  \end{cases}
\end{align*}

\subsubsection{Calibration Function}

We restrict the range of $\eta$ to $\eta > \frac{1}{2}$
by virtue of part~\ref{lem:ccr-property:eta-symmetry} of Lemma~\ref{lem:ccr-property}.
$\bar\calC_{\phi_\beta}(\alpha, \eta)$ is plotted in Figure~\ref{fig:ccr-squared-loss} in case of $\eta > \frac{1}{2}$.
By comparing $\alpha_*$ and $\xnorm$, we have
\begin{align*}
  \calC_{\phi_\beta,\calFlin}^*(\eta, x) = \begin{cases}
    \bar\calC_{\phi_\beta}(\alpha_*, \eta) & \text{if $\alpha_* < \xnorm$}, \\
    \bar\calC_{\phi_\beta}(\xnorm, \eta) & \text{if $\alpha_* \geq \xnorm$}.
  \end{cases}
\end{align*}
From the figure, we can see that
\begin{align*}
  \inf_{f \in \calFlin: f(x) \leq \gamma} \calC_{\phi_\beta}(f, \eta, x)
  &= \inf_{\substack{\alpha \in [-\xnorm, \xnorm]: |\alpha| \leq \gamma \text{ or } \\ (2\eta - 1)\alpha \leq 0}} \bar\calC_{\phi_\beta}(\alpha, \eta) \\
  &= \inf_{|\alpha| \leq \gamma} \bar\calC_{\phi_\beta}(\alpha, \eta) \\
  &= \inf_{f \in \calFlin} \calC_{\phi_\beta}(f, \eta, x) \\
  &= \begin{cases}
    \bar\calC_{\phi_\beta}(\alpha_*, \eta) & \text{if $\gamma > \alpha_*$}, \\
    \bar\calC_{\phi_\beta}(\gamma, \eta) & \text{if $\gamma \leq \alpha_*$},
  \end{cases}
\end{align*}
by noting that $\xnorm > \gamma$ is assumed.
Hence, by Lemma~\ref{lem:mer-calibration-function},
\begin{align*}
  \bar\delta(\epsilon, \eta, x)
  &= \begin{cases}
    \infty & \text{if $\xnorm \leq \gamma$ or $\eta < \epsilon$}, \\
    \bar\calC_{\phi_\beta}(\alpha_*, \eta) - \bar\calC_{\phi_\beta}(\alpha_*, \eta) & \text{if $\xnorm > \gamma$ and $\epsilon \leq \eta$ and $\alpha_* < \gamma$}, \\
    \bar\calC_{\phi_\beta}(\gamma, \eta) - \bar\calC_{\phi_\beta}(\alpha_*, \eta) & \text{if $\xnorm > \gamma$ and $\epsilon \leq \eta$ and $\gamma \leq \alpha_* < \xnorm$}, \\
    \bar\calC_{\phi_\beta}(\gamma, \eta) - \bar\calC_{\phi_\beta}(\xnorm, \eta) & \text{if $\xnorm > \gamma$ and $\epsilon \leq \eta$ and $\xnorm \leq \alpha_*$}. \\
  \end{cases}
\end{align*}
By taking the infimum over $\eta$ and $x$,
\begin{align*}
  \delta_\rho(\epsilon)
  = \inf_{\eta \in \left[\frac{1}{2}, 1\right]} \inf_{x \in \tcalX_\rho} \bar\delta(\epsilon, \eta, x)
  = \begin{cases}
    0 & \text{if $0 < \epsilon \leq \epsilon_0$,} \\
    4(1+\beta)^2(\epsilon-\epsilon_0)^2 & \text{if $\epsilon_0 < \epsilon$,}
  \end{cases}
\end{align*}
where $\epsilon_0 \define \frac{1+\beta+\gamma}{2(1+\beta)}$,
and $\delta_\rho^{**}(\epsilon) = \delta_\rho(\epsilon)$.

\section{Simulation Results}
\label{sec:simulation-results}

\subsection{Detail of Numerical Approximation of Bayes Risks}
\label{sec:numerical-approximation-of-bayes-risk}

In order to compute the Bayes $(\phi,\calFlin)$-risk for a loss $\phi$,
we substitute the Bayes $(\phi_\gamma,\calFlin)$-classifier $f_\gamma^* \in \calFlin$ into
\begin{align*}
  R_\phi(f_\gamma^*)
  &= \E[\phi(Yf_\gamma^*(X))] \\
  &= \int_X \Big\{ \phi(f_\gamma^*(X)) \P(Y=+1|X) + \phi(-f_\gamma^*(X)) \P(Y=-1|X) \Big\} \d\P(X)
\end{align*}
and apply numerical integration.
The partitioning quadrature method was used with grid size $\num{0.05}$.
The Bayes $(\phi_\gamma,\calFlin)$-classifier is $f_\gamma^*(x) = (x_1 + x_2)/\sqrt{2}$ for both twonorm and advnorm datasets.

To perform numerical integration, $\P(Y=+1|X)$ needs to be estimated.
Note that $\P(X)$ can be estimated given $\P(Y=+1|X)$.
For the advnorm dataset, we estimate $\P(X|Y=+1)$ with kernel density estimator and then compute $\P(Y=+1|X)$ and $\P(X)$.
Subsequently, we focus on the twonorm dataset and derive the closed-form expression of $\P(Y=+1|x)$.
Let $q_+$ and $q_-$ be probability density functions of $\calN([0.3\;0.3]^\top, 0.1^2I_2)$ and $\calN([-0.3\;-0.3], 0.1^2I_2)$, respectively.
Then,
\begin{align*}
  \P(Y=+1|X)
  \!=\! \frac{\P(Y=+1) \P(X|Y=+1)}{\P(Y=+1) \P(X|Y=+1) + \P(Y=-1) \P(X|Y=-1)}
  \!=\! \frac{\frac{1}{2}q_+(X)}{\frac{1}{2}q_+(X) + \frac{1}{2}q_-(X)}.
\end{align*}
The approximated Bayes risks are listed in Table~\ref{tab:approximated-bayes-risk}.

\begin{table}[t]
  \footnotesize
  \centering
  \caption{
    The approximated Bayes risks.
    For advnorm, we used kernel density estimator with RBF kernel (bandwidth: $\num{0.25}$) to estimate $\P(X|Y=+1)$.
  }
  \label{tab:approximated-bayes-risk}
  \begin{tabular}{lll}
    \toprule
    Loss & twonorm & advnorm \\
    \midrule
    Robust 0-1 & $\num{0.012}$ & $\num{0.067}$ \\
    Ramp       & $\num{0.389}$ & $\num{0.550}$ \\
    Sigmoid    & $\num{0.445}$ & $\num{0.525}$ \\
    Hinge      & $\num{0.778}$ & $\num{1.100}$ \\
    Logistic   & $\num{0.590}$ & $\num{0.750}$ \\
    \bottomrule
  \end{tabular}
\end{table}

\subsection{Full Simulation Results of Benchmark Dataset}
\label{sec:simulation-results-benchmark}

We show the full simulation results of MNIST in Tables~\ref{tab:full-target} and \ref{tab:full-0/1-loss}.
Simulation details are as follows.
\begin{itemize}
  \item Dataset: MNIST extracted with two digits ($\num{7000}$ instances for each digit).
  \item Preprocessing: Reduced to $2$-dimension with the principal component analysis.
  \item Train-test split: $\num{14000}$ instances are randomly split into training and test data with the ratio $4$ to $1$.
  \item Model: Linear models $f(x) = \theta^\top x + \theta_0$ ($\theta$ and $\theta_0$ are learnable parameters)
  \item Surrogate loss: The ramp, sigmoid, hinge, and logistic losses with shift $\beta = +0.5$.
  \item Target loss: the $\gamma$-adversarially robust 0-1 loss with $\gamma = 0.1$.
  \item Optimization: Batch gradient descent with $\num{1000}$ iterations.
\end{itemize}

\begin{table}[h]
\centering
\footnotesize
\caption{
  The simulation results of the $\gamma$-adversarially robust 0-1 loss with $\gamma = 0.1$ and $\beta = 0.5$.
  50 trials are conducted for each pair of a method and dataset.
  Standard errors (multiplied by $10^4$) are shown in parentheses.
  Bold-faces indicate outperforming methods, chosen by one-sided t-test with the significant level 5\%.
}
\label{tab:full-target}
\begin{tabular}{lllll}
  \toprule
  {} &                 Ramp &             Sigmoid &                Hinge &    Logistic \\
  \midrule
  0 vs 1 &            0.034 (3) &  \textbf{0.017 (2)} &           0.087 (12) &  0.321 (19) \\
  0 vs 2 &   \textbf{0.111 (7)} &          0.133 (10) &   \textbf{0.109 (8)} &  0.281 (19) \\
  0 vs 3 &   \textbf{0.107 (7)} &           0.126 (8) &            0.120 (9) &  0.307 (18) \\
  0 vs 4 &   \textbf{0.069 (6)} &          0.093 (12) &            0.072 (7) &  0.269 (21) \\
  0 vs 5 &  \textbf{0.233 (21)} &          0.340 (25) &  \textbf{0.233 (21)} &  0.269 (16) \\
  0 vs 6 &   \textbf{0.129 (8)} &          0.167 (13) &   \textbf{0.127 (8)} &  0.287 (22) \\
  0 vs 7 &   \textbf{0.067 (6)} &           0.073 (6) &            0.090 (9) &  0.302 (18) \\
  0 vs 8 &   \textbf{0.096 (7)} &          0.123 (12) &            0.100 (9) &  0.263 (20) \\
  0 vs 9 &   \textbf{0.082 (6)} &           0.101 (8) &            0.092 (8) &  0.279 (22) \\
  \bottomrule
\end{tabular}
\end{table}

\begin{table}[h]
\centering
\footnotesize
\caption{
  The simulation results of the 0-1 loss with $\beta = 0.5$.
  50 trials are conducted for each pair of a method and dataset.
  Standard errors (multiplied by $10^4$) are shown in parentheses.
  Bold-faces indicate outperforming methods, chosen by one-sided t-test with the significant level 5\%.
}
\label{tab:full-0/1-loss}
\begin{tabular}{lllll}
  \toprule
  {} &                 Ramp &             Sigmoid &                Hinge &    Logistic \\
  \midrule
  0 vs 1 &            0.012 (2) &  \textbf{0.005 (1)} &            0.038 (7) &  0.228 (18) \\
  0 vs 2 &   \textbf{0.050 (5)} &           0.059 (7) &            0.058 (7) &  0.206 (18) \\
  0 vs 3 &   \textbf{0.047 (4)} &           0.054 (6) &            0.064 (8) &  0.229 (15) \\
  0 vs 4 &   \textbf{0.028 (4)} &  \textbf{0.029 (4)} &            0.032 (6) &  0.184 (18) \\
  0 vs 5 &  \textbf{0.117 (11)} &          0.185 (20) &  \textbf{0.117 (11)} &  0.193 (15) \\
  0 vs 6 &   \textbf{0.060 (5)} &           0.080 (8) &            0.063 (6) &  0.206 (18) \\
  0 vs 7 &   \textbf{0.027 (3)} &  \textbf{0.027 (4)} &            0.045 (6) &  0.214 (18) \\
  0 vs 8 &   \textbf{0.050 (6)} &           0.054 (6) &            0.054 (7) &  0.186 (18) \\
  0 vs 9 &   \textbf{0.040 (4)} &           0.044 (5) &            0.046 (6) &  0.192 (20) \\
  \bottomrule
\end{tabular}
\end{table}

\section{Additional Plots}
\label{sec:additional-plots}

In this section, we put additional plots of the counterexample in Section~\ref{sec:calibrated-surrogate}.
The class-conditional risks of
\begin{align*}
  \phi(\alpha) = \frac{e^{-\alpha^2} + 1}{2} + \mathrm{clip}_{[-5,5]}(-\alpha) + \num{4.5}
\end{align*}
are plotted in Figure~\ref{fig:counterexample}.
From this figure, it is easy to see that $\calC_{\phi}(\cdot, \num{0.6})$ is not quasiconcave
while its even part $\calC_{\phi}(\cdot, \num{0.5})$ is quasiconcave.

\begin{figure}[h]
  \centering
  \includegraphics{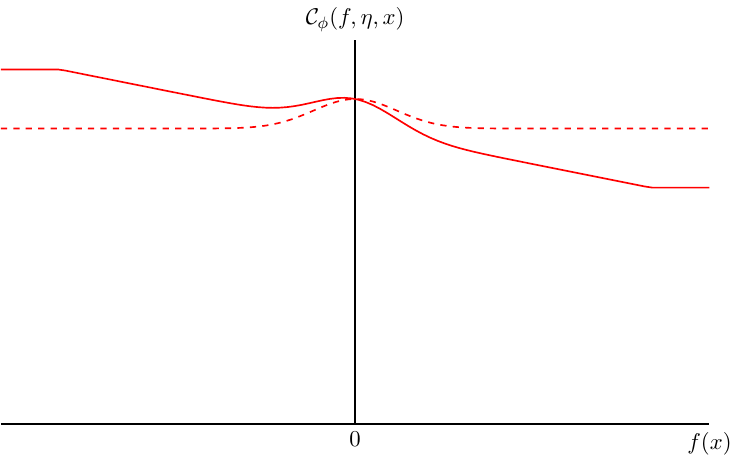}
  \caption{
    The class-conditional risks of $\phi(\alpha) = \frac{e^{-\alpha^2}+1}{2} + \mathrm{clip}_{[-5,5]}(-\alpha) + \num{4.5}$.
    The solid line is $\calC_{\phi}(\cdot, \num{0.6})$ and the dashed line is $\calC_{\phi}(\cdot, \num{0.5})$.
  }
  \label{fig:counterexample}
\end{figure}

\end{document}